\providecommand{\tabularnewline}{\\}
\theoremstyle{definition}
\newtheorem{defn}{\protect\definitionname}
\theoremstyle{plain}
\newtheorem{assumption}{\protect\assumptionname}
\theoremstyle{plain}
\newtheorem{thm}{\protect\theoremname}
\theoremstyle{plain}
\newtheorem{prop}{\protect\propositionname}
\theoremstyle{remark}
\newtheorem{rem}{\protect\remarkname}
\theoremstyle{plain}
\newtheorem{lem}{\protect\lemmaname}
\theoremstyle{plain}
\newtheorem{fact}{\protect\factname}
\renewcommand{\Pr}{{\mathbb{P}}}
\newcommand{\E}{{\mathbb{E}}}
\newcommand{\R}{{\mathbb{R}}}
\newcommand{\Ncal}{{\mathcal{N}}}
\let\hat\widehat
\let\tilde\widetilde
\definecolor{yxc}{RGB}{255,0,0}
\definecolor{yjc}{RGB}{125,0,0}
\definecolor{cm}{RGB}{0,0,200}
\definecolor{ytw}{RGB}{0,0,200}
\definecolor{redel}{RGB}{200,200,200}                   
\definecolor{readd}{RGB}{200,0,0}                    
\theoremstyle{plain} 
\newtheorem{itclaim}{\textbf{Claim}}
\newcommand{\bA}{\bm{A}}
\newcommand{\bB}{\bm{B}}
\newcommand{\bC}{\bm{C}}
\newcommand{\bD}{\bm{D}}
\newcommand{\bE}{\bm{E}}
\newcommand{\bI}{\bm{I}}
\newcommand{\bL}{\bm{L}}
\newcommand{\bM}{\bm{M}}
\newcommand{\bO}{\bm{O}}
\newcommand{\bR}{\bm{R}}
\newcommand{\bS}{\bm{S}}
\newcommand{\bT}{\bm{T}}
\newcommand{\bU}{\bm{U}}
\newcommand{\bV}{\bm{V}}
\newcommand{\bW}{\bm{W}}
\newcommand{\bX}{\bm{X}}
\newcommand{\bY}{\bm{Y}}
\newcommand{\bZ}{\bm{Z}}
\newcommand{\ba}{\bm{a}}
\newcommand{\be}{\bm{e}}
\newcommand{\bx}{\bm{x}}
\newcommand{\bz}{\bm{z}}
\providecommand{\assumptionname}{Assumption}
\providecommand{\definitionname}{Definition}
\providecommand{\factname}{Fact}
\providecommand{\lemmaname}{Lemma}
\providecommand{\propositionname}{Proposition}
\providecommand{\remarkname}{Remark}
\providecommand{\theoremname}{Theorem}
\begin{document}
\title{Learning Mixtures of Low-Rank Models\footnotetext{Corresponding author: Yuxin Chen (email: \texttt{yuxin.chen@princeton.edu}).}}
\author{Yanxi Chen\thanks{Department of Electrical Engineering, Princeton University, Princeton,
NJ 08544, USA; email: \texttt{\{yanxic,poor,yuxin.chen\}@princeton.edu}.} \and Cong~Ma\footnote{Department of Electrical Engineering and Computer Sciences, University of California Berkeley, Berkeley, CA 94720, USA; email: {\tt congm@berkeley.edu}}\and
H.~Vincent Poor\footnotemark[1] \and Yuxin Chen\footnotemark[1]}
\date{\today}

\maketitle
\global\long\def\poly{\mathsf{poly}}%
\global\long\def\plog{\mathsf{polylog}}%
\global\long\def\Frm{{\rm F}}%
\global\long\def\Tr{\mathsf{Tr}}%

\global\long\def\rank{\mathsf{rank}}%
\global\long\def\spn{\mathsf{span}}%
\global\long\def\col{\mathsf{col}}%
\global\long\def\row{\mathsf{row}}%
\global\long\def\vc{\mathsf{vec}}%
\global\long\def\mat{\mathsf{mat}}%
\global\long\def\SNR{\mathsf{SNR}}%

\global\long\def\Pr{\mathbb{P}}%
\global\long\def\E{\mathbb{E}}%
\global\long\def\R{\mathbb{R}}%
\global\long\def\Ncal{\mathcal{N}}%
\global\long\def\Dcal{\mathcal{D}}%

\global\long\def\ind{\mathbbm{1}}%
\global\long\def\indtilde{\chi}%

\global\long\def\quantalpha{Q_{\alpha}}%
\global\long\def\iid{\overset{\mathsf{i.i.d.}}{\sim}}%

\global\long\def\Monestar{{\boldsymbol{M}_{1}^{\star}}}%
\global\long\def\Mtwostar{{\boldsymbol{M}_{2}^{\star}}}%
\global\long\def\Mstar{\boldsymbol{M}^{\star}}%
\global\long\def\Mkstar{{\boldsymbol{M}_{k}^{\star}}}%
\global\long\def\MKstar{{\boldsymbol{M}_{K}^{\star}}}%

\global\long\def\Mistar{\bM_{i}^{\star}}%
\global\long\def\Mjstar{\bM_{j}^{\star}}%
\global\long\def\Mlstar{\bM_{l}^{\star}}%

\global\long\def\Uonestar{{\boldsymbol{U}_{1}^{\star}}}%
\global\long\def\Utwostar{{\boldsymbol{U}_{2}^{\star}}}%
\global\long\def\Uhat{{\bU}}%
\global\long\def\Ustar{{\boldsymbol{U}^{\star}}}%
\global\long\def\Ukstar{{\boldsymbol{U}_{k}^{\star}}}%
\global\long\def\UKstar{{\boldsymbol{U}_{K}^{\star}}}%

\global\long\def\Sigonestar{{\boldsymbol{\Sigma}_{1}^{\star}}}%
\global\long\def\Sigtwostar{{\boldsymbol{\Sigma}_{2}^{\star}}}%
\global\long\def\Sigkstar{{\boldsymbol{\Sigma}_{k}^{\star}}}%
\global\long\def\SigKstar{{\boldsymbol{\Sigma}_{K}^{\star}}}%

\global\long\def\Vonestar{{\boldsymbol{V}_{1}^{\star}}}%
\global\long\def\Vtwostar{{\boldsymbol{V}_{2}^{\star}}}%
\global\long\def\Vhat{{\bV}}%
\global\long\def\Vstar{{\boldsymbol{V}^{\star}}}%
\global\long\def\Vkstar{{\boldsymbol{V}_{k}^{\star}}}%
\global\long\def\VKstar{{\boldsymbol{V}_{K}^{\star}}}%

\global\long\def\Uistar{\bU_{i}^{\star}}%
\global\long\def\Ujstar{\bU_{j}^{\star}}%
\global\long\def\Ulstar{\bU_{l}^{\star}}%
\global\long\def\Vistar{\bV_{i}^{\star}}%
\global\long\def\Vjstar{\bV_{j}^{\star}}%
\global\long\def\Vlstar{\bV_{l}^{\star}}%

\global\long\def\UL{\bU_{L}}%
\global\long\def\SigL{\boldsymbol{\Sigma}_{L}}%
\global\long\def\VL{\bV_{L}}%
\global\long\def\UR{\bU_{R}}%
\global\long\def\SigR{\boldsymbol{\Sigma}_{R}}%
\global\long\def\VR{\bV_{R}}%

\global\long\def\Sigistar{\boldsymbol{\Sigma}_{i}^{\star}}%
\global\long\def\Sigjstar{\boldsymbol{\Sigma}_{j}^{\star}}%
\global\long\def\Siglstar{\boldsymbol{\Sigma}_{l}^{\star}}%

\global\long\def\Sonestar{{\boldsymbol{S}_{1}^{\star}}}%
\global\long\def\Stwostar{{\boldsymbol{S}_{2}^{\star}}}%
\global\long\def\Sonehat{{\widehat{\bS}_{1}}}%
\global\long\def\Stwohat{{\widehat{\bS}_{2}}}%

\global\long\def\Skstar{{\boldsymbol{S}_{k}^{\star}}}%
\global\long\def\SKstar{{\boldsymbol{S}_{K}^{\star}}}%
\global\long\def\Skhat{{\widehat{\bS}_{k}}}%
\global\long\def\SKhat{{\widehat{\bS}_{K}}}%

\global\long\def\Sihat{{\widehat{\bS}_{i}}}%
\global\long\def\Sjhat{{\widehat{\bS}_{j}}}%
\global\long\def\Slhat{{\widehat{\bS}_{l}}}%
\global\long\def\Sistar{{\boldsymbol{S}_{i}^{\star}}}%
\global\long\def\Sjstar{{\boldsymbol{S}_{j}^{\star}}}%
\global\long\def\Slstar{{\boldsymbol{S}_{l}^{\star}}}%

\global\long\def\Sp{{\bS^{+}}}%
\global\long\def\Stildep{{{\widetilde{\bS}}^{+}}}%
\global\long\def\Stildepone{{{\widetilde{S}}_{1}^{+}}}%
\global\long\def\Stildepperp{{{\widetilde{\bS}}_{\perp}^{+}}}%

\global\long\def\Sperp{{\bS_{\perp}}}%
\global\long\def\Sbarest{\overline{\boldsymbol{S}}_{{\rm est}}}%

\global\long\def\betaonehat{\widehat{\bbeta}_{1}}%
\global\long\def\betaKhat{\widehat{\bbeta}_{K}}%
\global\long\def\betakhat{\widehat{\bbeta}_{k}}%
\global\long\def\betatwohat{\widehat{\bbeta}_{2}}%

\global\long\def\betalhat{\widehat{\bbeta}_{l}}%
\global\long\def\betaihat{\widehat{\bbeta}_{i}}%
\global\long\def\betajhat{\widehat{\bbeta}_{j}}%

\global\long\def\betaonestar{\boldsymbol{\beta}_{1}^{\star}}%
\global\long\def\betaKstar{\boldsymbol{\beta}_{K}^{\star}}%
\global\long\def\betakstar{\boldsymbol{\beta}_{k}^{\star}}%
\global\long\def\betatwostar{\boldsymbol{\beta}_{2}^{\star}}%
\global\long\def\betabarest{\overline{\bbeta}_{{\rm est}}}%
\global\long\def\bbeta{\boldsymbol{\beta}}%

\global\long\def\Lonestar{{\boldsymbol{L}_{1}^{\star}}}%
\global\long\def\Ltwostar{{\boldsymbol{L}_{2}^{\star}}}%
\global\long\def\Lkstar{{\boldsymbol{L}_{k}^{\star}}}%
\global\long\def\LKstar{{\boldsymbol{L}_{K}^{\star}}}%

\global\long\def\Ronestar{{\boldsymbol{R}_{1}^{\star}}}%
\global\long\def\Rtwostar{{\boldsymbol{R}_{2}^{\star}}}%
\global\long\def\Rkstar{{\boldsymbol{R}_{k}^{\star}}}%
\global\long\def\RKstar{{\boldsymbol{R}_{K}^{\star}}}%

\global\long\def\Lp{{\bL^{+}}}%
\global\long\def\Ltildep{{{\bL}_{\mathsf{pop}}^{+}}}%
\global\long\def\tLtildep{(\Ltildep)^{\top}}%

\global\long\def\Rp{{\bR^{+}}}%
\global\long\def\Rtildep{{{\bR}_{\mathsf{pop}}^{+}}}%
\global\long\def\tRtildep{(\Rtildep)^{\top}}%

\global\long\def\Rinv{\bR(\bR^{\top}\bR)^{-1}}%
\global\long\def\Linv{\bL(\bL^{\top}\bL)^{-1}}%
\global\long\def\EL{\bE_{\bL}}%
\global\long\def\ER{\bE_{\bR}}%

\global\long\def\Omegaonestar{{\Omega_{1}^{\star}}}%
\global\long\def\Omegatwostar{{\Omega_{2}^{\star}}}%
\global\long\def\Omegakstar{{\Omega_{k}^{\star}}}%
\global\long\def\OmegaKstar{{\Omega_{K}^{\star}}}%

\global\long\def\Omegaistar{\Omega_{i}^{\star}}%
\global\long\def\Omegajstar{\Omega_{j}^{\star}}%
\global\long\def\Omegalstar{\Omega_{l}^{\star}}%

\global\long\def\distp{\mathsf{dist}}%
\global\long\def\distU{\big\|\Uhat\Uhat^{\top}-\Ustar\Ustar^{\top}\big\|}%
\global\long\def\distV{\big\|\Vhat\Vhat^{\top}-\Vstar\Vstar^{\top}\big\|}%
\global\long\def\dist{\mathsf{dist}}%
\global\long\def\disthat{\mathsf{dist}_{\bU,\bV}}%

\global\long\def\sc{\mathsf{(small\,constant)}\cdot}%
\global\long\def\const{\mathsf{(constant)}\cdot}%

\global\long\def\kstar{{\kappa}}%

\global\long\def\sumone{\sum_{i\in\Omegaonestar}}%
\global\long\def\sumtwo{\sum_{i\in\Omegatwostar}}%
\global\long\def\sumall{\sum_{i=1}^{KN}}%
\global\long\def\sumomgk{\sum_{i\in\Omegakstar}}%
\global\long\def\sumomgK{\sum_{i\in\OmegaKstar}}%

\global\long\def\sumK{\sum_{k=1}^{K}}%
\global\long\def\sumall{\sum_{i=1}^{\Nall}}%

\global\long\def\nabonehat{{\widehat{\boldsymbol{\nabla}}_{1}}}%
\global\long\def\nabtwohat{{\widehat{\boldsymbol{\nabla}}_{2}}}%
\global\long\def\nabkhat{{\widehat{\boldsymbol{\nabla}}_{k}}}%
\global\long\def\nabKhat{{\widehat{\boldsymbol{\nabla}}_{K}}}%

\global\long\def\nabone{{\boldsymbol{\nabla}_{1}}}%
\global\long\def\nabtwo{{\boldsymbol{\nabla}_{2}}}%
\global\long\def\nabk{{\boldsymbol{\nabla}_{k}}}%
\global\long\def\nabK{{\boldsymbol{\nabla}_{K}}}%

\global\long\def\nabLonehat{{\widehat{\boldsymbol{\nabla}}_{1}^{(L)}}}%
\global\long\def\nabLtwohat{{\widehat{\boldsymbol{\nabla}}_{2}^{(L)}}}%
\global\long\def\nabLkhat{{\widehat{\boldsymbol{\nabla}}_{k}^{(L)}}}%
\global\long\def\nabLKhat{{\widehat{\boldsymbol{\nabla}}_{K}^{(L)}}}%

\global\long\def\nabLone{{\boldsymbol{\nabla}_{1}^{(L)}}}%
\global\long\def\nabLtwo{{\boldsymbol{\nabla}_{2}^{(L)}}}%
\global\long\def\nabLk{{\boldsymbol{\nabla}_{k}^{(L)}}}%
\global\long\def\nabLK{{\boldsymbol{\nabla}_{K}^{(L)}}}%

\global\long\def\nabRonehat{{\widehat{\boldsymbol{\nabla}}_{1}^{(R)}}}%
\global\long\def\nabRtwohat{{\widehat{\boldsymbol{\nabla}}_{2}^{(R)}}}%
\global\long\def\nabRkhat{{\widehat{\boldsymbol{\nabla}}_{k}^{(R)}}}%
\global\long\def\nabRKhat{{\widehat{\boldsymbol{\nabla}}_{K}^{(R)}}}%

\global\long\def\nabRone{{\boldsymbol{\nabla}_{1}^{(R)}}}%
\global\long\def\nabRtwo{{\boldsymbol{\nabla}_{2}^{(R)}}}%
\global\long\def\nabRk{{\boldsymbol{\nabla}_{k}^{(R)}}}%
\global\long\def\nabRK{{\boldsymbol{\nabla}_{K}^{(R)}}}%

\global\long\def\Delone{\boldsymbol{\Delta}_{1}}%
\global\long\def\Deltwo{\boldsymbol{\Delta}_{2}}%
\global\long\def\Delk{\boldsymbol{\Delta}_{k}}%
\global\long\def\DelK{\boldsymbol{\Delta}_{K}}%
\global\long\def\Delj{\boldsymbol{\Delta}_{j}}%
\global\long\def\Delmis{\boldsymbol{\Delta}_{\mathsf{mis}}}%
\global\long\def\Deltrip{\boldsymbol{\Delta}_{\mathsf{fs}}}%

\global\long\def\chat{\widehat{c}}%
\global\long\def\sigzk{\sigma_{z,k}}%

\global\long\def\mzero{m_{0}}%
\global\long\def\mone{\boldsymbol{m}_{1}}%
\global\long\def\Mtwo{\boldsymbol{M}_{2}}%
\global\long\def\Mthree{\boldsymbol{M}_{3}}%
\global\long\def\Mhat{\hat{\bM}}%
\global\long\def\Rstar{\bR^{\star}}%

\global\long\def\mtilzero{\widetilde{m}_{0}}%
\global\long\def\mtilone{\widetilde{\boldsymbol{m}}_{1}}%
\global\long\def\Mtiltwo{\widetilde{\boldsymbol{M}}_{2}}%
\global\long\def\Mtilthree{\widetilde{\boldsymbol{M}}_{3}}%

\global\long\def\Tcal{\mathcal{T}}%

\global\long\def\ai{\boldsymbol{a}_{i}}%
\global\long\def\aitil{{\ba_{i}^{\mathsf{aug}}}}%
\global\long\def\Ai{\boldsymbol{A}_{i}}%
\global\long\def\sigek{\sigma_{e,k}}%
\global\long\def\dtil{\widetilde{d}}%
\global\long\def\betaktil{{\bbeta_{k}^{\mathsf{aug}}}}%
\global\long\def\xtil{\widetilde{\bx}}%
\global\long\def\ztil{\widetilde{\bz}}%

\global\long\def\xihat{\widehat{\boldsymbol{\xi}}}%
\global\long\def\ei{\zeta_{i}}%
\global\long\def\yi{y_{i}}%
\global\long\def\zi{z_{i}}%
\global\long\def\sigt{\sigma_{t,k}}%
\global\long\def\ti{t_{i}}%

\global\long\def\pkhat{p_{k}}%
\global\long\def\ponehat{p_{1}}%
\global\long\def\pKhat{p_{K}}%
\global\long\def\ptwohat{p_{2}}%
\global\long\def\Nall{N}%

\global\long\def\sige{\sigma}%
\global\long\def\Nbar{\overline{N}}%

\global\long\def\Ttil{\widetilde{T}}%
\global\long\def\Nk{N_{k}}%
\global\long\def\NMLR{N_{\mathsf{MLR}}}%

\global\long\def\Mbar{\overline{S}}%
\global\long\def\munder{\underline{m}}%
\global\long\def\taumin{\tau_{\mathsf{min}}}%

\global\long\def\Aiaug{\Ai^{\mathsf{aug}}}%
\global\long\def\Xaug{\bX^{\mathsf{aug}}}%
\global\long\def\Zaug{\bZ^{\mathsf{aug}}}%

\global\long\def\cchi{c_{\chi}}%
\global\long\def\epstau{\epsilon_{\tau}}%
\global\long\def\TTRIP{\Tcal_{\mathsf{TRIP}}}%
\global\long\def\MTRIP{\mathcal{M}_{\mathsf{TRIP}}}%
\global\long\def\Tquant{\Tcal_{1}}%
\global\long\def\Tquantnorm{\Tcal_{1}^{\mathsf{norm}}}%
\global\long\def\Mquantnorm{\mathcal{M}_{1}^{\mathsf{norm}}}%
\global\long\def\setlowrank{\mathcal{R}}%

\global\long\def\xztau{(\bX,\bZ,\tau)}%
\global\long\def\xztauzero{(\bX_{0},\bZ_{0},\tau_{0})}%
\global\long\def\Echi{E^{\chi}}%
\global\long\def\Eind{E}%
\global\long\def\Emchi{E_{m}^{\chi}}%
\global\long\def\Emind{E_{m}}%

\global\long\def\neighbor{\mathcal{B}}%

\begin{abstract}
We study the problem of learning mixtures of low-rank models, i.e.~reconstructing
multiple low-rank matrices from unlabelled linear measurements of
each. This problem enriches two widely studied settings --- low-rank
matrix sensing and mixed linear regression --- by bringing latent
variables (i.e.~unknown labels) and structural priors (i.e.~low-rank
structures) into consideration. To cope with the non-convexity issues
arising from unlabelled heterogeneous data and low-complexity structure,
we develop a three-stage meta-algorithm that is guaranteed to recover
the unknown matrices with near-optimal sample and computational complexities
under Gaussian designs. In addition, the proposed algorithm is provably
stable against random noise. We complement the theoretical studies
with empirical evidence that confirms the efficacy of our algorithm.
\end{abstract}

\noindent \textbf{Keywords: }matrix sensing, latent variable models,
heterogeneous data, mixed linear regression, non-convex optimization,
meta-learning

\tableofcontents{}

\section{Introduction\label{sec:intro}}

This paper explores a mixture of low-rank models with latent variables,
which seeks to reconstruct a couple of low-rank matrices $\Mkstar\in\mathbb{R}^{n_{1}\times n_{2}}$
$(1\leq k\leq K)$ from \emph{unlabeled} linear measurements of each.
More specifically, what we have available is a collection of $N$
linear measurements $\{y_{i}\}_{1\leq i\leq N}$ taking the following
form:

\begin{equation}
y_{i}=\begin{cases}
\langle\boldsymbol{A}_{i},\Monestar\rangle,\quad & \text{if }i\in\Omegaonestar,\\
\dots & \dots\\
\langle\boldsymbol{A}_{i},\MKstar\rangle,\quad & \text{if }i\in\OmegaKstar,
\end{cases}\label{eq:mix_measurements}
\end{equation}
where $\{\Ai\}_{1\le i\le N}$ are the sampling/design matrices, $\langle\cdot,\cdot\rangle$
denotes the matrix inner product, and $\{\Omegakstar\}_{1\le k\le K}$
represents an unknown partition of the index set $\{1,\ldots,N\}$\emph{.
}The aim is to design an algorithm that is guaranteed to recover $\{\Mkstar\}$
efficiently and faithfully, despite the absence of knowledge of $\{\Omegakstar\}_{1\le k\le K}$. 

This problem of learning mixtures of low-rank models enriches two
widely studied settings: (1) it generalizes classical low-rank matrix
recovery \cite{recht2010guaranteed,chi2019nonconvex} by incorporating
heterogeneous data and latent variables (i.e.~the labels indicating
which low-rank matrices are being measured), and (2) it expands the
studies of mixed linear regression \cite{quandt1978estimating,yi2014alternating}
by integrating low-complexity structural priors (i.e.~low-rank structures).
In addition to the prior work \cite{yi2015regularized} that has studied
this setting, we single out two broader scenarios that bear relevance
to and motivate the investigation of mixtures of low-rank models.
\begin{itemize}
\item \emph{Mixed matrix completion}. If each measurement $y_{i}$ only
reveals a single entry of one of the unknown matrices $\{\bm{M}_{k}^{\star}\}$,
then the problem is commonly referred to as mixed matrix completion
(namely, completing several low-rank matrices from a mixture of unlabeled
observations of their entries) \cite{pimentel2018mixture}. One motivating
application arises from computer vision, where several problems like
joint shape matching can be posed as structured matrix completion
\cite{pmlr-v32-chend14,chen2018projected}. When the objects to be
matched exhibit certain geometric symmetry, there might exist multiple
plausible maps (and hence multiple ground-truth matrices), and the
provided observations might become intrinsically unlabeled due to
symmetric ambiguities \cite{sun2018joint}. Other applications include
network topology inference and metagenomics given mixed DNA samples;
see \cite{pimentel2018mixture} for details. 
\item \emph{Multi-task learning and meta-learning}. The model (\ref{eq:mix_measurements})
can be viewed as an instance of multi-task learning or meta-learning
\cite{baxter2000model,maurer2016benefit,kong2020meta}, where the
tasks follow a discrete prior distribution supported on a set of $K$
meta parameters, and each training data point $(\Ai,\yi)$ is a realization
of one task that comes with a single sample. While it is typically
assumed in meta-learning that even light tasks have more than one
samples, understanding this single-sample model is essential towards
tackling more general settings. Additionally, in comparison to meta-learning
for mixed linear regression \cite{kong2020meta,kong2020robust}, the
model (\ref{eq:mix_measurements}) imposes further structural prior
on the unknown meta parameters, thereby allowing for potential reduction
of sample complexities. 
\end{itemize}
The challenge for learning mixtures of low-rank models primarily stems
from the non-convexity issues. While the low-rank structure alone
already leads to non-convex optimization landscapes, the presence
of heterogeneous data and discrete hidden variables further complicates
matters significantly.

\subsection{Main contributions \label{subsec:meta_intuition}}

This paper takes a step towards learning mixtures of low-rank models,
focusing on the tractable Gaussian design where the $\bm{A}_{i}$'s
have i.i.d.~Gaussian entries; in light of this, we shall also call
the problem \emph{mixed matrix sensing}, to be consistent with the
terminology used in recent literature \cite{bhojanapalli2016global,chi2019nonconvex}.
In particular, we propose a meta-algorithm comprising the following
three stages: 
\begin{enumerate}
\item Estimate the joint column and row spaces of $\{\Mkstar\}_{1\le k\le K}$;
\item Transform mixed matrix sensing into low-dimensional mixed linear regression
using the above subspace estimates, and invoke a mixed linear regression
solver to obtain initial estimates of $\{\Mkstar\}_{1\le k\le K}$;
\item Successively refine the estimates via a non-convex low-rank matrix
factorization algorithm (more specifically, an algorithm called \emph{scaled
truncated gradient descent} to be described in Algorithm~\ref{alg:tsgd}). 
\end{enumerate}
The details of each stage will be spelled out and elucidated in Section
\ref{sec:alg}. 

Encouragingly, the proposed algorithm is guaranteed to succeed under
mild conditions (to be specified in Section \ref{subsec:models_assumptions}).
Informally, our contributions are three-fold. 
\begin{itemize}
\item \emph{Exact recovery in the noiseless case}. In the absence of noise,
our algorithm enables exact recovery of $\{\Mkstar\}$ modulo global
permutation. The sample complexity required to achieve this scales
linearly (up to some log factor) in the dimension $\max\{n_{1},n_{2}\}$
and polynomially in other salient parameters. 
\item \emph{Stability vis-\`a-vis random noise}. The proposed algorithm
is provably stable against Gaussian noise, in the sense that the estimation
accuracy degrades gracefully as the signal-to-noise-ratio decreases. 
\item \emph{Computational efficiency}. When the number $K$ of components
and the maximum rank of the unknown matrices are both constants,\textcolor{blue}{{}
}the computational cost of our algorithm scales nearly linearly in
$Nn_{1}n_{2}$ with $N$ the number of samples --- this is proportional
to the time taken to read all design matrices.
\end{itemize}
The precise theorem statements are postponed to Section \ref{sec:main_results}.
Empirical evidence will also be provided in Section \ref{sec:main_results}
to corroborate the efficacy of our algorithm.

\subsection{Notation\label{subsec:Notation}}

Before we proceed, let us collect some notation that will be frequently
used. Throughout this paper, we reserve boldfaced symbols for vectors
(lower case) and matrices (upper case). For a vector $\bx$, $\|\bx\|_{2}$
denotes its $\ell_{2}$ norm. For a matrix $\bX$, $\|\bX\|$ (resp.~$\|\bX\|_{{\rm F}}$)
denotes its spectral (resp.~Frobenius) norm, $\sigma_{k}(\bX)$ denotes
its $k$-th largest singular value, and $\col\{\bX\}$ (resp.~$\row\{\bX\}$)
denotes its column (resp.~row) space. If $\bU$ is a matrix with
orthonormal columns, we also use the same notation $\bU$ to represent
its column space, and vice versa. For any matrices $\bA,\bB\in\R^{n_{1}\times n_{2}}$,
let $\langle\bA,\bB\rangle\coloneqq\sum_{i=1}^{n_{1}}\sum_{j=1}^{n_{2}}A_{ij}B_{ij}$
stand for the matrix inner product. $\bI_{n}$ represents the $n\times n$
identity matrix. $\vc(\cdot)$ denotes vectorization of a matrix,
and $\mat(\cdot)$ denotes the inverse operation (the corresponding
matrix dimensions should often be clear from the context).

We use both $a_{n}\lesssim b_{n}$ and $a_{n}=O(b_{n})$ to indicate
that $a_{n}\le C_{0}b_{n}$ for some universal constant $C_{0}>0$;
in addition, $a_{n}\gtrsim b_{n}$ is equivalent to $b_{n}\lesssim a_{n}$,
and $a_{n}\asymp b_{n}$ means both $a_{n}\lesssim b_{n}$ and $b_{n}\lesssim a_{n}$
hold true. Finally, $a_{n}=o(b_{n})$ means that $a_{n}/b_{n}\rightarrow0$
as $n\rightarrow\infty$.

For a finite set $\Omega$, we denote by $|\Omega|$ its cardinality.
For a number $\alpha\in[0,1]$ and a random variable $X$ following
some distribution on $\R$, we let $\quantalpha(X)$ denote the $\alpha$-quantile
function, namely
\begin{equation}
Q_{\alpha}(X)\coloneqq\inf\big\{ t\in\R:\Pr(X\le t)\ge\alpha\big\}.\label{eq:def_quantile_distr}
\end{equation}
For a finite set $\Dcal$ of real numbers, with slight abuse of notation,
we let $Q_{\alpha}(\Dcal)$ be the $\alpha$-quantile of $\Dcal$;
more precisely, we define $Q_{\alpha}(\Dcal)\coloneqq Q_{\alpha}(X_{\Dcal})$,
where $X_{\Dcal}$ denotes a random variable uniformly drawn from
$\Dcal$.

\section{Algorithm \label{sec:alg}}

This section formalizes our algorithm design by specifying each stage
of our meta-algorithm with a concrete procedure (namely, Algorithms
\ref{alg:spectral}, \ref{alg:stage2}, \ref{alg:tsgd} for Stages 1,
2, 3, respectively). It is worth noting that these are definitely
not the only choices; in fact, an advantage of our meta-algorithm
is its flexibility and modularity, in the sense that one can plug
in different sub-routines to address various models and assumptions.

Before continuing, we introduce more notation that will be used throughout.
For any $1\le k\le K$, define

\begin{equation}
p_{k}\coloneqq\frac{|\Omegakstar|}{N}\qquad\text{and}\qquad r_{k}\coloneqq\rank(\Mkstar),\label{eq:defn-pk-rk}
\end{equation}
which represent the fraction of samples associated with the $k$-th
component and the rank of the $k$-th ground-truth matrix $\bm{M}_{k}^{\star}$,
respectively. In addition, let the compact singular value decomposition
(SVD) of $\left\{ \bm{M}_{k}^{\star}\right\} $~be 
\begin{equation}
\Mkstar=\Ukstar\Sigkstar\Vkstar^{\top},\quad\quad1\le k\le K,\label{eq:defn-Mk-svd}
\end{equation}
where $\Ukstar\in\mathbb{R}^{n_{1}\times r_{k}}$ and $\Vkstar\in\mathbb{R}^{n_{2}\times r_{k}}$
consist of orthonormal columns, and $\bm{\Sigma}_{k}^{\star}$ is
a diagonal matrix. 

\subsection{\label{subsec:alg_stage_1}Stage 1: subspace estimation via a spectral
method}

\paragraph{Procedure.}

We propose to estimate the following joint column and row spaces:
\begin{equation}
\Ustar\coloneqq\col\big\{[\Uonestar,\dots,\UKstar]\big\}\quad\text{and}\quad\Vstar\coloneqq\col\big\{[\Vonestar,\dots,\VKstar]\big\}\label{eq:def_Ustar_Vstar}
\end{equation}
by means of a spectral method. More specifically, we start by forming
a data matrix
\begin{equation}
\bY\coloneqq\frac{1}{N}\sumall y_{i}\Ai,\label{eq:data-matrix-spectral}
\end{equation}
and set $\Uhat\in\mathbb{R}^{n_{1}\times R}$ (resp.~$\Vhat\in\mathbb{R}^{n_{2}\times R}$)
to be a matrix whose columns consist of the top-$R$ left (resp.~right)
singular vectors of $\bY$, where
\begin{equation}
R\coloneqq\rank\big(\E[\bY]\big).\label{eq:defn-R}
\end{equation}
This method is summarized in Algorithm \ref{alg:spectral}.

\paragraph{Rationale. }

To see why this might work, note that if $\{\Ai\}$ consist of i.i.d.~standard
Gaussian entries, then
\begin{align}
\E[\bY]= & \sum_{k=1}^{K}p_{k}\E\big[\langle\Ai,\Mkstar\rangle\Ai\big]=\sumK\pkhat\Mkstar=\sumK\pkhat\Ukstar\Sigkstar\Vkstar^{\top}\nonumber \\
= & \big[\Uonestar,\Utwostar,\dots,\UKstar\big]\begin{bmatrix}\ponehat\Sigonestar & \boldsymbol{0} & \dots & \boldsymbol{0}\\
\boldsymbol{0} & \ptwohat\Sigtwostar & \boldsymbol{0} & \vdots\\
\vdots & \boldsymbol{0} & \ddots & \boldsymbol{0}\\
\boldsymbol{0} & \dots & \boldsymbol{0} & \pKhat\SigKstar
\end{bmatrix}\begin{bmatrix}\Vonestar^{\top}\\
\Vtwostar^{\top}\\
\vdots\\
\VKstar^{\top}
\end{bmatrix}.\label{eq:EY}
\end{align}
 Recalling the definitions of $\Ustar$ and $\Vstar$ in (\ref{eq:def_Ustar_Vstar}),
we have
\[
\col\big\{\E[\boldsymbol{Y}]\big\}=\Ustar,\quad\row\big\{\E[\boldsymbol{Y}]\big\}=\Vstar,\quad\rank(\Ustar)=\rank(\Vstar)=R
\]
under some mild conditions (detailed in Section \ref{sec:main_results}).
This motivates the development of Algorithm~\ref{alg:spectral}. 

\begin{algorithm}[tbp]
\DontPrintSemicolon
\caption{Subspace estimation via a spectral method} \label{alg:spectral}
{\bf Input:} {samples $\{\bA_i, y_i\}_{1 \le i \le N}$, rank $R$}. \\
Compute $\bY \gets \frac{1}{\Nall} \sum_{i=1}^{\Nall} y_i \bA_i$. \\
Let $\Uhat\in \mathbb{R}^{n_1\times R}$ (resp.~$\Vhat\in \mathbb{R}^{n_2\times R}$) be the matrix consisting of the top-$R$ left (resp.~right) singular vectors of $\bY$. \\
{\bf Output:} {$\Uhat, \Vhat$}.
\end{algorithm}

\subsection{Stage 2: initialization via low-dimensional mixed linear regression\label{subsec:alg_stage_2}}

\begin{algorithm}[tbp]
\DontPrintSemicolon
\caption{Initialization via low-dimensional mixed linear regression} \label{alg:stage2}
{\bf Input:} {samples $\{\bA_i, y_i\}_{1 \le i \le N}$,  subspaces $\Uhat, \Vhat$, ranks $\{r_k\}_{1 \le k \le K}$}. \\
Transform $\ai\gets\vc(\Uhat^{\top}\Ai\Vhat),  1 \le i \le N$. \\
Obtain $\{\betakhat\}_{1 \le k \le K} \gets$  the output of a black-box mixed linear regression solver (i.e.~Algorithm \ref{alg:tensor}) on  $\{\ai,\yi\}_{1 \le i \le N}$. \label{line:alg2_mlr} \\
\For{$k=1,\dots,K$}{ \label{line:alg2_forloop}
$\bU_k\boldsymbol{\Sigma}_k\bV_k^{\top} \gets $ rank-$r_{k}$ SVD of $\Uhat\Skhat\Vhat^{\top}$, where $\Skhat\coloneqq\mat(\betakhat)$. \label{line:alg2_svd} \\
$\bL_k\gets\bU_k\boldsymbol{\Sigma}_k^{1/2},\bR_k\gets\bV_k\boldsymbol{\Sigma}_k^{1/2}$. \label{line:alg2_LkRk}
}
{\bf Output:} {$\{\bL_{k},\bR_{k}\}_{1 \le k \le K}$}.
\end{algorithm}

\paragraph{Key observations.}

Suppose that there is an oracle informing us of the subspaces $\Ustar$
and $\Vstar$ defined in~(\ref{eq:def_Ustar_Vstar}). Recognizing
the basic relation $\Mkstar=\Ustar\Ustar^{\top}\Mkstar\Vstar\Vstar^{\top}$
and defining
\begin{equation}
\bS_{k}^{\star}\coloneqq\Ustar^{\top}\Mkstar\Vhat^{\star}\in\R^{R\times R},\qquad1\leq k\leq K,\label{eq:defn-Sk-star}
\end{equation}
we can rewrite the measurements in hand as follows:
\begin{equation}
y_{i}=\begin{cases}
\langle\boldsymbol{A}_{i},\Monestar\rangle=\langle\Ai,\Ustar\Ustar^{\top}\Monestar\Vstar\Vstar^{\top}\rangle=\langle\Ustar^{\top}\Ai\Vstar,\bS_{1}^{\star}\rangle,\quad & \text{if }i\in\Omegaonestar,\\
\dots & \dots\\
\langle\boldsymbol{A}_{i},\MKstar\rangle=\langle\Ustar^{\top}\Ai\Vstar,\bS_{K}^{\star}\rangle,\quad & \text{if }i\in\OmegaKstar.
\end{cases}\label{eq:low_dim_mlr-1}
\end{equation}
In other words, the presence of the oracle effectively reduces the
original problem into a mixed linear regression problem in lower dimensions
--- that is, the problem of recovering $\{\bS_{k}^{\star}\}$ from
mixed linear measurements. If $\{\bS_{k}^{\star}\}$ can be reliably
estimated, then one can hope to recover $\{\bM_{k}^{\star}\}$ via
the following relation:
\begin{equation}
\bm{M}_{k}^{\star}=\bm{U}^{\star}\bm{U}^{\star\top}\bm{M}_{k}^{\star}\bm{V}^{\star}\bm{V}^{\star\top}=\bm{U}^{\star}\bm{S}_{k}^{\star}\bm{V}^{\star\top}.\label{eq:M-k-recovery-through-S}
\end{equation}

\paragraph{Procedure.}

While we certainly have no access to the aforementioned oracle in
reality, Stage 1 described above provides us with subspace estimates
$\Uhat$ and $\Vhat$ satisfying $\Uhat\Uhat^{\top}\approx\Ustar\Ustar^{\top}$
and $\Vhat\Vhat^{\top}\approx\Vstar\Vstar^{\top}$. Treating these
as surrogates of $(\bm{U}^{\star},\bm{V}^{\star})$ (so that $\bm{M}_{k}^{\star}\approx\bm{U}\bm{U}^{\top}\bm{M}_{k}^{\star}\bm{V}\bm{V}^{\top}$),
we can view the measurements as 
\begin{equation}
y_{i}=\begin{cases}
\langle\boldsymbol{A}_{i},\Monestar\rangle\approx\langle\Ai,\bm{U}\bm{U}^{\top}\Monestar\bm{V}\bm{V}^{\top}\rangle=\langle\bm{U}^{\top}\Ai\bm{V},\bm{S}_{1}\rangle=\langle\bm{a}_{i},\bm{\beta}_{1}\rangle,\quad & \text{if }i\in\Omegaonestar,\\
\dots & \dots\\
\langle\boldsymbol{A}_{i},\MKstar\rangle\approx\langle\bm{a}_{i},\bm{\beta}_{K}\rangle,\quad & \text{if }i\in\OmegaKstar,
\end{cases}\label{eq:low_dim_mlr}
\end{equation}
which are mixed linear measurements about the following vectors/matrices:
\begin{equation}
\quad\quad\bbeta_{k}\coloneqq\vc\big(\bS_{k}\big)\in\R^{R^{2}},\qquad\bS_{k}\coloneqq\Uhat^{\top}\Mkstar\Vhat\in\R^{R\times R},\qquad1\le k\le K.\label{eq:def_Skhat}
\end{equation}
Here, the equivalent sensing vectors are defined to be $\ai\coloneqq\vc\big(\Uhat^{\top}\Ai\Vhat\big)\in\R^{R^{2}}$
for any $1\leq i\leq N$. All this motivates us to resort to mixed
linear regression algorithms for recovering $\{\bbeta_{k}\}$. The
proposed algorithm thus entails the following steps, with the precise
procedure summarized in Algorithm \ref{alg:stage2}. 
\begin{itemize}
\item Invoke any mixed linear regression algorithm to obtain estimates $\{\widehat{\bbeta}_{k}\}_{1\leq k\leq K}$
for $\{\bm{\beta}_{k}\}_{1\leq k\leq K}$ (up to global permutation).
For concreteness, the current paper applies the tensor method (Algorithm
\ref{alg:tensor}) originally proposed in \cite{yi2016solving}; this
is a polynomial-time algorithm, with details deferred to Appendix~\ref{sec:alg_MLR}.
To simplify presentation, let us assume here that the global permutation
happens to be an identity map, so that $\widehat{\bm{\beta}}_{k}$
is indeed a faithful estimate of $\bm{\beta}_{k}$ $(1\leq k\leq K)$.
By simple matricization, $\widehat{\bm{\beta}}_{k}$ leads to a reliable
estimate $\Skhat$ of $\bm{S}_{k}$.
\item Given the observation that
\begin{equation}
\Uhat\Skhat\Vhat^{\top}\approx\Uhat\bS_{k}\Vhat^{\top}=\Uhat\Uhat^{\top}\Mkstar\Vhat\Vhat^{\top}\approx\Ustar\Ustar^{\top}\Mkstar\Vstar\Vstar^{\top}=\Mkstar,\label{eq:Sk_Mpik}
\end{equation}
we propose to compute the rank-$r_{k}$ SVD --- denoted by $\bU_{k}\boldsymbol{\Sigma}_{k}\bV_{k}^{\top}$
--- of the matrix $\Uhat\Skhat\Vhat^{\top}$ for each $1\leq k\leq K$.
This in turn leads to our initial estimate for the low-rank factors
\begin{equation}
\bL_{k}\coloneqq\bU_{k}\boldsymbol{\Sigma}_{k}^{1/2}\in\R^{n_{1}\times r_{k}},\quad\text{and}\quad\bR_{k}\coloneqq\bV_{k}\boldsymbol{\Sigma}_{k}^{1/2}\in\R^{n_{2}\times r_{k}}.\label{eq:tsgd_init}
\end{equation}
\end{itemize}

\subsection{Stage 3: local refinement via scaled truncated gradient descent (ScaledTGD)\label{subsec:alg_stage_3}}

Suppose that an initial point $\bm{L}^{0}(\bm{R}^{0})^{\top}$ lies
within a reasonably small neighborhood of $\bm{M}_{k}^{\star}$ for
some $1\leq k\leq K$. Stage 3 serves to locally refine this initial
estimate, moving it closer to our target $\bm{M}_{k}^{\star}$. Towards
this end, we propose to deploy the following update rule termed \emph{scaled
truncated gradient descent} (ScaledTGD): 
\begin{subequations}
\label{eq:tsgd_updates}
\begin{align}
\bm{L}^{t+1} & =\bL^{t}-\frac{\eta}{\Nall}\sum_{i\in\Omega^{t}}\big(\langle\Ai,\bL^{t}(\bR^{t})^{\top}\rangle-\yi\big)\Ai\bR^{t}\big((\bR^{t})^{\top}\bR^{t}\big)^{-1},\label{eq:tsgd_L}\\
\bm{R}^{t+1} & =\bR^{t}-\frac{\eta}{\Nall}\sum_{i\in\Omega^{t}}\big(\langle\Ai,\bL^{t}(\bR^{t})^{\top}\rangle-\yi\big)\Ai^{\top}\bL^{t}\big((\bL^{t})^{\top}\bL^{t}\big)^{-1},\label{eq:tsgd_R}
\end{align}
\end{subequations}
where $\eta>0$ denotes the step size. Here, $\Omega^{t}\subseteq\{1,2,\cdots,N\}$
is an adaptive and iteration-varying index set designed to mimic the
index set $\Omega_{k}^{\star}$. Indeed, if $\Omega^{t}=\Omega_{k}^{\star}$,
the aforementioned update rule reduces to the ScaledGD method developed
for vanilla low-rank matrix sensing (see~\cite{tong2020accelerating}),
which is guaranteed to converge to $\bm{M}_{k}^{\star}$ in the presence
of a suitable initialization. Here, the rescaling matrix $\big((\bR^{t})^{\top}\bR^{t}\big)^{-1}$
(resp.~$\big((\bL^{t})^{\top}\bL^{t}\big)^{-1}$) acts as a pre-conditioner
of the conventional gradient $\sum_{i\in\Omega^{t}}\big(\langle\Ai,\bL^{t}(\bR^{t})^{\top}\rangle-\yi\big)\Ai\bR^{t}$
(resp.~$\sum_{i\in\Omega^{t}}\big(\langle\Ai,\bL^{t}(\bR^{t})^{\top}\rangle-\yi\big)\Ai^{\top}\bL^{t}$),
which effectively accelerates convergence when $\bm{M}_{k}^{\star}$
is ill-conditioned. See~\cite{tong2020accelerating,tong2020low} for more intuitions
and justifications of this rescaling strategy. 

Viewed in this light, the key to ensuring effectiveness of ScaledTGD
lies in the design of the index set $\Omega^{t}$. If we know \emph{a
priori} that $\bm{L}^{t}(\bR^{t})^{\top}\approx\bm{M}_{k}^{\star}$,
then it is intuitively clear that $|\langle\bA_{i},\bL^{t}(\bR^{t})^{\top}\rangle-y_{i}|$
typically has a smaller scale for a sample $i\in\Omega_{k}^{\star}$
when compared with those $i\notin\Omega_{k}^{\star}$. This motivates
us to include in $\Omega^{t}$ a certain fraction (denoted by $0<\alpha<1$)
of samples enjoying the smallest empirical loss $|\langle\bA_{i},\bL^{t}(\bR^{t})^{\top}\rangle-y_{i}|$.
Intuitively, the fraction $\alpha$ should not be too large in which
case $\Omega^{t}$ is likely to contain samples outside $\Omega_{k}^{\star}$;
on the other hand, $\alpha$ should not be chosen too small in order
not to waste information. As it turns out, choosing $0.6\pkhat\le\alpha\le0.8\pkhat$
strikes a suitable balance and works well for our purpose. See Algorithm~\ref{alg:tsgd}
for a precise description.  

\begin{algorithm}[tbp]
\DontPrintSemicolon
\caption{Scaled Truncated Gradient Descent (ScaledTGD) for recovering $\Mkstar$} \label{alg:tsgd}
{\bf Input:}  samples $\{\Ai, y_i\}_{1 \le i \le N}$, initialization $\bL^{0}\in\R^{n_1 \times r_k},\bR^{0}\in\R^{n_2 \times r_k}$,     step size $\eta$, truncating fraction $\alpha$. \\
\For{$t = 0, 1, 2, \dots, T_0 - 1$}{
$$
\bL^{t+1}\gets\bL^{t}-\frac{\eta}{\Nall}\sum_{i\in\Omega^t}\big(\langle\Ai,\bL^{t}(\bR^{t})^{\top}\rangle-\yi\big)\Ai\bR^{t}\big((\bR^{t})^{\top}\bR^{t}\big)^{-1},
$$
$$
\bR^{t+1}\gets\bR^{t}-\frac{\eta}{\Nall}\sum_{i\in\Omega^t}\big(\langle\Ai,\bL^{t}(\bR^{t})^{\top}\rangle-\yi\big)\Ai^{\top}\bL^{t}\big((\bL^{t})^{\top}\bL^{t}\big)^{-1},
$$
where $\Omega^{t}\coloneqq\{1 \le i \le N:|\langle\Ai,\bL^{t}(\bR^{t})^{\top}\rangle-\yi|\le\tau_{t}\}$, $\tau_t\coloneqq Q_{\alpha}(\{|\langle\Ai,\bL^{t}(\bR^{t})^{\top}\rangle-\yi|\}_{1 \le i \le N})$.
}
{\bf Output:} $\bL^{T_0}, \bR^{T_0}$.
\end{algorithm}

\subsection{The full algorithm\label{subsec:full_algorithm}}

\begin{algorithm}[tbp]
\DontPrintSemicolon
\caption{A fully specified three-stage algorithm for mixed matrix sensing} \label{alg:full}
{\bf Input:} independent samples $\{\Ai, y_i\}_{1 \le i \le N}, \{\Ai', \yi'\}_{1 \le i \le \NMLR}$, parameters $R,\{r_{k},\eta_{k},\alpha_{k}\}_{1 \le k \le K}$ (see Table \ref{tab:alg_params}). \\
Run Algorithm \ref{alg:spectral} with $\{\Ai, y_i\}_{1 \le i \le N}$ and $R$ to obtain $\Uhat, \Vhat$. \label{line:stage1} \\
Run Algorithm \ref{alg:stage2} with $\{\Ai', \yi'\}_{1 \le i \le \NMLR}, \Uhat, \Vhat$ and $\{r_k\}_{1 \le k \le K}$ to obtain $\{\bL_{k},\bR_{k}\}_{1 \le k \le K}$. \label{line:stage2} \\
\For{$k = 1, 2, \dots, K$}{
Run Algorithm \ref{alg:tsgd} on $\{\Ai, y_i\}_{1 \le i \le N}$ with $(\bL^{0},\bR^{0})\gets(\bL_{k},\bR_{k}),\eta_{k},\alpha_{k}$ to obtain $\bL^{T_{0}},\bR^{T_{0}}$. \label{line:stage3} \\ 
Set $\bM_{k}\gets\bL^{T_{0}}(\bR^{T_{0}})^{\top}$.
}
{\bf Output:} $\{\bM_{k}\}_{1 \le k \le K}$.
\end{algorithm}

\begin{table}
\begin{centering}
\caption{\label{tab:alg_params}Our choices of the algorithmic parameters in
Algorithm \ref{alg:full}.}
\par\end{centering}
\centering{}%
\begin{tabular}{ll}
\toprule 
Algorithm \ref{alg:spectral} & Rank $R=\rank(\sum_{k}\pkhat\Mkstar)$.\tabularnewline
\midrule 
Algorithm \ref{alg:stage2} & Ranks $r_{k}=\rank(\Mkstar),1\le k\le K$.\tabularnewline
\midrule 
Algorithm \ref{alg:tsgd} (for $\Mkstar$) & Step size $0<\eta_{k}\le1.3/p_{k}$, truncating fraction $0.6p_{k}\le\alpha_{k}\le0.8p_{k}$.\tabularnewline
\bottomrule
\end{tabular}
\end{table}

With the three stages fully described, we can specify the whole algorithm
in Algorithm~\ref{alg:full}, with the choices of algorithmic parameters
listed in Table \ref{tab:alg_params}. Note that the discussion in
Section \ref{subsec:alg_stage_3} focuses on estimating a single component;
in order to recover all $K$ components $\{\Mkstar\}_{1\le k\le K}$,
we simply need to run Algorithm~\ref{alg:tsgd} for $K$ times (which
can be executed in parallel). In addition, Algorithm \ref{alg:full}
is built upon sample splitting: while Stages 1 and 3 employ the same
set of samples $\{\Ai,\yi\}_{1\le i\le N}$, Stage 2 (i.e.~Line~\ref{line:stage2}
of Algorithm \ref{alg:full}) operates upon an \emph{independent}
set of samples $\{\Ai',\yi'\}_{1\le i\le\NMLR}$ (where ``$\mathsf{MLR}$''
stands for ``mixed linear regression''), thus resulting in a total
sample complexity of $N+\NMLR$. The main purpose of sample splitting
is to decouple statistical dependency across stages and facilitate
analysis. Finally, the interested reader is referred to Appendix \ref{sec:estimate_parameters}
for a discussion regarding how to estimate certain parameters in Algorithm
\ref{alg:full} if they are not known \emph{a priori}.

\section{Main results\label{sec:main_results}}

\subsection{Models and assumptions\label{subsec:models_assumptions}}

For notational convenience, let us define the following parameters:
\begin{equation}
n\coloneqq\max\{n_{1},n_{2}\},\quad\,\,r\coloneqq\max_{1\le k\le K}r_{k},\,\,\quad\kappa\coloneqq\max_{1\le k\le K}\kappa(\Mkstar),\,\,\quad\text{and}\quad\Gamma\coloneqq\frac{\max_{1\le k\le K}\|\Mkstar\|_{\Frm}}{\min_{1\le k\le K}\|\Mkstar\|_{\Frm}},\label{eq:def_params}
\end{equation}
where $\kappa(\Mkstar)\coloneqq\sigma_{1}(\Mkstar)/\sigma_{r_{k}}(\Mkstar)$
stands for the condition number of $\Mkstar$. This paper focuses
on the \emph{Gaussian design}, where the entries of each design matrix
$\Ai$ are independently drawn from the standard Gaussian distribution.
In addition, we assume that the samples drawn from the $K$ components
are reasonably \emph{well-balanced} in the sense that for all $1\le k\le K$,
\begin{equation}
p_{k}=\frac{|\Omegakstar|}{N}\asymp\frac{1}{K},\label{eq:balanced_pk}
\end{equation}
where $\Omegakstar$ is the index set for the $k$-th component (see
(\ref{eq:mix_measurements})). We assume that this well-balancedness
assumption holds for both sets of samples $\{\Ai,\yi\}_{1\le i\le N}$
and $\{\Ai',\yi'\}_{1\le i\le\NMLR}$.  

Next, we introduce an incoherence parameter that plays a crucial role
in our theoretical development. 
\begin{defn}
\emph{The incoherence parameter $\mu\ge0$} is the smallest quantity
that satisfies
\begin{equation}
\big\|{\Uistar}^{\top}\Ujstar\big\|_{\Frm}\le\frac{\mu r}{\sqrt{n_{1}}},\qquad\text{and}\qquad\big\|{\Vistar}^{\top}\Vjstar\big\|_{\Frm}\le\frac{\mu r}{\sqrt{n_{2}}}\qquad\text{for all }1\leq i<j\leq K.\label{eq:incoherence-defn}
\end{equation}
\end{defn}
The incoherence parameter $\mu$ takes value on $[0,\sqrt{n/r}]$.
As an example, if $\{\Ukstar\}_{1\le k\le K}$ (resp.~$\{\Vkstar\}_{1\le k\le K}$)
are random low-dimensional subspaces in $\R^{n_{1}}$ (resp.~$\R^{n_{2}}$),
then for any $i\neq j$, $\|{\Uistar}^{\top}\Ujstar\|_{\Frm}$ (resp.~$\|{\Vistar}^{\top}\Vjstar\|_{\Frm}$)
is on the order of $\sqrt{r_{i}r_{j}/n_{1}}$ (resp.~$\sqrt{r_{i}r_{j}/n_{2}}$),
which is further upper bounded by $r/\sqrt{n_{1}}$ (resp.~$r/\sqrt{n_{2}}$).
This observation motivates our definition of the incoherence parameter.
One of our main technical assumptions is that the column (resp.~row)
spaces of the ground-truth matrices are mutually weakly correlated
--- defined through the parameter $\mu$ --- which covers a broad
range of settings.
\begin{assumption}
\label{asp:weak_correlation} The incoherence parameter $\mu$ is
upper bounded by
\begin{equation}
\mu\le\frac{\sqrt{\min\{n_{1},n_{2}\}}}{2r\max\{K,\sqrt{K}\Gamma\}}.\label{eq:defn-weak-correlation}
\end{equation}
\end{assumption}

\subsection{Theoretical guarantees \label{subsec:theoretical-guarantees}}

\paragraph{Exact recovery in the absence of noise. }

Our first main result uncovers that, in the noiseless case, Algorithm~\ref{alg:full}
achieves exact recovery efficiently, in terms of both sample and computational
complexities.
\begin{thm}
[Exact recovery] \label{thm:noiseless} Consider the noiseless case
\eqref{eq:mix_measurements} under the assumptions in Section~\ref{subsec:models_assumptions}.
Suppose 
\begin{equation}
N\ge C_{1}K^{3}r^{2}\kappa^{2}\Gamma^{2}\max\{K^{2}\Gamma^{4},r\kappa^{2}\}\cdot n\log N\,\,\,\,\text{and}\,\,\,\,\NMLR\ge C_{2}K^{8}r^{2}\Gamma^{12}\max\{K^{2},r\kappa^{2}\}\cdot\log n\cdot\log^{3}\NMLR\label{eq:sample_complexity_thm}
\end{equation}
for some sufficiently large constants $C_{1},C_{2}>0$. Then with
probability at least $1-o(1)$, there exists some permutation $\pi:\{1,\dots,K\}\mapsto\{1,\dots,K\}$
such that the outputs of Algorithm \ref{alg:full} obey for all $1\leq k\leq K$
\begin{equation}
\big\|\bm{M}_{\pi(k)}-\bM_{k}^{\star}\big\|_{\Frm}\le\big(1-c_{0}\eta_{k}p_{k}\big)^{T_{0}}\big\|\bM_{k}^{\star}\big\|_{\Frm}\label{eq:tsgd_linear_conv_noiseless}
\end{equation}
for some universal constant $0<c_{0}<1/4$, where $T_{0}$ is the
number of iterations used in Algorithm \ref{alg:tsgd}. 
\end{thm}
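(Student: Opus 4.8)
The plan is to establish the theorem stage by stage, tracking the accuracy of the intermediate estimates and feeding each bound into the next stage. The overall structure is: (i) show Stage 1 produces subspace estimates $\Uhat,\Vhat$ with $\|\Uhat\Uhat^\top-\Ustar\Ustar^\top\|$ and $\|\Vhat\Vhat^\top-\Vstar\Vstar^\top\|$ small; (ii) show that, conditional on good subspaces, Stage 2 produces initializations $\bL_k(\bR_k)^\top$ lying in a small neighborhood of $\Mkstar$ (modulo a global permutation inherited from the mixed linear regression solver); (iii) show that ScaledTGD in Stage 3, started from such an initialization, contracts the error at a linear rate $(1-c_0\eta_k p_k)$ until it reaches zero in the noiseless case. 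Each stage is stated and proved as a self-contained proposition, and the sample-complexity expression in \eqref{eq:sample_complexity_thm} is the union of the requirements coming from the three propositions.

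For Stage 1, I would write $\bY=\E[\bY]+(\bY-\E[\bY])$, use \eqref{eq:EY} together with Assumption~\ref{asp:weak_correlation} to show that $\E[\bY]$ has rank exactly $R$ with a well-separated $R$-th singular value (the weak-correlation bound guarantees the block structure in \eqref{eq:EY} is nearly orthogonal, so $\sigma_R(\E[\bY])\gtrsim \min_k p_k\sigma_{r_k}(\Mkstar)$ after absorbing a $\mu$-dependent perturbation), and then bound $\|\bY-\E[\bY]\|$ via a concentration argument for sums of i.i.d.\ random matrices of the form $y_i\Ai$ under the Gaussian design; a Davis--Kahan / Wedin $\sin\Theta$ argument then converts the operator-norm deviation into the desired subspace error. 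For Stage 2, I would quantify the ``$\approx$'' in \eqref{eq:low_dim_mlr}: the model-misspecification error incurred by replacing $(\Ustar,\Vstar)$ with $(\Uhat,\Vhat)$ is controlled by the Stage-1 bound, so the transformed data $\{\ai,\yi'\}$ is an (almost) exact mixed linear regression instance in dimension $R^2$; I would then invoke the guarantee of the tensor method (Algorithm~\ref{alg:tensor}, cited from \cite{yi2016solving}) on $\NMLR$ samples to recover $\{\bbeta_k\}$, hence $\{\bS_k\}$, hence via \eqref{eq:Sk_Mpik} and a rank-$r_k$ SVD truncation the matrices $\bL_k(\bR_k)^\top$, with a Frobenius-norm error that can be made a small constant multiple of $\sigma_{r_k}(\Mkstar)/\sqrt{\kappa}$ (the ScaledTGD basin of attraction). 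Care is needed to handle the permutation $\pi$ and to verify the truncation step does not blow up the error (Weyl/Eckart--Young plus the fact that $\Uhat\Skhat\Vhat^\top$ is already close to the rank-$r_k$ matrix $\Mkstar$).

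For Stage 3, the core is a one-step contraction lemma: assuming $\bL^t(\bR^t)^\top$ is within the basin of $\Mkstar$, show $\dist(\bL^{t+1},\bR^{t+1};\bLkstar,\bRkstar)\le(1-c_0\eta_k p_k)\dist(\bL^t,\bR^t;\bLkstar,\bRkstar)$ in the scaled metric of \cite{tong2020accelerating}, and that the basin is invariant. The two new ingredients relative to vanilla ScaledGD are (a) the truncated index set $\Omega^t$ defined by the $\alpha$-quantile of residuals, which must be shown to satisfy $\Omega^t\subseteq\Omega_k^\star$ with high probability (so that no ``wrong-component'' sample contaminates the gradient) while still retaining a constant fraction $\gtrsim p_k$ of the $k$-th component's samples — this is where the choice $0.6 p_k\le\alpha_k\le0.8 p_k$ and the gap between in-component and out-of-component residuals are used, via a uniform concentration (restricted isometry-type) argument over the relevant low-rank set; and (b) the fact that the effective sample fraction is $\alpha_k$ rather than $1$, which is exactly what produces the $p_k$ factor in the rate. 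In the noiseless case the fixed-point residual is zero, so iterating gives \eqref{eq:tsgd_linear_conv_noiseless} after translating the scaled distance back to $\|\bM_k-\Mkstar\|_\Frm$ (losing at most a $\kappa$-independent constant, absorbed since $c_0<1/4$).

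I expect the main obstacle to be Stage~3, specifically the analysis of the adaptive, iteration-varying truncation set $\Omega^t$: because $\Omega^t$ depends on the current iterate $(\bL^t,\bR^t)$, one cannot directly use independence of the samples, and a naive union bound over all possible iterates is too lossy. The resolution is to prove a \emph{uniform} statement — over all $(\bL,\bR)$ in the basin of attraction simultaneously — that the empirical $\alpha$-quantile of residuals behaves like its population counterpart and that the induced index set is sandwiched appropriately between subsets of $\Omega_k^\star$; this requires an $\varepsilon$-net over the low-rank neighborhood together with Gaussian concentration, and is the technical heart of the argument (and the source of the $K^3 r^2\kappa^2\Gamma^2$-type factors in the sample complexity). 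A secondary difficulty is bookkeeping the global permutation consistently across Stages 2 and 3 and verifying that Assumption~\ref{asp:weak_correlation} is strong enough to make $R=\rank(\E[\bY])$ equal to $\rank(\Ustar)=\rank(\Vstar)$ and to keep all the ``$\approx$'' perturbations below the required thresholds.
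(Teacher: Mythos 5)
Your outline for Stages 1 and 2 matches the paper closely: perturbation of $\bY$ around $\E[\bY]$, a singular-gap lower bound from Assumption~\ref{asp:weak_correlation}, Wedin's theorem, then treating the subspace-misspecification error as Gaussian noise in the transformed MLR instance and invoking the tensor-method guarantee, followed by an Eckart--Young argument for the rank-$r_k$ truncation. The high-level Stage~3 plan (one-step contraction in the scaled metric, invariance of the basin, uniform concentration over an $\varepsilon$-net of the low-rank neighborhood) is also the right framework.

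However, there is a genuine gap in your Stage~3 truncation argument. You propose to show that $\Omega^t\subseteq\Omega_k^\star$ with high probability, so that ``no wrong-component sample contaminates the gradient.'' This is false, even in the noiseless case. The truncation threshold $\tau_t$ is the $\alpha$-quantile of the residuals $\{|\langle\Ai,\bL^t(\bR^t)^\top\rangle-y_i|\}$, which for $i\in\Omega_k^\star$ behave like $|\Ncal(0,\|\Delk\|_\Frm^2)|$ and for $i\in\Omega_j^\star$ ($j\neq k$) behave like $|\Ncal(0,\|\Delj\|_\Frm^2)|$ with $\|\Delj\|_\Frm\approx\|\Mjstar-\Mkstar\|_\Frm\gg\|\Delk\|_\Frm$. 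Since the out-of-component residuals are continuous with positive density near zero, a positive (though small, of order $\tau_t/\|\Delj\|_\Frm\asymp\|\Delk\|_\Frm/\|\Mjstar-\Mkstar\|_\Frm$) fraction of samples from each $\Omega_j^\star$ will fall below the threshold $\tau_t$, so $\Omega^t\not\subseteq\Omega_k^\star$ with overwhelming probability. The paper's actual argument does not attempt to exclude these samples; it decomposes the update as $\bL^+=\bL_{\mathsf{pop}}^+-\eta\bE_L$ where $\bE_L$ collects the mislabeling contribution $\Delmis=\sum_{j\neq k}p_j w_j\boldsymbol{\Delta}_j$ and the finite-sample contribution $\Deltrip$, and then shows $\|\Delmis\|_\Frm$ is small by exploiting the quadratic decay of the weight $w_j=w(\tau/\|\boldsymbol{\Delta}_j\|_\Frm)$ in $\|\boldsymbol{\Delta}_j\|_\Frm^{-1}$ (Fact~\ref{fact:w_quadratic}): the mislabeled samples do enter $\Omega^t$, but their surviving residuals are tiny and their aggregate gradient contribution scales like $\|\Delone\|_\Frm^2/\min_{j\neq k}\|\Mjstar-\Mkstar\|_\Frm$, which is dominated by the contraction. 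Your approach as stated would stall at the very first step, and the fix is not a union bound or a tighter net but a quantitative, weighted bound on the contamination rather than its exclusion.

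A secondary, more minor point: the $p_k$ factor in the contraction rate does not come directly from ``the effective sample fraction being $\alpha_k$'' but from the product $p_k w_k$ in the population-level update \eqref{eq:tsgd_pop}, where $w_k\gtrsim1$ is guaranteed by the two-sided quantile bound \eqref{eq:tau_range_tsgd}; the truncation fraction $\alpha_k$ enters only through determining $\tau$ and hence $w_k$.
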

The proof can be found in Section \ref{sec:analysis}. Two implications
are in order.
\begin{itemize}
\item Suppose that the parameters $K,r,\kappa,\Gamma=O(1)$. In order to
achieve exact recovery, the sample size $N$ in~(\ref{eq:sample_complexity_thm})
only needs to scale as $O(n\log n)$, while $\NMLR$ only needs to
exceed the order of {$\log n \cdot \log\log^3 n$}. 
\item By setting the step size $\eta_{k}=c_{1}/\pkhat$ for some constant
$0<c_{1}\le1.3$, we see that the third stage (i.e.~ScaledTGD) achieves
linear convergence with a \emph{constant} contraction rate, which
is independent of the condition number $\kappa(\bm{M}_{k}^{\star})$
of the matrix $\bm{M}_{k}^{\star}$. 
\end{itemize}

\paragraph{Stability vis-\`a-vis noise. }

Moving on to the more realistic case with noise, we consider the following
set of samples $\{\Ai,\yi\}_{1\le i\le N}$:
\begin{equation}
\ei\iid\Ncal\left(0,\sigma^{2}\right),\quad\quad y_{i}=\begin{cases}
\langle\boldsymbol{A}_{i},\Monestar\rangle+\ei, & \text{if }i\in\Omegaonestar,\\
\dots\\
\langle\boldsymbol{A}_{i},\MKstar\rangle+\ei, & \text{if }i\in\OmegaKstar.
\end{cases}\label{eq:mix_measurements_noisy}
\end{equation}
The set $\{\Ai',\yi'\}_{1\le i\le\NMLR}$ is independently generated
in a similar manner. Our next result reveals that the proposed algorithm
is stable against Gaussian noise. The proof is postponed to Section
\ref{sec:analysis}. 
\begin{thm}
[Stable recovery] \label{thm:noisy} Consider the noisy model (\ref{eq:mix_measurements_noisy})
under the assumptions of Section~\ref{subsec:models_assumptions}.
Suppose that the sample sizes satisfy \eqref{eq:sample_complexity_thm},
and that the noise level satisfies
\begin{equation}
\sigma\le c\min_{1\le k\le K}\|\Mkstar\|_{\Frm}\cdot\min\left\{ \frac{1}{K},\frac{1}{\sqrt{r}\kappa}\right\} \label{eq:sigma_bound}
\end{equation}
for some sufficiently small constant $c>0$. Then with probability
at least $1-o(1)$, there exists some permutation $\pi:\{1,\dots,K\}\mapsto\{1,\dots,K\}$
such that the outputs of Algorithm \ref{alg:full} obey for all $1\leq k\leq K$
\begin{align}
\big\|\bm{M}_{\pi(k)}-\bM_{k}^{\star}\big\|_{\Frm} & \le\big(1-c_{0}\eta_{k}p_{k}\big)^{T_{0}}\big\|\bM_{k}^{\star}\big\|_{\Frm}+C_{0}\max\left\{ \sige\sqrt{\frac{nrK^{3}\log N}{N}},\,\frac{K\sige^{2}}{\min_{j:j\neq k}\|\Mjstar-\bM_{k}^{\star}\|_{\Frm}}\right\} ,\label{eq:tsgd_linear_conv_noisy}
\end{align}
where $0<c_{0}<1/4$ and $C_{0}>0$ are some universal constants,
and $T_{0}$ is the number of iterations used in Algorithm \ref{alg:tsgd}.\textcolor{blue}{}
\end{thm}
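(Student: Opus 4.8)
The plan is to reduce the noisy result to the noiseless analysis by tracking how Gaussian noise perturbs each of the three stages, and then to run the ScaledTGD contraction argument with an extra additive error term. Throughout I would rely on the same decomposition of Algorithm~\ref{alg:full} used to prove Theorem~\ref{thm:noiseless}: (i)~a spectral bound on the subspace estimates $\Uhat,\Vhat$, (ii)~a mixed-linear-regression estimation bound for $\{\betakhat\}$, and (iii)~a one-step contraction inequality for the ScaledTGD iterates. The noise model \eqref{eq:mix_measurements_noisy} simply adds an independent $\Ncal(0,\sigma^2)$ term to every $y_i$, so in each stage the ``data matrix'' or ``empirical moment'' picks up an extra zero-mean stochastic piece that must be controlled.

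First I would redo Stage~1. The data matrix becomes $\bY=\frac1N\sum_i y_i\bA_i=\bY_{\mathsf{signal}}+\frac1N\sum_i\zeta_i\bA_i$. The extra term is a sum of i.i.d.\ Gaussian-scaled Gaussian matrices, whose spectral norm is $\lesssim\sigma\sqrt{n/N}$ with high probability by a standard covering/$\varepsilon$-net argument (or a matrix Bernstein bound). Feeding this into a Davis--Kahan / Wedin $\sin\Theta$ perturbation bound, I get $\|\Uhat\Uhat^\top-\Ustar\Ustar^\top\|\lesssim \sigma\sqrt{n/N}/\sigma_R(\E[\bY])$ plus the noiseless error, and similarly for $\Vhat$. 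Under \eqref{eq:sample_complexity_thm} and the well-balancedness \eqref{eq:balanced_pk}, this stays well below the small-constant threshold needed downstream, provided $\sigma$ obeys \eqref{eq:sigma_bound}. Next, Stage~2: after the transformation $\ba_i=\vc(\Uhat^\top\bA_i\Vhat)$, the measurements are noisy mixed linear measurements of $\{\bbeta_k\}$ with an effective noise that combines the genuine $\zeta_i$ (variance $\sigma^2$) and the ``misspecification'' error from $\Uhat\Uhat^\top\neq\Ustar\Ustar^\top$. Invoking the estimation guarantee for the tensor-based MLR solver (Algorithm~\ref{alg:tensor}) on these perturbed samples yields $\|\betakhat-\bbeta_k\|_2\lesssim \sigma\cdot(\text{poly})+(\text{noiseless error})$; matricizing and taking rank-$r_k$ SVD of $\Uhat\Skhat\Vhat^\top$ then gives an initialization $\bL_k(\bR_k)^\top$ with $\|\bL_k(\bR_k)^\top-\Mkstar\|_\Frm$ small relative to $\sigma_{r_k}(\Mkstar)$ — this is what lets Stage~3 start inside the basin of attraction.

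The heart of the argument is Stage~3. I would prove a noisy one-step contraction lemma: if $\bL^t(\bR^t)^\top$ is $\delta$-close to $\Mkstar$ in a suitable scaled metric and $\delta$ is below a small constant times $\sigma_{r_k}(\Mkstar)$, then
\begin{equation}
\dist(\bL^{t+1},\bR^{t+1};\Mkstar)\le(1-c_0\eta_k p_k)\,\dist(\bL^t,\bR^t;\Mkstar)+C\eta_k p_k\cdot e_{\mathsf{noise}},
\end{equation}
where $e_{\mathsf{noise}}$ is exactly the max term appearing on the right-hand side of \eqref{eq:tsgd_linear_conv_noisy}. The two pieces of $e_{\mathsf{noise}}$ have distinct origins. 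The term $\sigma\sqrt{nrK^3\log N/N}$ is the statistical error of a truncated least-squares gradient step: on the ``good'' index set $\Omega^t$ the noisy gradient deviates from its population counterpart by this much (again via an $\varepsilon$-net bound on $\frac1N\sum_{i\in\Omega^t}\zeta_i\bA_i\bR^t(\cdots)^{-1}$, accounting for the data-dependence of $\Omega^t$ by a union bound over a net of candidate low-rank factors). The term $K\sigma^2/\min_{j\neq k}\|\Mjstar-\Mkstar\|_\Frm$ is subtler: it is the \emph{label-misclassification} error — with additive Gaussian noise the quantile-thresholded set $\Omega^t$ can no longer perfectly separate component $k$ from component $j$, and a sample from component $j$ slips into $\Omega^t$ roughly when $|\zeta_i|\gtrsim\|\Mjstar-\Mkstar\|_\Frm$, contributing a bias proportional to $\sigma^2/\|\Mjstar-\Mkstar\|_\Frm$ per such component. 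Quantifying this requires a careful anti-concentration / quantile-stability argument for the distribution of $|\langle\bA_i,\bL^t(\bR^t)^\top\rangle-y_i|$ under each component, showing that the $\alpha$-quantile threshold $\tau_t$ separates the components up to an $O(\sigma)$ slab and that the resulting contamination of the gradient is bounded as claimed. Iterating the one-step bound $T_0$ times and summing the geometric series gives \eqref{eq:tsgd_linear_conv_noisy}; combining with a union bound over the (independent, by sample splitting) stages and over $k$ finishes the proof.

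The main obstacle I anticipate is precisely this label-misclassification analysis in Stage~3 — controlling how the adaptive, noise-dependent truncation set $\Omega^t$ behaves. One has to show simultaneously that (a)~almost all of $\Omega_k^\star$ is retained (so the ``signal'' part of the gradient is essentially the full ScaledGD gradient, up to a $(1-c_0)$-type factor absorbing the missing $1-\alpha/p_k$ fraction), and (b)~the few intruders from other components contribute only the $K\sigma^2/\min_{j\neq k}\|\Mjstar-\Mkstar\|_\Frm$ bias rather than something larger; both require uniform (over the iterate trajectory) concentration, handled via nets on the low-rank factor manifold, together with the Gaussian anti-concentration used to pin down the quantile $\tau_t$. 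The spectral and MLR perturbation bounds in Stages~1--2, while tedious, are comparatively routine given the noiseless analysis and standard random-matrix tools, so I would state them as lemmas and defer their proofs.
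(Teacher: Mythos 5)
Your blueprint — noise-perturbed spectral bound for Stage~1 via Wedin, noisy MLR guarantee for Stage~2, and a one-step ScaledTGD contraction with an additive noise floor split into a finite-sample term and a mislabeling term, all handled by covering/net arguments to cope with the data-dependent truncation set — is essentially the paper's own proof structure (Theorems~\ref{thm:stage1}--\ref{thm:stage3} plus Proposition~\ref{prop:init}, with the error floor $K\sigma\delta\asymp\sigma\sqrt{nrK^3\log N/N}$ under $N\gtrsim nrK\delta^{-2}\log N$). One minor correction to the intuition: the bias $K\sigma^2/\min_{j\neq k}\|\Mjstar-\Mkstar\|_\Frm$ does not come from $|\zeta_i|$ occasionally exceeding $\|\Mjstar-\Mkstar\|_\Frm$; rather, a sample $i\in\Omegajstar$ has residual $\langle\Ai,\Delj\rangle-\zeta_i\sim\Ncal(0,\|\Delj\|_\Frm^2+\sigma^2)$ with $\|\Delj\|_\Frm\approx\|\Mjstar-\Mkstar\|_\Frm$, so a $\Theta\big(\tau/\|\Mjstar-\Mkstar\|_\Frm\big)$ fraction of these residuals fall below the truncation level $\tau\approx\sigma$ at convergence and each contributes $O(\tau)$ to the gradient, which is exactly what the paper captures through the weights $w_k$ and Fact~\ref{fact:w_quadratic}.
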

Theorem \ref{thm:noisy} asserts that, when initialized using the
proposed schemes, the ScaledTGD algorithm converges linearly until
an error floor is hit. To interpret the statistical guarantees (\ref{eq:tsgd_linear_conv_noisy}),
we find it helpful to define the signal-to-noise-ratio (SNR) w.r.t.~$\Mkstar$
as follows:
\begin{equation}
\SNR_{k}\coloneqq\frac{\mathbb{E}\left[\big|\langle\bm{A}_{i},\bm{M}_{k}^{\star}\rangle\big|^{2}\right]}{\mathbb{E}[\zeta_{i}^{2}]}=\frac{\|\Mkstar\|_{\Frm}^{2}}{\sigma^{2}}.\label{eq:def_snr}
\end{equation}
This together with the simple consequence $\min_{j:j\neq k}\|\Mjstar-\Mkstar\|_{\Frm}\gtrsim\|\Mkstar\|_{\Frm}$
of Assumption~\ref{asp:weak_correlation} implies that
\begin{equation}
\frac{\|\bm{M}_{\pi(k)}-\Mkstar\|_{\Frm}}{\|\Mkstar\|_{\Frm}}\lesssim\max\left\{ \frac{1}{\sqrt{\SNR_{k}}}\sqrt{\frac{nrK^{3}\log N}{N}},\,\frac{K}{\SNR_{k}}\right\} \label{eq:normalized_error}
\end{equation}
as long as the iteration number $T_{0}$ is sufficiently large. Here,
the first term on the right-hand side of (\ref{eq:normalized_error})
 matches the \emph{minimax lower bound} for low-rank matrix sensing
\cite[Theorem 2.5]{candes2011tight} (the case with $K=1)$ up to
a factor of $K\sqrt{\log N}$. In contrast, the second term on the
right-hand side of (\ref{eq:normalized_error}) --- which becomes
very small as $\SNR_{k}$ grows --- is not a function of the sample
size $N$ and does not vanish as $N\rightarrow\infty$. This term
arises since, even at the population level, the point $(\bL,\bR)$
satisfying $\bL\bR^{\top}=\Mkstar$ is not a fixed point of the ScaledTGD
update rule, due to the presence of mislabeled samples. 

\subsection{Numerical experiments }

\begin{figure}
\begin{centering}
\begin{minipage}[t]{0.33\textwidth}%
\begin{center}
\includegraphics[width=1\textwidth]{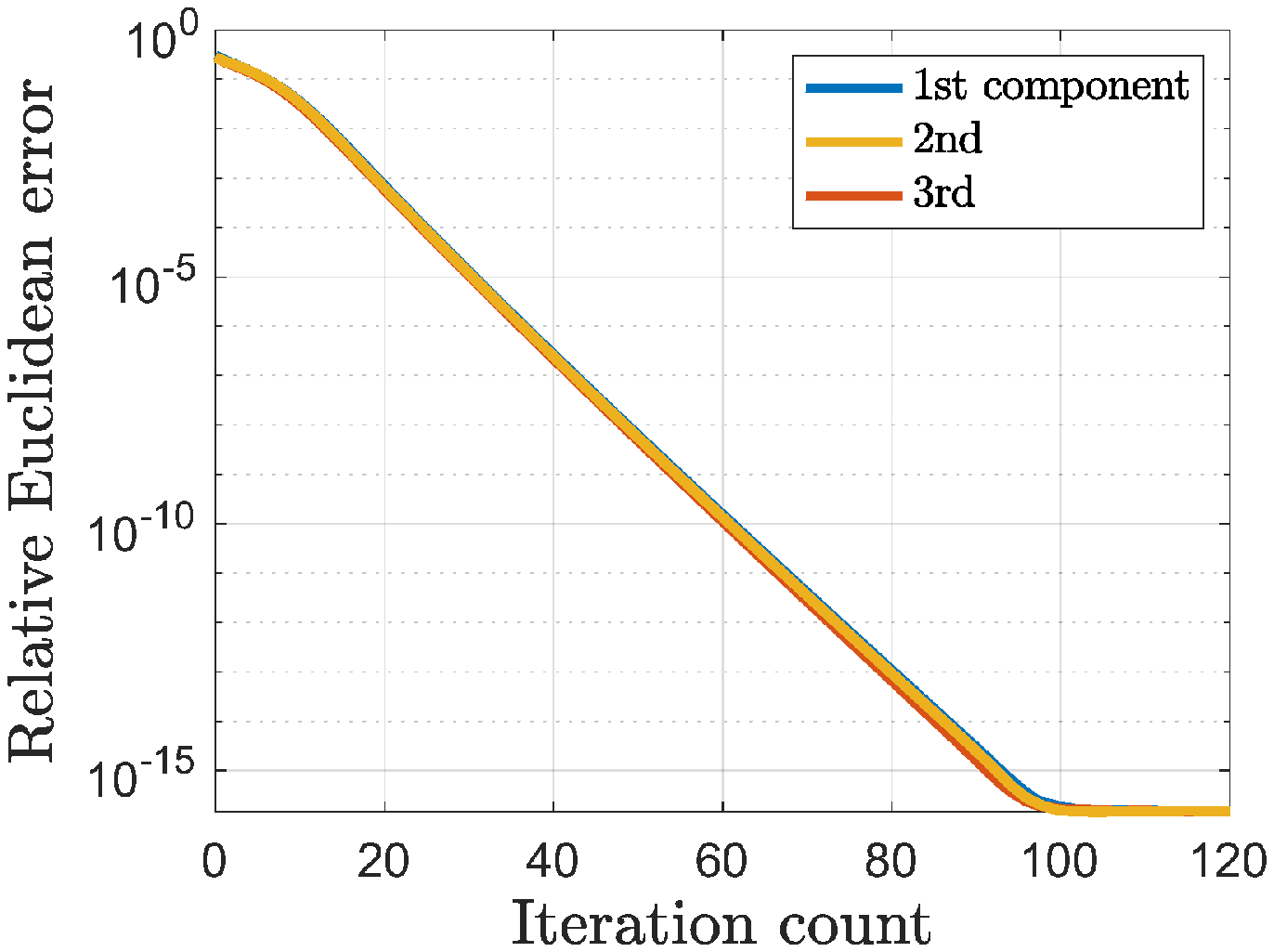}
\par\end{center}
\begin{center}
(a) 
\par\end{center}%
\end{minipage}%
\begin{minipage}[t]{0.33\columnwidth}%
\begin{center}
\includegraphics[width=1\textwidth]{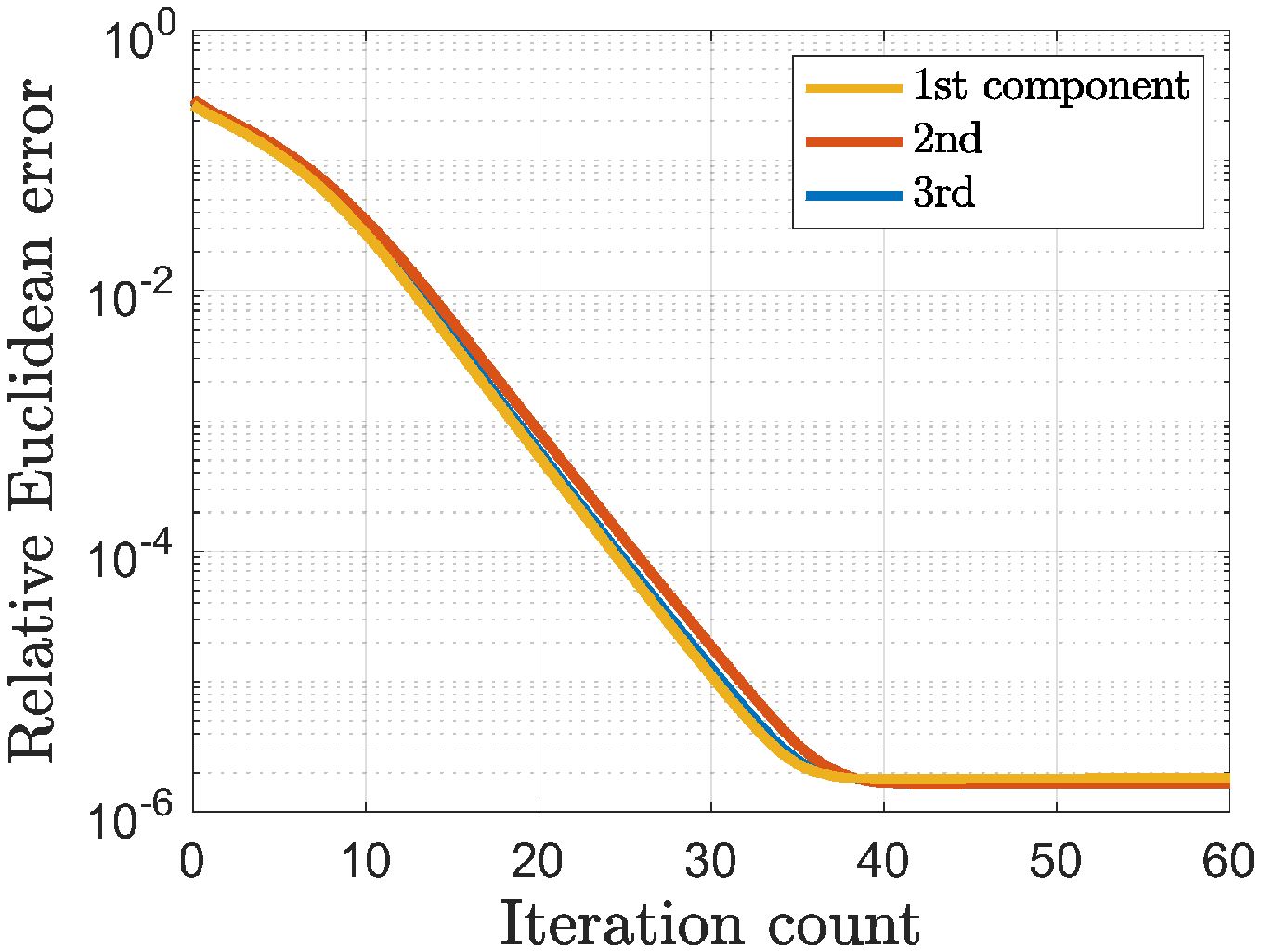}
\par\end{center}
\begin{center}
(b) 
\par\end{center}%
\end{minipage}%
\begin{minipage}[t]{0.33\columnwidth}%
\begin{center}
\includegraphics[width=1\textwidth]{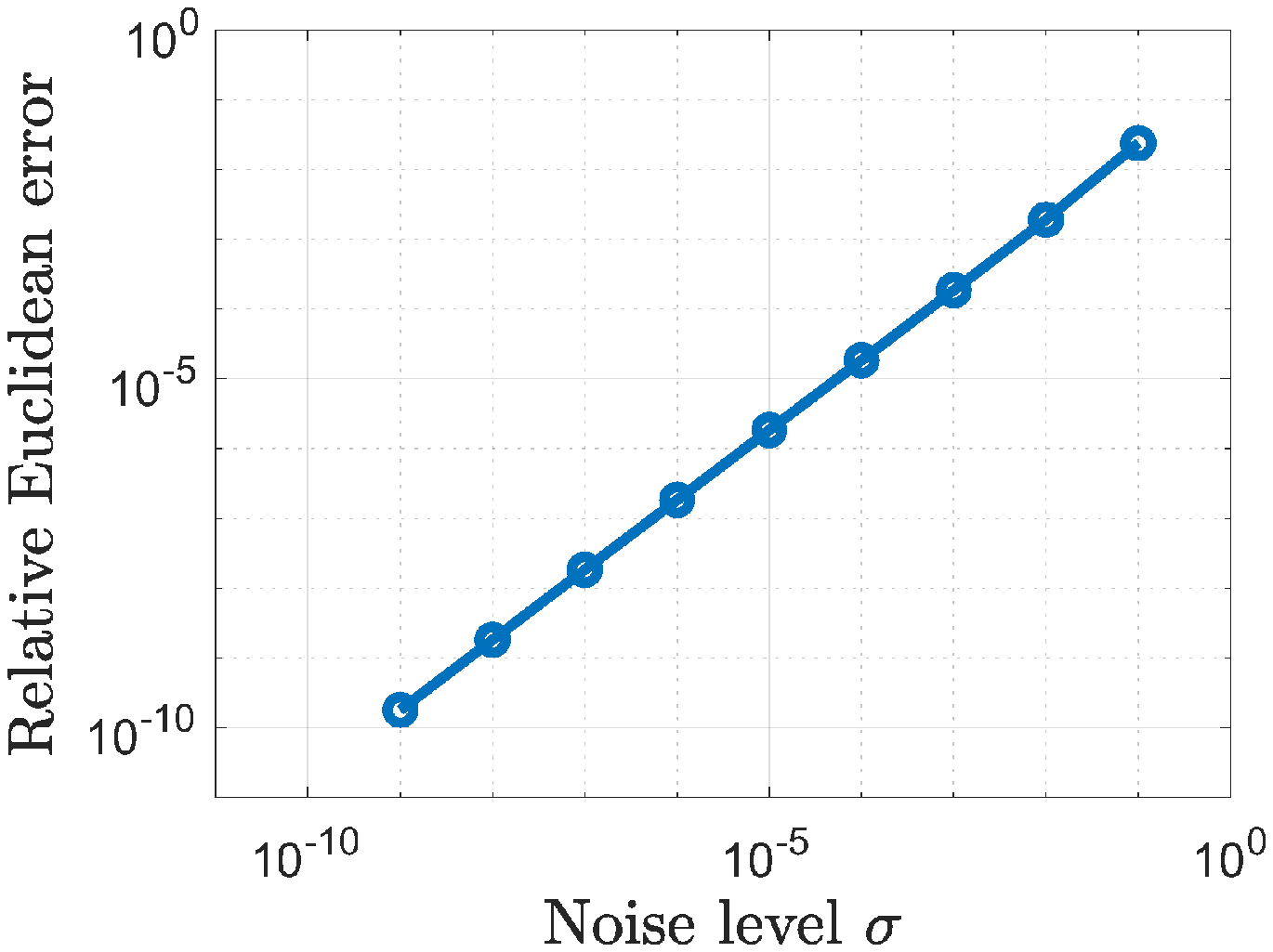}
\par\end{center}
\begin{center}
(c) 
\par\end{center}%
\end{minipage}
\par\end{centering}
\caption{\label{fig:convergence_tensor} (a) The relative Euclidean error vs.~the
iteration count of ScaledTGD in Stage 3 of Algorithm~\ref{alg:full}
for each of the three components, in the noiseless case. (b) Convergence
of ScaledTGD in the noisy case $\sigma=10^{-5}$. (c) The largest
relative Euclidean error (at convergence) of ScaledTGD in Algorithm
\ref{alg:full}, vs.~the noise level $\sigma$. Each data point is
an average over 10 independent trials. }
\end{figure}

To validate our theoretical findings, we conduct a series of numerical
experiments. To match practice, we do not deploy sample splitting
(given that it is merely introduced to simplify analysis), and reuse
the same dataset of size $N$ for all three stages. Throughout the
experiments, we set $n_{1}=n_{2}=n=120$, $r=2$, and $K=3$. For
each $k$, we let $\pkhat=1/K$ and $\Sigkstar=\boldsymbol{I}_{r}$,
and generate $\Ukstar$ and $\Vkstar$ as random $r$-dimensional
subspaces in $\R^{n}$. We fix the sample size to be $N=90\,nrK$.
The algorithmic parameters are chosen according to our recommendations
in Table~\ref{tab:alg_params}. For instance, for each run of ScaledTGD,
we set the step size as $\eta=1.3K$ and the truncation fraction as
$\alpha=0.8/K$. 

\paragraph{Linear convergence of ScaledTGD. }

Our first series of experiments aims at verifying the linear convergence
of ScaledTGD towards the ground-truth matrices $\{\bm{M}_{k}^{\star}\}$
when initialized using the outputs of Stage 2. We consider both the
noiseless case (i.e.~$\sigma=0$) and the noisy case $\sigma=10^{-5}$.
Figures~\ref{fig:convergence_tensor}(a) and \ref{fig:convergence_tensor}(b)
plot the relative Euclidean error $\|\bL^{t}(\bR^{t})^{\top}-\Mkstar\|_{\Frm}/\|\Mkstar\|_{\Frm}$
versus the iteration count $t$ for each component $1\leq k\leq3$.
It is easily seen from Figures~\ref{fig:convergence_tensor}(a) and
\ref{fig:convergence_tensor}(b) that ScaledTGD, when seeded with
the outputs from Stage 2, converges linearly to the ground-truth matrices
$\{\bm{M}_{k}^{\star}\}$ in the absence of noise, and to within a
small neighborhood of $\{\bm{M}_{k}^{\star}\}$ in the noisy setting.

\paragraph{Estimation error in the presence of random noise.}

The second series of experiments investigates the stability of the
three-stage algorithm in the presence of random noise. We vary the
noise level within $[10^{-9},10^{-1}]$. Figure~\ref{fig:convergence_tensor}(c)
plots the largest relative Euclidean error $\max_{1\leq k\leq K}\|\bM_{k}-\Mkstar\|_{\Frm}/\|\Mkstar\|_{\Frm}$
(where $\{\bM_{k}\}$ are the outputs of Algorithm~\ref{alg:full})
versus the noise level $\sigma$, showing that the recovering error
is indeed linear in $\sigma$, as predicted by our theory.

\section{Prior work\label{sec:prior_work}}

\paragraph{Low-rank matrix recovery.}

There exists a vast literature on low-rank matrix recovery (e.g.~\cite{candes2009exact,keshavan2010matrix,bhojanapalli2016global,chen2017solving,ma2017implicit,chen2019gradient,sun2016guaranteed,candes2015phase,jain2013low,chen2019noisy,chen2019inference,sun2018geometric,chen2020nonconvex,ding2020leave,netrapalli2014non,chen2015exact,charisopoulos2019low,arora2019implicit,zhang2020symmetry,zhang2018primal,li2018algorithmic,chen2020bridging,park2017non});
we refer the readers to \cite{chen2018harnessing,chi2019nonconvex,chen2020spectral}
for an overview of this extensively studied topic. Most related to
our work is the problem of matrix sensing (or low-rank matrix recovery
from linear measurements). While convex relaxation \cite{candes2009exact,recht2010guaranteed,candes2011tight}
enjoys optimal statistical performance, two-stage non-convex approaches
\cite{zheng2015convergent,tu2016low,tong2020accelerating} have received
growing attention in recent years, due to their ability to achieve
statistical and computational efficiency at once. Our three-stage
algorithm is partially inspired by the two-stage approach along this
line. It is worth mentioning that the non-convex loss function associated
with low-rank matrix sensing enjoys benign landscape, which in turn
enables tractable global convergence of simple first-order methods
\cite{bhojanapalli2016global,ge2017no,zhu2018global,li2018algorithmic,li2019non}.

\paragraph{Mixed linear regression.}

Being a classical problem in statistics \cite{quandt1978estimating},
mixed linear regression has attracted much attention due to its broad
applications in music perception \cite{de1989mixtures,viele2002modeling},
health care \cite{deb2000estimates}, trajectory clustering \cite{gaffney1999trajectory},
plant science \cite{turner2000estimating}, neuroscience \cite{yin2018learning},
to name a few. While computationally intractable in the worst case
\cite{yi2014alternating}, mixed linear regression can be solved efficiently
under certain statistical models on the design matrix. Take the two-component
case for instance: efficient methods include alternating minimization
with initialization via grid search \cite{yi2014alternating}, EM
with random initialization \cite{klusowski2019estimating,kwon2019global},
and convex reformulations \cite{chen2014convex,hand2018convex}, where
EM further achieves minimax estimation guarantees \cite{chen2014convex}
in the presence of Gaussian noise \cite{kwon2020minimax}. Mixed linear
regression becomes substantially more challenging when the number
$K$ of components is allowed to grow with $n$. 
{Recently, \cite{diakonikolas2020small} achieves quasi-polynomial sample and computational complexities w.r.t.~$K$; other existing methods either suffer from (sub-)exponential dependence on $K$ (e.g.~the Fourier moment method \cite{chen2020learning}, the method of moments \cite{li2018learning}, and grid search over $K$-dimensional subspaces \cite{shen2019iterative}), or only have local convergence guarantees (e.g.~expectation-maximization \cite{kwon2020converges}).} 
It turns out that by restricting
the ground-truth vectors to be in ``general position'' (e.g.~linearly
independent), tensor methods \cite{yi2016solving,chaganty2013spectral,sedghi2016provable,zhong2016mixed}
solve mixed linear regression with polynomial sample and computational
complexities in $K$. It is worth noting that most of the prior work
focused on the Gaussian design for theoretical analysis, with a few
exceptions \cite{chen2014convex,hand2018convex,shen2019iterative}.
Another line of work \cite{khalili2007variable,stadler2010L,yin2018learning,krishnamurthy2019sample,mazumdar2020recovery}
considered mixed linear regression with sparsity, which is beyond
the scope of the current paper. 

\paragraph{Mixed low-rank matrix estimation. }

Moving beyond mixed linear regressions, there are a few papers that
tackle mixtures of low-rank models. For example, \cite{yi2015regularized}
proposed a regularized EM algorithm and applied it to mixed matrix
sensing with two symmetric components; however, only local convergence
was investigated therein. Additionally, \cite{pimentel2018mixture}
was the first to systematically study mixed matrix completion, investigating
the identifiability conditions and sample complexities of this problem;
however, the heuristic algorithm proposed therein comes without provable
guarantees.

\paragraph{Iterative truncated loss minimization. }

\noindent Least trimmed square \cite{rousseeuw1984least} is a classical
method for robust linear regression. Combining the idea of trimming
(i.e.~selecting a subset of ``good'' samples) with iterative optimization
algorithms (e.g.~gradient descent and its variants) leads to a general
paradigm of iterative truncated loss minimization --- a principled
method for improving robustness w.r.t.~heavy-tailed data, adversarial
outliers, etc.~\cite{shen2019learning,shah2020choosing}. Successful
applications of this kind include linear regression \cite{bhatia2015robust},
mixed linear regression \cite{shen2019iterative}, phase retrieval
\cite{chen2017solving,zhang2018median}, matrix sensing \cite{li2020non},
and learning entangled single-sample distributions \cite{yuan2020learning},
among others. 

\paragraph{Multi-task learning and meta-learning. }

The aim of multi-task learning \cite{caruana1997multitask,baxter2000model,ben2003exploiting,ando2005framework,evgeniou2005learning,argyriou2007multi,jalali2010dirty,peng2020cfit,pentina2014pac,maurer2016benefit}
is to simultaneously learn a model that connects multiple \emph{related
}tasks. Exploiting the similarity across tasks enables improved performance
for learning each individual task, and leads to enhance generalization
capabilities for unseen but related tasks with limited samples. This
paradigm (or its variants) is also referred to in the literature as
meta-learning \cite{finn2017model,tripuraneni2020provable} (i.e.~learning-to-learn),
transfer learning \cite{pan2009survey}, and few-shot learning \cite{snell2017prototypical,du2020few},
depending on the specific scenarios of interest. Our study on learning
mixture of models is related to the probabilistic approach taken in
multi-task learning and meta-learning, in which all the tasks (both
the training and the testing ones) are independently sampled from
a common environment, i.e.~a prior distribution of tasks \cite{baxter2000model}.
See \cite{kong2020meta,kong2020robust} for recent efforts that make
explicit the connection between mixed linear regression and meta-learning.

\section{Analysis\label{sec:analysis}}

In this section, we present the proofs of Theorems \ref{thm:noiseless}
and \ref{thm:noisy}. Our analysis is modular in the sense that we
deliver the performance guarantees for the three stages separately
that are independent of each other. For instance, one can replace
the tensor method in Stage 2 by any other mixed linear regression
solver with provable guarantees, without affecting Stages 1 and 3. 

\paragraph{Stage 1. }

The first result confirms that given enough samples, Algorithm \ref{alg:spectral}
outputs reasonable estimates of the subspaces $(\Ustar,\bm{V}^{\star})$
(cf.~(\ref{eq:def_Ustar_Vstar})). The proof is deferred to Appendix
\ref{subsec:proof_stage1}.
\begin{thm}
\label{thm:stage1} Consider the model (\ref{eq:mix_measurements_noisy})
under the assumptions in Section~\ref{subsec:models_assumptions}.
Recall the definitions of $\kappa$ and $\Gamma$ in \eqref{eq:def_params}.
For any $0<\delta<1$, the estimates $\Uhat$ and $\Vhat$ returned
by Algorithm \ref{alg:spectral} satisfy
\begin{equation}
\max\Big\{\distU,\distV\Big\}\lesssim\delta K\sqrt{r}\kappa\left(\Gamma+\frac{1}{\sqrt{Kr}}\frac{\sigma}{\min_{k}\|\Mkstar\|_{\Frm}}\right)\label{eq:disthat}
\end{equation}
with probability at least $1-Ce^{-cn}$ for some universal constants
$C,c>0$, provided that the sample size obeys
\begin{equation}
N\ge C_{0}\frac{nrK}{\delta^{2}}\log\frac{1}{\delta}\label{eq:N_stage1}
\end{equation}
for some sufficiently large constant $C_{0}>0$ . 
\end{thm}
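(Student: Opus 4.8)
The plan is to view the data matrix $\bY$ formed in Algorithm~\ref{alg:spectral} as a random perturbation of its expectation $\E[\bY]$, control that perturbation in spectral norm, and then invoke a Davis--Kahan/Wedin $\sin\Theta$ bound for the top-$R$ singular subspaces. \emph{Step 1 (population matrix).} Since the $\Ai$ have i.i.d.\ standard Gaussian entries and $\zeta_i$ is mean-zero and independent of $\Ai$, the identity $\E[\langle\Ai,\bM\rangle\Ai]=\bM$ gives $\E[\bY]=\sumK\pkhat\Mkstar$, with the block factorization displayed in~\eqref{eq:EY}. Writing $\bar{\bU}\coloneqq[\Uonestar,\dots,\UKstar]$, $\bar{\bV}\coloneqq[\Vonestar,\dots,\VKstar]$ and $\bar{\bD}\coloneqq\operatorname{blkdiag}(\ponehat\Sigonestar,\dots,\pKhat\SigKstar)$, so that $\E[\bY]=\bar{\bU}\bar{\bD}\bar{\bV}^{\top}$, the off-diagonal blocks of $\bar{\bU}^{\top}\bar{\bU}-\bI$ are exactly the matrices $\Uistar^{\top}\Ujstar$, hence by~\eqref{eq:incoherence-defn} one has $\|\bar{\bU}^{\top}\bar{\bU}-\bI\|\le\|\bar{\bU}^{\top}\bar{\bU}-\bI\|_{\Frm}\le K\mu r/\sqrt{n_1}\le 1/2$ under Assumption~\ref{asp:weak_correlation}, and likewise $\|\bar{\bV}^{\top}\bar{\bV}-\bI\|\le 1/2$. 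In particular $\bar{\bU},\bar{\bV}$ have full column rank, so $\col\{\E[\bY]\}=\Ustar$, $\row\{\E[\bY]\}=\Vstar$, $R=\rank(\E[\bY])=\sum_k r_k$; moreover, passing to thin QR factorizations $\bar{\bU}=\bQ_U\bT_U$, $\bar{\bV}=\bQ_V\bT_V$ shows the singular values of $\E[\bY]$ coincide with those of the square matrix $\bT_U\bar{\bD}\bT_V^{\top}$, whence
\[
\sigma_R(\E[\bY])\;\ge\;\sigma_{\min}(\bar{\bU})\,\sigma_{\min}(\bar{\bD})\,\sigma_{\min}(\bar{\bV})\;\ge\;\tfrac12\min_k\pkhat\,\sigma_{r_k}(\Mkstar)\;\gtrsim\;\frac{\min_k\|\Mkstar\|_{\Frm}}{K\sqrt{r}\,\kappa},
\]
using $\pkhat\asymp1/K$ and $\sigma_{r_k}(\Mkstar)\ge\|\Mkstar\|_{\Frm}/(\sqrt{r}\,\kappa)$ (cf.~\eqref{eq:def_params}).

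\emph{Step 2 (fluctuation bound).} Decompose the perturbation into a signal part and a noise part, $\bY-\E[\bY]=\bE_{\mathsf{sig}}+\bE_{\mathsf{noise}}$, where $\bE_{\mathsf{sig}}\coloneqq\frac1{\Nall}\sumK\sum_{i\in\Omegakstar}(\langle\Ai,\Mkstar\rangle\Ai-\Mkstar)$ and $\bE_{\mathsf{noise}}\coloneqq\frac1{\Nall}\sumall\zeta_i\Ai$. For each component $k$, the scalar $\bu^{\top}(\langle\Ai,\Mkstar\rangle\Ai-\Mkstar)\bv=\langle\Ai,\Mkstar\rangle(\bu^{\top}\Ai\bv)-\bu^{\top}\Mkstar\bv$ is a centered sub-exponential variable with parameter $\lesssim\|\Mkstar\|_{\Frm}$ (a product of two jointly Gaussian variables of variances $\|\Mkstar\|_{\Frm}^2$ and $1$); a Bernstein inequality together with an $\varepsilon$-net over the pair of unit spheres (of cardinality $e^{O(n)}$) then gives $\big\|\,|\Omegakstar|^{-1}\sum_{i\in\Omegakstar}(\langle\Ai,\Mkstar\rangle\Ai-\Mkstar)\big\|\lesssim\|\Mkstar\|_{\Frm}\sqrt{nr/|\Omegakstar|}$ with probability $1-e^{-cn}$ once $|\Omegakstar|\gtrsim nr$ (equivalently, a restricted-isometry bound for the Gaussian ensemble applies since $\rank(\Mkstar)\le r$). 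Summing over $k$ with weights $\pkhat$ and using $\sum_k\sqrt{\pkhat}\le\sqrt{K}$ yields $\|\bE_{\mathsf{sig}}\|\lesssim\sqrt{K}\,\Gamma\min_k\|\Mkstar\|_{\Frm}\sqrt{nr/N}$. For $\bE_{\mathsf{noise}}$, conditioned on $\{\zeta_i\}$ the matrix $\sumall\zeta_i\Ai$ has i.i.d.\ $\Ncal(0,\sum_i\zeta_i^2)$ entries, so its spectral norm is $\lesssim\sqrt{\textstyle\sum_i\zeta_i^2}\cdot\sqrt{n}\lesssim\sigma\sqrt{Nn}$ on an event of probability $1-e^{-cn}$ (the Gaussian-matrix bound plus chi-squared concentration of $\sum_i\zeta_i^2$), hence $\|\bE_{\mathsf{noise}}\|\lesssim\sigma\sqrt{n/N}$. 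Altogether $\|\bY-\E[\bY]\|\lesssim\big(\sqrt{Kr}\,\Gamma\min_k\|\Mkstar\|_{\Frm}+\sigma\big)\sqrt{n/N}$.

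\emph{Step 3 (perturbation of singular subspaces).} Since $\Ustar$ (resp.\ $\Vstar$) is precisely the top-$R$ left (resp.\ right) singular subspace of $\E[\bY]$ and $\sigma_{R+1}(\E[\bY])=0$, Wedin's $\sin\Theta$ theorem gives, on the event $\|\bY-\E[\bY]\|\le\tfrac12\sigma_R(\E[\bY])$,
\[
\max\Big\{\distU,\distV\Big\}\;\lesssim\;\frac{\|\bY-\E[\bY]\|}{\sigma_R(\E[\bY])}\;\lesssim\;K\sqrt{r}\,\kappa\Big(\sqrt{Kr}\,\Gamma+\frac{\sigma}{\min_k\|\Mkstar\|_{\Frm}}\Big)\sqrt{\tfrac{n}{N}}.
\]
Substituting the sample-size condition~\eqref{eq:N_stage1} makes $\sqrt{n/N}\le\delta/\sqrt{C_0 rK\log(1/\delta)}$, and a direct simplification brings the right-hand side to the form claimed in~\eqref{eq:disthat}; a sufficiently large choice of $C_0$ also guarantees the gap condition $\|\bY-\E[\bY]\|\le\tfrac12\sigma_R(\E[\bY])$ (when $\delta$ is not small the right-hand side of~\eqref{eq:disthat} already exceeds a universal constant and the claim is vacuous), and the $e^{-cn}$ failure probability from Step~2 yields the stated probabilistic guarantee. \emph{Main obstacle.} The crux is Step~2 --- obtaining the sharp $\sqrt{nr/N}$-type spectral-norm deviation of the sub-exponential random matrix $\bE_{\mathsf{sig}}$ with the correct sample threshold and exponentially small failure probability; the population computation of Step~1 and the invocation of Wedin's theorem in Step~3 are comparatively routine.
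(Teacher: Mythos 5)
Your proposal is correct and follows essentially the same route as the paper's proof: decompose $\bY-\E[\bY]$ into a signal part and an additive-noise part, bound each in spectral norm (the paper cites its RIP lemma---Lemma~\ref{lem:TRIP}---and \cite[Lemma 1.1]{candes2011tight}, whereas you re-derive the signal bound via a sub-exponential Bernstein argument plus an $\varepsilon$-net, which is the same idea packaged differently), verify $\col\{\E[\bY]\}=\Ustar$, $\row\{\E[\bY]\}=\Vstar$ and the eigengap $\sigma_R(\E[\bY])\gtrsim\min_k\|\Mkstar\|_{\Frm}/(K\sqrt r\,\kappa)$ via Assumption~\ref{asp:weak_correlation}, and close with Wedin's $\sin\Theta$ theorem including the observation that the bound is vacuous when the perturbation exceeds half the gap. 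The only cosmetic difference is that you weight the per-component deviations by $\sqrt{p_k}$ and invoke $\sum_k\sqrt{p_k}\le\sqrt K$, while the paper uses $\sum_k p_k=1$ directly; after substituting the sample-size condition these give the same result.
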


\paragraph{Stage 2. }

Next, we demonstrate that the tensor method employed in Algorithm
\ref{alg:stage2} reliably solves the intermediate mixed linear regression
problem defined in~(\ref{eq:low_dim_mlr}). The proof is postponed
to Appendix \ref{subsec:proof_stage2}.
\begin{thm}
\label{thm:stage2} Consider the model (\ref{eq:mix_measurements_noisy})
under the assumptions in Section~\ref{subsec:models_assumptions}.
Suppose that the subspace estimates $\Uhat$ and $\Vhat$ are independent
of $\{\Ai,\yi\}_{1\le i\le N}$ and obey $\max\{\|\Uhat\Uhat^{\top}-\Ustar\Ustar^{\top}\|,\|\Vhat\Vhat^{\top}-\Vstar\Vstar^{\top}\|\}\le c_{1}/(K\Gamma^{2})$
for some sufficiently small constant $c_{1}>0$. Let $\{\betakhat\}_{1\le k\le K}$
be the estimates returned by Line~\ref{line:alg2_mlr} of Algorithm
\ref{alg:stage2}. Given any $0<\epsilon\le c_{2}/K$, there exists
a permutation $\pi(\cdot):\{1,\dots,K\}\mapsto\{1,\dots,K\}$ such
that
\begin{equation}
\big\|\widehat{\bbeta}_{\pi(k)}-\bbeta_{k}\big\|_{2}\le\epsilon\cdot\max_{1\le j\le K}\big\|\Mjstar\big\|_{{\rm F}}\qquad\text{for all }1\leq k\leq K\label{eq:beta_error}
\end{equation}
with probability at least $1-O(1/\log n)$, provided that the sample
size obeys 
\begin{equation}
N\ge C\frac{K^{8}r^{2}}{\epsilon^{2}}\left(\Gamma^{10}+\frac{\sigma^{10}}{\min_{k}\|\Mkstar\|_{\Frm}^{10}}\right)\log n\cdot\log^{3}N.\label{eq:NMLR}
\end{equation}
Here, $c_{2}>0$ (resp.~$C>0$) is some sufficiently small (resp.~large)
constant.
\end{thm}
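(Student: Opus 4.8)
The plan is to reduce the intermediate problem to a standard mixed linear regression instance with nearly i.i.d.\ Gaussian covariates and then invoke the known guarantees for the tensor method of~\cite{yi2016solving}. Recall from~\eqref{eq:low_dim_mlr} that after the transformation $\ai = \vc(\Uhat^\top \Ai \Vhat)$, the measurements obey $y_i = \langle \ai, \bbeta_k\rangle + \zeta_i$ for $i \in \Omegakstar$, where $\bbeta_k = \vc(\Uhat^\top \Mkstar \Vhat)$. Since $\Uhat,\Vhat$ have orthonormal columns and are independent of $\{\Ai\}$, each $\Uhat^\top \Ai \Vhat \in \R^{R\times R}$ has i.i.d.\ standard Gaussian entries, so $\ai \sim \Ncal(\bm 0, \bI_{R^2})$ exactly — this is the key simplification that makes the reduction clean. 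The first step is therefore to record that $\{\ai, y_i\}$ is an \emph{exact} Gaussian mixed linear regression instance in dimension $R^2$ with $K$ components, mixing weights $\{p_k\}$, ground-truth regressors $\{\bbeta_k\}$, and noise variance $\sigma^2$, and then apply the black-box guarantee (Algorithm~\ref{alg:tensor}, the tensor method) which, under well-balancedness~\eqref{eq:balanced_pk} and a separation/non-degeneracy condition on $\{\bbeta_k\}$, returns estimates $\{\betakhat\}$ achieving~\eqref{eq:beta_error} with the stated sample complexity~\eqref{eq:NMLR}, modulo global permutation.

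Second, I would verify that the tensor method's hypotheses hold for $\{\bbeta_k\}$ with parameters that translate into the clean bound~\eqref{eq:NMLR}. The tensor approach requires control of (i) the pairwise separations $\|\bbeta_i - \bbeta_j\|_2$ from below, and (ii) the norms $\|\bbeta_k\|_2$ and their ratio, which govern the conditioning of the moment tensors. For (ii), note $\|\bbeta_k\|_2 = \|\Uhat^\top \Mkstar \Vhat\|_{\Frm}$; since $\|\Uhat\Uhat^\top - \Ustar\Ustar^\top\|$ and $\|\Vhat\Vhat^\top - \Vstar\Vstar^\top\|$ are at most $c_1/(K\Gamma^2)$, a perturbation argument gives $\|\bbeta_k\|_{\Frm} = (1 \pm O(c_1/(K\Gamma^2)))\|\Mkstar\|_{\Frm}$, hence $\max_k \|\bbeta_k\|_2 / \min_k \|\bbeta_k\|_2 \asymp \Gamma$. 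For (i), I would use that $\Mistar$ and $\Mjstar$ have weakly correlated column and row spaces (Assumption~\ref{asp:weak_correlation}), which forces $\|\Mistar - \Mjstar\|_{\Frm} \gtrsim \max\{\|\Mistar\|_{\Frm}, \|\Mjstar\|_{\Frm}\}$ (the ``simple consequence'' already invoked after~\eqref{eq:def_snr}); projecting onto $(\Uhat,\Vhat)$ and again absorbing the $O(c_1/(K\Gamma^2))$ perturbation preserves $\|\bbeta_i - \bbeta_j\|_2 \gtrsim \max_k \|\Mkstar\|_{\Frm}/\Gamma \asymp \min_k\|\Mkstar\|_{\Frm}$, so the relative separation is $\Omega(1/\Gamma)$. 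Feeding these parameter values ($\poly(K,\Gamma)$ for conditioning, $\Omega(1/\Gamma)$ relative separation, ambient dimension $R^2 \le (Kr)^2$, target accuracy $\epsilon \max_j\|\Mjstar\|_{\Frm}$) into the tensor-method sample complexity yields the $K^8 r^2 \epsilon^{-2}(\Gamma^{10} + \sigma^{10}/\min_k\|\Mkstar\|_{\Frm}^{10})\log n \log^3 N$ bound; the $\sigma^{10}$ term is the contribution of Gaussian noise to the higher-order empirical moments, and the $\log n \log^3 N$ factors come from the high-probability concentration of the third-order moment tensor and the final $1 - O(1/\log n)$ success probability.

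The main obstacle is the second step: carefully tracking how the approximation error in $(\Uhat,\Vhat)$ propagates into the separation and conditioning parameters of the induced regressors $\{\bbeta_k\}$, and confirming that $c_1$ small enough keeps all these quantities within constant factors of their ``oracle'' ($\Uhat = \Ustar$, $\Vhat = \Vstar$) values — since the tensor method is sensitive to near-degeneracies, one must ensure the perturbation cannot destroy linear independence or collapse a separation gap. A secondary technical point is matching the precise polynomial exponents in~\eqref{eq:NMLR} to whatever form of the tensor-method guarantee is imported in Appendix~\ref{sec:alg_MLR}; I would state that guarantee as a self-contained lemma there and then this proof becomes essentially a parameter-substitution plus the perturbation bookkeeping just described. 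Everything else — the exactness of the Gaussian covariates, the independence of $(\Uhat,\Vhat)$ from $\{\Ai,\yi\}$ via sample splitting — is immediate from the construction.
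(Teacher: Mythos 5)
Your high-level plan — reduce to a Gaussian mixed linear regression in dimension $R^2$, verify the conditioning/separation hypotheses on $\{\bbeta_k\}$ via Assumption~\ref{asp:weak_correlation} and the subspace perturbation bound, then plug into the tensor-method guarantee — is the right one and matches the paper's proof. The perturbation bookkeeping you sketch for $\|\bbeta_k\|_2$ and the pairwise inner products is essentially Claim~2 in the paper's proof.

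However, there is a genuine error in your Step~1: the reduced measurements do \emph{not} obey $y_i = \langle\ai,\bbeta_k\rangle + \zeta_i$ with noise variance $\sigma^2$. Writing $M_k^\star = \Uhat\Uhat^\top M_k^\star\Vhat\Vhat^\top + (M_k^\star - \Uhat\Uhat^\top M_k^\star\Vhat\Vhat^\top)$ shows that
\begin{equation*}
  y_i \;=\; \langle\ai,\bbeta_k\rangle + z_i + \zeta_i,\qquad z_i \coloneqq \big\langle \Ai,\; M_k^\star - \Uhat\Uhat^\top M_k^\star\Vhat\Vhat^\top\big\rangle,
\end{equation*}
so the effective noise is $\xi_i = z_i + \zeta_i$ with variance $\sigma_k^2 = \|M_k^\star - \Uhat\Uhat^\top M_k^\star\Vhat\Vhat^\top\|_{\Frm}^2 + \sigma^2$, which depends on $k$ and is strictly larger than $\sigma^2$. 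Your proposal silently drops $z_i$. Two things then need to be established that your argument skips: (i) that $\xi_i$ is still Gaussian and, crucially, \emph{independent} of $\ai$ — this is non-trivial and requires the isotropy of $\Ai$ together with independence of $(\Uhat,\Vhat)$ from $\Ai$, since $z_i$ and $\ai$ are both linear functionals of the same $\Ai$; and (ii) that one can feed this noisy, $k$-dependent-variance MLR instance into the tensor method. On point (ii), the guarantee of \cite{yi2016solving} that the paper imports is stated for \emph{noiseless} MLR; the paper gets around this by the augmentation trick, stacking $\betaktil = [\bbeta_k;\sigma_k]$ and $\aitil = [\ai;\xi_i/\sigma_k]$ so that $y_i = \langle\aitil,\betaktil\rangle$ exactly and the noiseless analysis applies verbatim to the augmented instance. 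If you wish to avoid the augmentation, you would have to prove (or cite) a noisy-MLR guarantee for the tensor method from scratch; as written, treating \cite{yi2016solving}'s theorem as a black box that ``handles noise'' is not warranted. The rest of your outline — the $\Gamma$, $K$, $R^2\le K^2 r^2$ parameter substitutions — is consistent with what the paper does.
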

 From now on, we shall assume without loss of generality that $\pi(\cdot)$
is an identity map (i.e.~$\pi(k)=k$) to simplify the presentation.
Our next result transfers the estimation error bounds for $\bm{U},\bm{V}$
and $\{\widehat{\bbeta}_{k}\}$ to that for $\{\bm{L}_{k}\bm{R}_{k}^{\top}\}$,
thus concluding the analysis of Stage 2; see Appendix \ref{subsec:proof_prop_stage2}
for a proof.
\begin{prop}
\label{prop:init} The estimates $\{\bL_{k},\bR_{k}\}_{k=1}^{K}$
computed in Lines~\ref{line:alg2_forloop}-\ref{line:alg2_LkRk} of
Algorithm \ref{alg:stage2} obey
\begin{equation}
\big\|\bL_{k}\bR_{k}^{\top}-\Mkstar\big\|_{\Frm}\le2\max\Big\{\distU,\distV\Big\}\big\|\Mkstar\big\|_{\Frm}+2\big\|\betakhat-\bbeta_{k}\big\|_{2}\label{eq:init_error}
\end{equation}
for all $1\leq k\leq K$.
\end{prop}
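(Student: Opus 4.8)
The plan is to bound $\|\bL_{k}\bR_{k}^{\top}-\Mkstar\|_{\Frm}$ by a single triangle inequality, inserting the two matrices $\Uhat\Skhat\Vhat^{\top}$ (whose rank-$r_{k}$ truncated SVD defines $\bL_{k}\bR_{k}^{\top}$) and $\Uhat\bS_{k}\Vhat^{\top}=\Uhat\Uhat^{\top}\Mkstar\Vhat\Vhat^{\top}$, and then estimating the three resulting discrepancies. For the truncation term: since $\rank(\Mkstar)=r_{k}$, the matrix $\Uhat\bS_{k}\Vhat^{\top}=\Uhat\Uhat^{\top}\Mkstar\Vhat\Vhat^{\top}$ has rank at most $r_{k}$; by construction (Line~\ref{line:alg2_svd} of Algorithm~\ref{alg:stage2}), $\bL_{k}\bR_{k}^{\top}=\bU_{k}\boldsymbol{\Sigma}_{k}\bV_{k}^{\top}$ is the best rank-$r_{k}$ Frobenius approximation of $\Uhat\Skhat\Vhat^{\top}$, so the Eckart--Young theorem gives $\|\bL_{k}\bR_{k}^{\top}-\Uhat\Skhat\Vhat^{\top}\|_{\Frm}\le\|\Uhat\bS_{k}\Vhat^{\top}-\Uhat\Skhat\Vhat^{\top}\|_{\Frm}$. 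Since $\Uhat,\Vhat$ have orthonormal columns, the latter equals $\|\bS_{k}-\Skhat\|_{\Frm}=\|\bbeta_{k}-\betakhat\|_{2}$; the same orthonormality also gives $\|\Uhat\Skhat\Vhat^{\top}-\Uhat\bS_{k}\Vhat^{\top}\|_{\Frm}=\|\betakhat-\bbeta_{k}\|_{2}$. (It is important to compare the truncated SVD against $\Uhat\bS_{k}\Vhat^{\top}$ rather than against $\Mkstar$ directly: the former keeps the constant in front of $\|\betakhat-\bbeta_{k}\|_{2}$ equal to $2$, whereas using $\Mkstar$ would inflate the $\max\{\distU,\distV\}$ term by a factor of $2$.)

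It then remains to bound the projection term $\|\Uhat\bS_{k}\Vhat^{\top}-\Mkstar\|_{\Frm}=\|\Uhat\Uhat^{\top}\Mkstar\Vhat\Vhat^{\top}-\Mkstar\|_{\Frm}$. Writing $\Uhat\Uhat^{\top}\Mkstar\Vhat\Vhat^{\top}-\Mkstar=\Uhat\Uhat^{\top}\Mkstar(\Vhat\Vhat^{\top}-\bI)+(\Uhat\Uhat^{\top}-\bI)\Mkstar$ and using $\|\Uhat\Uhat^{\top}\|=1$, this is at most $\|\Mkstar(\bI-\Vhat\Vhat^{\top})\|_{\Frm}+\|(\bI-\Uhat\Uhat^{\top})\Mkstar\|_{\Frm}$. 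Because $\col\{\Mkstar\}\subseteq\Ustar$ and $\row\{\Mkstar\}\subseteq\Vstar$ (recall $\Ustar,\Vstar$ from \eqref{eq:def_Ustar_Vstar}), we have $\Mkstar=\Ustar\Ustar^{\top}\Mkstar=\Mkstar\Vstar\Vstar^{\top}$, so these two terms are at most $\|\Vstar\Vstar^{\top}(\bI-\Vhat\Vhat^{\top})\|\,\|\Mkstar\|_{\Frm}$ and $\|(\bI-\Uhat\Uhat^{\top})\Ustar\Ustar^{\top}\|\,\|\Mkstar\|_{\Frm}$. Finally, $(\bI-\Uhat\Uhat^{\top})\Ustar\Ustar^{\top}=(\bI-\Uhat\Uhat^{\top})(\Ustar\Ustar^{\top}-\Uhat\Uhat^{\top})$ since $(\bI-\Uhat\Uhat^{\top})\Uhat\Uhat^{\top}=\boldsymbol{0}$, whence $\|(\bI-\Uhat\Uhat^{\top})\Ustar\Ustar^{\top}\|\le\|\Ustar\Ustar^{\top}-\Uhat\Uhat^{\top}\|=\distU$, and symmetrically the $\bV$ term is at most $\distV$. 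Hence $\|\Uhat\bS_{k}\Vhat^{\top}-\Mkstar\|_{\Frm}\le(\distU+\distV)\|\Mkstar\|_{\Frm}\le2\max\{\distU,\distV\}\|\Mkstar\|_{\Frm}$.

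Summing the three bounds, $\|\bL_{k}\bR_{k}^{\top}-\Mkstar\|_{\Frm}\le\|\bL_{k}\bR_{k}^{\top}-\Uhat\Skhat\Vhat^{\top}\|_{\Frm}+\|\Uhat\Skhat\Vhat^{\top}-\Uhat\bS_{k}\Vhat^{\top}\|_{\Frm}+\|\Uhat\bS_{k}\Vhat^{\top}-\Mkstar\|_{\Frm}\le2\|\betakhat-\bbeta_{k}\|_{2}+2\max\{\distU,\distV\}\|\Mkstar\|_{\Frm}$, which is exactly \eqref{eq:init_error}. There is no substantive obstacle here; the argument is pure linear algebra, the only mild subtleties being the choice of comparison matrix noted above and the elementary projection fact $\|(\bI-P)Q\|\le\|P-Q\|$ for orthogonal projections $P,Q$, used at the very end.
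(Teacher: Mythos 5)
Your proof is correct and follows essentially the same route as the paper's: the same triangle-inequality split into a subspace-projection error and a low-dimensional regression error, the same Eckart--Young observation (the paper bounds $\|\bL_k\bR_k^\top-\Uhat\Skhat\Vhat^\top\|_{\Frm}+\|\Uhat\Skhat\Vhat^\top-\Uhat\bS_k\Vhat^\top\|_{\Frm}\le 2\|\Uhat(\Skhat-\bS_k)\Vhat^\top\|_{\Frm}$ in one go), and the same $2\max\{\distU,\distV\}\|\Mkstar\|_{\Frm}$ bound on the projection term. The only cosmetic difference is in the intermediate decomposition of the projection term: the paper rewrites $\Mkstar=\Ustar\Ustar^{\top}\Mkstar\Vstar\Vstar^{\top}$ and compares $\Uhat\Uhat^{\top}\Mkstar\Vhat\Vhat^{\top}$ against it by telescoping one projection at a time, whereas you insert $\bI$ first and then invoke the same invariance together with the identity $(\bI-\Uhat\Uhat^{\top})\Ustar\Ustar^{\top}=(\bI-\Uhat\Uhat^{\top})(\Ustar\Ustar^{\top}-\Uhat\Uhat^{\top})$; both give the same constant.
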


\paragraph{Stage 3. }

The last result guarantees that Algorithm~\ref{alg:tsgd} --- when
suitably initialized --- converges linearly towards $\bm{M}_{k}^{\star}$
up to a certain error floor. Here $\bm{M}_{k}^{\star}$ is the closest
among $\{\bm{M}_{j}^{\star}\}_{1\leq j\leq K}$ to the point $\bm{L}^{0}(\bm{R}^{0})^{\top}$.
The proof can be found in Appendix \ref{subsec:proof_stage3}.
\begin{thm}
\label{thm:stage3} Consider the model (\ref{eq:mix_measurements_noisy})
under the assumptions in Section~\ref{subsec:models_assumptions}.
Suppose that the noise level obeys (\ref{eq:sigma_bound}). Choose
the step size $\eta$ and truncating fraction $\alpha$ such that
$0<\eta\le1.3/\pkhat$ and $0.6\pkhat\le\alpha\le0.8\pkhat$. Given
any $0<\delta<c_{0}/K$, if $\bL^{0}\in\R^{n_{1}\times r_{k}}$ and
$\bR^{0}\in\R^{n_{2}\times r_{k}}$ obey
\begin{equation}
\left\Vert \bL^{0}(\bR^{0})^{\top}-\Mkstar\right\Vert _{\Frm}\le c_{1}\left\Vert \Mkstar\right\Vert _{\Frm}\cdot\min\left\{ \frac{1}{\sqrt{r}\kappa},\frac{1}{K}\right\} ,\label{eq:local_region}
\end{equation}
then with probability at least $1-Ce^{-cn}$ the iterates of Algorithm
\ref{alg:tsgd} satisfy
\begin{align}
\left\Vert \bL^{t}(\bR^{t})^{\top}-\Mkstar\right\Vert _{\Frm} & \le\left(1-c_{2}\eta\pkhat\right)^{t}\left\Vert \bL^{0}(\bR^{0})^{\top}-\Mkstar\right\Vert _{\Frm}+C_{2}\max\left\{ K\sige\delta,\frac{K\sige^{2}}{\min_{j:j\neq k}\|\Mjstar-\Mkstar\|_{\Frm}}\right\} \label{eq:tsgd_error_floor}
\end{align}
for all $t\ge0$, provided that the sample size exceeds $N\ge C_{0}\frac{nrK}{\delta^{2}}\log N.$
Here, $0<c_{2}<1/4$ and $C,c,C_{2}>0$ are some universal constants,
and $c_{0},c_{1}>0$ (resp.~$C_{0}>0$) are some sufficiently small
(resp.~large) constants. 
\end{thm}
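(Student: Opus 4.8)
\textbf{Proof proposal for Theorem~\ref{thm:stage3}.}

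The plan is to analyze the ScaledTGD iteration by viewing it as a perturbation of the noiseless, correctly-labeled ScaledGD dynamics studied in \cite{tong2020accelerating}, and to control the two sources of perturbation: (i) the samples in $\Omega^t$ that are \emph{mislabeled} (i.e.\ not in $\Omegakstar$), and (ii) the additive Gaussian noise $\zeta_i$. I would track the progress using the scaled distance metric introduced in \cite{tong2020accelerating}, namely something like $\mathrm{dist}(\bL^t,\bR^t;\Lkstar,\Rkstar)$ built from $\|(\bL^t\bQ-\Lkstar)\Sigkstar^{1/2}\|_\Frm^2 + \|(\bR^t\bQ^{-\top}-\Rkstar)\Sigkstar^{1/2}\|_\Frm^2$ minimized over invertible alignment matrices $\bQ$, since this is the quantity with respect to which the preconditioned gradient contracts at a condition-number-free rate. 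The first step is to decompose the update into the ``ideal'' term $-\frac{\eta}{N}\sum_{i\in\Omegakstar}(\langle\Ai,\bL^t(\bR^t)^\top\rangle - \langle\Ai,\Mkstar\rangle)\Ai\bR^t((\bR^t)^\top\bR^t)^{-1}$ plus three error terms: the missing good samples $\Omegakstar\setminus\Omega^t$, the spurious bad samples $\Omega^t\setminus\Omegakstar$, and the noise contribution $\frac{\eta}{N}\sum_{i\in\Omega^t}\zeta_i\Ai\bR^t(\cdots)^{-1}$ (and symmetrically for the $\bR$-update).

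The crucial combinatorial/probabilistic step --- and the one I expect to be the main obstacle --- is controlling the index set $\Omega^t$, which is \emph{data-dependent} and hence statistically coupled with the $\Ai$'s. The argument I would pursue: since $\bL^t(\bR^t)^\top$ stays in the local region $\|\bL^t(\bR^t)^\top - \Mkstar\|_\Frm \lesssim \|\Mkstar\|_\Frm \cdot \min\{1/(\sqrt r\kappa), 1/K\}$ (which must be proved inductively alongside the contraction), for a good sample $i\in\Omegakstar$ the residual $|\langle\Ai,\bL^t(\bR^t)^\top\rangle - y_i| = |\langle\Ai,\bL^t(\bR^t)^\top - \Mkstar\rangle - \zeta_i|$ concentrates around a small value, whereas for a bad sample $i\in\Omegajstar$, $j\neq k$, the residual $|\langle\Ai,\bL^t(\bR^t)^\top - \Mjstar\rangle - \zeta_i|$ concentrates around $\|\Mjstar - \Mkstar\|_\Frm \gtrsim \|\Mkstar\|_\Frm$ (using the separation consequence of Assumption~\ref{asp:weak_correlation}). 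Because $\alpha \le 0.8 p_k$ is strictly below $p_k$, the $\alpha$-quantile threshold $\tau_t$ falls safely below the typical bad-sample residual, so with high probability $\Omega^t$ contains \emph{only} good samples; moreover $\Omega^t$ captures at least a constant fraction (at least $0.6 p_k$) of the good samples, and in fact the ones with small residual. The technical work here is to make these statements uniform over all $(\bL,\bR)$ in the local region — I would set up an $\varepsilon$-net over the low-rank manifold near $\Mkstar$, establish concentration of $\frac1N\sum_i \ind\{|\langle\Ai, \bM - \Mkstar\rangle - \zeta_i| \le \tau\}$ and of the truncated gradient sums at each net point via standard sub-exponential/sub-Gaussian tail bounds and matrix Bernstein, then union-bound; the $nrK/\delta^2 \cdot \log N$ sample size is what this net argument consumes.

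Once $\Omega^t\subseteq\Omegakstar$ is established, the ``missing good samples'' error reduces to the difference between the full sum over $\Omegakstar$ and the partial sum over $\Omega^t$; I would argue this is benign because the omitted samples are exactly those with the \emph{largest} residuals, and in the ScaledGD analysis the one-step contraction has slack $\eta p_k$ that can absorb an $O(\eta \cdot (\text{small fraction}) \cdot \|\bL^t(\bR^t)^\top - \Mkstar\|_\Frm)$ term — this is where the restriction $\alpha \ge 0.6 p_k$ (keeping most of the mass) and $\eta \le 1.3/p_k$ enter. The noise term is handled separately: $\frac1N\sum_{i\in\Omega^t}\zeta_i\Ai\bR^t(\cdots)^{-1}$ has its Frobenius norm bounded, via sub-Gaussian concentration over the net, by $O(\sigma\sqrt{nr/N})$ after accounting for the rescaling, plus a \emph{bias} term that does not vanish: even when $\bL^t(\bR^t)^\top = \Mkstar$ exactly, the event $\{|\zeta_i|\le\tau_t\}$ is correlated with $\zeta_i$, producing a systematic offset of order $\sigma^2/\|\Mjstar-\Mkstar\|_\Frm$ (this is precisely the source of the second term in the error floor). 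Assembling these: plug the three error bounds into the ScaledGD one-step inequality from \cite{tong2020accelerating} to get $\mathrm{dist}_{t+1} \le (1 - c_2\eta p_k)\,\mathrm{dist}_t + C_2\max\{K\sigma\delta,\, K\sigma^2/\min_{j\neq k}\|\Mjstar-\Mkstar\|_\Frm\}$, verify the local region is preserved (so the induction closes), convert $\mathrm{dist}_t$ to $\|\bL^t(\bR^t)^\top - \Mkstar\|_\Frm$ up to the usual constants, and unroll the recursion to obtain~\eqref{eq:tsgd_error_floor}.
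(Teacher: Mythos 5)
Your proposal has a genuine flaw at the step you flag as the crux. You claim that "with high probability $\Omega^t$ contains \emph{only} good samples" because the residual of a bad sample $i\in\Omegajstar$, $j\neq k$, "concentrates around $\|\Mjstar-\Mkstar\|_\Frm$." That is not what happens: for $i\in\Omegajstar$ the residual is $\langle\Ai,\bL^t(\bR^t)^\top-\Mjstar\rangle-\zeta_i$, a \emph{mean-zero} Gaussian with standard deviation $\approx\|\Mjstar-\Mkstar\|_\Frm$, so a fraction of order $\tau_t/\|\Mjstar-\Mkstar\|_\Frm$ of the bad samples falls below the threshold $\tau_t\asymp\sqrt{\|\Delone\|_\Frm^2+\sigma^2}$. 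With $N\gtrsim nrK\delta^{-2}\log N$, this is a large expected \emph{number} of bad samples leaking into $\Omega^t$, not zero; no net/union-bound argument will make it zero uniformly. This is not a minor technicality — the persistent admixture of mislabeled samples is precisely what produces the non-vanishing error floor $K\sigma^2/\min_{j\neq k}\|\Mjstar-\Mkstar\|_\Frm$, whereas your proposal attributes that term to a noise–truncation correlation on the good samples. (That mechanism alone does not give this floor; with $\Ai$ independent of $\zeta_i$, and the indicator depending on $\langle\Ai,\Delk\rangle-\zeta_i$, the population-level gradient over $\Omegakstar$ is exactly $w_k\Delk$ with no extra bias.)

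The paper's proof avoids this trap: it never tries to show $\Omega^t\subseteq\Omegakstar$. Instead it rewrites the truncated sum over $\Omega^t$ as a sum over \emph{all} $N$ samples with indicator weights $\ind(\langle\Ai,\bL\bR^\top\rangle-\yi;\tau)$, grouped by true label. The population-level weight for component $j$ is $w_j=w(\tau/\sqrt{\|\Delta_j\|_\Frm^2+\sigma^2})$, so the mislabeled contribution is $\Delmis=\sum_{j\neq k}p_jw_j\Delta_j$. Using Fact~\ref{fact:w_quadratic} (the quadratic monotonicity $w(x)/w(y)\le x^2/y^2$), one gets $w_j\|\Delta_j\|_\Frm\lesssim(\|\Delta_k\|_\Frm^2+\sigma^2)/\|\Mjstar-\Mkstar\|_\Frm$, giving the error floor. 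The finite-sample deviation is absorbed into a truncated-RIP term $\Deltrip$ controlled by Lemma~\ref{lem:TRIP} and the quantile Lemma~\ref{lem:quantile}, which supply exactly the uniform-over-$(\bL,\bR)$ concentration you correctly anticipate is needed, but for the truncated sums rather than for the support of $\Omega^t$. The per-step contraction then comes from comparing to the ScaledGD analysis of \cite{tong2020accelerating} applied to the population update with effective step $\eta p_k w_k$ (so $w_k<1$, rather than the ``missing good samples'' correction you propose), and Step 3 shows the floor is preserved once reached. To repair your proposal you would need to replace the ``$\Omega^t$ is pure'' claim with an argument that explicitly tolerates and bounds the contaminant mass, which is precisely what the $w_j$-weighted population decomposition accomplishes.
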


\paragraph{Putting pieces together: proof of Theorems \ref{thm:noiseless} and
\ref{thm:noisy}.}

With the above performance guarantees in place, we are ready to establish
the main theorems. Note that due to sample splitting in Algorithm~\ref{alg:full},
we shall apply Theorems \ref{thm:stage1} and \ref{thm:stage3} to
the dataset $\{\Ai,\yi\}_{1\le i\le N}$, and Theorem \ref{thm:stage2}
to the dataset $\{\Ai',\yi'\}_{1\le i\le\NMLR}$.~Set 
\begin{align*}
\delta & \le c_{3}\frac{1}{K\sqrt{r}\kappa\Gamma}\min\left\{ \frac{1}{\sqrt{r}\kappa},\frac{1}{K\Gamma^{2}}\right\} ,\qquad\text{and}\qquad\epsilon\le c_{4}\frac{1}{\Gamma}\min\left\{ \frac{1}{\sqrt{r}\kappa},\frac{1}{K}\right\} ,
\end{align*}
for some sufficiently small constants $c_{3},c_{4}>0$ in Theorems
\ref{thm:stage1} and \ref{thm:stage2}. These choices --- in conjunction
with our assumption on $\sigma$~in Theorem \ref{thm:noisy}, as
well as Proposition \ref{prop:init} --- guarantee that the initialization
$\bm{L}^{0}(\bm{R}^{0})^{\top}$ lies in the neighborhood of $\bm{M}_{k}^{\star}$
as required by (\ref{eq:local_region}). This allows us to invoke
Theorem~\ref{thm:stage3} to conclude the proof of Theorem \ref{thm:noisy}.
Finally, Theorem \ref{thm:noiseless} follows by simply setting the
noise level $\sigma=0$ in Theorem \ref{thm:noisy}.

\section{Discussion \label{sec:discussion}}

This paper develops a three-stage algorithm for the mixed low-rank
matrix sensing problem, which is provably efficient in terms of both
sample and computational complexities. Having said this, there are
numerous directions that are worthy of further investigations; we
single out a few in the following. 

To begin with, while our required sample complexity scales linearly
(and optimally) w.r.t.~the matrix dimension $\max\{n_{1},n_{2}\}$,
its dependency on other salient parameters --- e.g.~the number $K$
of components, the ranks $\{r_{k}\}$, {and the condition numbers $\{\kappa(\Mkstar)\}$} of the ground-truth matrices
$\{\bm{M}_{k}^{\star}\}$ --- is likely sub-optimal {(for example, in vanilla matrix sensing, the nonconvex method in \cite{tong2020accelerating} need only $\tilde{O}(n r^2 \kappa^2)$ samples)}. Improving the
sample efficiency in these aspects is certainly an interesting direction
to explore. In addition, in the presence of \emph{random} noise, the
performance of ScaledTGD saturates after the number of samples exceeds
a certain threshold. It would be helpful to investigate other algorithms
like expectation-maximization to see whether there is any performance
gain one can harvest. Furthermore, our current theory builds upon
the Gaussian designs $\{\bm{A}_{i}\}$, which often does not capture
the practical scenarios. It is of great practical importance to develop
efficient algorithms that can accommodate a wider range of design
matrices $\{\bm{A}_{i}\}$ --- for instance, the case of mixed low-rank
matrix completion. Last but not least, it would be of interest to
study more general meta-learning settings in the presence of both
light and heavy tasks (beyond the current single-sample setting) \cite{kong2020meta},
and see how sample complexities can be reduced (compared to meta-learning
for mixed regression) by exploiting such low-complexity structural
priors .

\section*{Acknowledgements}

Y.~Chen is supported in part by the grants AFOSR YIP award FA9550-19-1-0030,
ONR N00014-19-1-2120, ARO YIP award W911NF-20-1-0097, ARO W911NF-18-1-0303,
NSF CCF-1907661, IIS-1900140 and DMS-2014279, and the Princeton SEAS
Innovation Award. H.~V.~Poor is supported in part by NSF CCF-1908308,
and in part by a Princeton Schmidt Data-X Research Award. We would
like to thank Qixing Huang who taught us the symmetry synchronoziation
problem in computer vision that largely inspired this research.

\appendix

\section{The tensor method for mixed linear regression \label{sec:alg_MLR}}

This section reviews the tensor method proposed in \cite{yi2016solving}
for solving mixed linear regression. For simplicity of exposition,
we consider the noiseless case where we have access to the samples
$\{\ai,\yi\}_{1\le i\le N}$ obeying
\begin{equation}
y_{i}=\begin{cases}
\langle\ai,\betaonestar\rangle, & \text{if }i\in\Omegaonestar,\\
\dots & \dots\\
\langle\ai,\betaKstar\rangle, & \text{if }i\in\OmegaKstar.
\end{cases}\label{eq:mlr_meas}
\end{equation}
Our goal is to recover the ground truths $\betakstar\in\R^{d},1\le k\le K$,
without knowing the index sets $\{\Omegakstar\}$.

\paragraph{Notation for tensors. }

For two matrices $\bA$ and $\bB$, denote by $\boldsymbol{A}\otimes\boldsymbol{B}$
their Kronecker product, and let $\boldsymbol{A}^{\otimes3}$ represent
$\boldsymbol{A}\otimes\boldsymbol{A}\otimes\boldsymbol{A}$. For a
symmetric tensor $\bT\in\R^{d\times d\times d}$ and matrices $\bA\in\R^{d\times d_{1}},\bB\in\R^{d\times d_{2}},\bC\in\R^{d\times d_{3}}$,
let $\bT(\bA,\bB,\bC)\in\R^{d_{1}\times d_{2}\times d_{3}}$ denote
the multi-linear matrix multiplication such that
\[
\big[\bT(\bA,\bB,\bC)\big]_{m,n,p}=\sum_{1\le i,j,k\le d}T_{i,j,k}A_{i,m}B_{j,n}C_{k,p},\quad1\le m\le d_{1},\,1\le n\le d_{2},\,1\le p\le d_{3}.
\]
In addition, let $\|\bT\|$ stand for the operator norm of $\bT$,
namely, $\|\bT\|\coloneqq\sup_{\bx:\|\bx\|_{2}=1}\big|\bT(\bx,\bx,\bx)\big|$.

\begin{algorithm}[tbp]
\DontPrintSemicolon
\caption{The tensor method for mixed linear regression \cite[Algorithm~1]{yi2016solving}} \label{alg:tensor}
{\bf Input:} {$\{\ba_i, y_i\}_{1 \le i \le N}$}. \\
Randomly split the samples into two disjoint sets $\{\ai,\yi\}_{1 \le i \le N_1},\{\ai',\yi'\}_{1 \le i \le N_2}$ such that $N=N_1+N_2$, by assigning each sample to either dataset with probability $0.5$. \\
Compute $\mzero\gets\frac{1}{N_1}\sum_{i=1}^{N_1}y_{i}^{2},\boldsymbol{m}_{1}\gets\frac{1}{6N_2}\sum_{i=1}^{N_2}{y_{i}'}^{3}\ai'$. \\
Compute $\Mtwo\gets\frac{1}{2N_1}\sum_{i=1}^{N_1}y_{i}^{2}\ai\ai^{\top}-\frac{1}{2}\mzero\bI_{d},\Mthree\gets\frac{1}{6N_2}\sum_{i=1}^{N_2}{y_{i}'}^{3}{\ai'}^{\otimes3}-\Tcal(\mone)$, where $\Tcal$ is defined in~(\ref{eq:Tcal}). \\
Denote the rank-$K$ SVD of $\bM_2$ as $\bU_{2}\boldsymbol{\Sigma}_{2}\bV_{2}^{\top}$, and compute the whitening matrix $\bW\gets\bU_{2}\boldsymbol{\Sigma}_{2}^{-1/2}$. \\
Compute $\widetilde{\bM}_{3}\gets\bM_{3}(\bW,\bW,\bW)$. \\
Run the robust tensor power method \cite[Algorithm 2]{yi2016solving} on $\widetilde{\bM}_{3}$ to obtain $K$ eigenvalue/eigenvector pairs $\{\widetilde{\omega}_{k},\widetilde{\bbeta}_{k}\}_{1 \le k \le K}$. \\
Compute $\omega_{k}\gets1/\widetilde{\omega}_{k}^{2},\bbeta_{k}\gets\widetilde{\omega}_{k}\bW(\bW^{\top}\bW)^{-1}\widetilde{\bbeta}_{k}, 1\le k\le K$. \\
{\bf Output:} {$\{\omega_k,\bbeta_{k}\}_{1 \le k \le K}$}.
\end{algorithm}

\paragraph{The tensor method: algorithm and rationale. }

We summarize the tensor method in Algorithm \ref{alg:tensor}, which
is mostly the same as \cite[Algorithm 1]{yi2016solving} and included
here for completeness. 

In the following, we explain the intuitions behind its algorithmic
design. Given data $\{\ai,\yi\}_{1\le i\le N}$ generated according
to (\ref{eq:mlr_meas}), we compute the following empirical moments:
\begin{align*}
\mzero\coloneqq\frac{1}{N}\sum_{i=1}^{N}y_{i}^{2}\in\R, & \quad\boldsymbol{m}_{1}\coloneqq\frac{1}{6N}\sum_{i=1}^{N}y_{i}^{3}\ai\in\R^{d},\\
\Mtwo\coloneqq\frac{1}{2N}\sum_{i=1}^{N}y_{i}^{2}\ai\ai^{\top}-\frac{1}{2}\mzero\bI_{d}\in\R^{d\times d}, & \quad\Mthree\coloneqq\frac{1}{6N}\sum_{i=1}^{N}y_{i}^{3}\ai^{\otimes3}-\Tcal(\mone)\in\R^{d\times d\times d};
\end{align*}
here, letting $\{\be_{i}\}_{1\le i\le d}$ be the canonical basis
of $\R^{d}$, we define the operator $\Tcal(\cdot):\mathbb{R}^{d}\mapsto\mathbb{R}^{d\times d\times d}$
as
\begin{equation}
\Tcal(\boldsymbol{m})\coloneqq\sum_{i=1}^{d}\left(\boldsymbol{m}\otimes\be_{i}\otimes\be_{i}+\be_{i}\otimes\boldsymbol{m}\otimes\be_{i}+\be_{i}\otimes\be_{i}\otimes\boldsymbol{m}\right),\quad\text{where }\boldsymbol{m}\in\R^{d}.\label{eq:Tcal}
\end{equation}
The key observation is that: under the Gaussian design (i.e.~$\ai\iid\Ncal(\boldsymbol{0},\bI_{d})$),
$\bm{M}_{2}$ and $\bm{M}_{3}$ reveal crucial second-order and third-order
moments of $\{\bm{\beta}_{k}^{\star}\}$ since (cf.~\cite[Lemma 1]{yi2016solving})
\[
\E[\bM_{2}]=\sumK\pkhat\betakstar(\betakstar)^{\top}\quad\text{and}\quad\E[\bM_{3}]=\sumK\pkhat(\betakstar)^{\otimes3},
\]
where we recall $\pkhat=|\Omegakstar|/N$. This motivates one to apply
tensor decomposition \cite{anandkumar2014tensor} on $\bM_{2}$ and
$\bM_{3}$ in order to estimate $\{\betakstar\}$ and $\{\pkhat\}$.
Indeed, the estimates $\{\bbeta_{k}\}$ and $\{\omega_{k}\}$ returned
by Algorithm~\ref{alg:tensor} serve as our estimates of $\{\betakstar\}$
and $\{\pkhat\}$, respectively. 
\begin{rem}
[Sample splitting] Similar to \cite{yi2016solving}, we assume that
$m_{0}$ and $\bM_{2}$ are computed using one set of data, while
$\bM_{1}$ and $\boldsymbol{M}_{3}$ are obtained based on another
\emph{independent} set of samples. This sample splitting strategy
ensures that the whitening matrix $\bW$ is independent of $\bM_{3}$,
thus simplifying theoretical analysis. 
\end{rem}

\section{\label{sec:proof_for_analysis}Proofs for Section \ref{sec:analysis}}

For notational simplicity, we use $\disthat$ throughout to denote
the following subspace estimation error:
\begin{equation}
\disthat\coloneqq\max\Big\{\distU,\distV\Big\}.\label{eq:def_disthat}
\end{equation}

\subsection{\label{subsec:proof_stage1}Proof of Theorem \ref{thm:stage1}}

The proof is decomposed into two steps: we first develop an upper
bound $\|\bY-\E[\bY]\|$ (where $\bY$ is as defined in Algorithm~\ref{alg:spectral}),
and then combine this with Wedin's Theorem to control the subspace
distance $\disthat$. 

\paragraph{Step 1: controlling $\|\bY-\protect\E[\bY]\|$. }

We start by decomposing $\bY$ into $\bY=\bY_{A}+\bY_{\zeta}$, where
we define
\[
\bY_{A}\coloneqq\sumK\frac{\pkhat}{|\Omegakstar|}\sumomgk\langle\Ai,\Mkstar\rangle\Ai\qquad\text{and}\qquad\bY_{\zeta}\coloneqq\frac{1}{\Nall}\sumall\ei\Ai.
\]
Lemma \ref{lem:TRIP} asserts that: with probability at least $1-Ce^{-cn}$
for some universal constants $C,c>0$, we have 
\[
\bigg\|\frac{1}{|\Omegakstar|}\sumomgk\langle\Ai,\Mkstar\rangle\Ai-\Mkstar\bigg\|\le\delta\|\Mkstar\|_{{\rm F}},\quad1\le k\le K,
\]
as long as the sample size $N$ satisfies (\ref{eq:N_stage1}), which
together with the triangle inequality further implies 
\[
\|\bY_{A}-\E[\bY_{A}]\|\leq\sum_{k=1}^{K}p_{k}\bigg\|\frac{1}{|\Omegakstar|}\sumomgk\langle\Ai,\Mkstar\rangle\Ai-\Mkstar\bigg\|\leq\delta\sumK\pkhat\|\Mkstar\|_{{\rm F}}\le\delta\max_{1\le k\le K}\|\Mkstar\|_{\Frm}.
\]
In addition, \cite[Lemma 1.1]{candes2011tight} reveals that with
probability at least $1-Ce^{-cn}$ for some constants $C,c>0$, 
\[
\|\bY_{\zeta}\|\lesssim\sige\sqrt{\frac{n}{N}}\lesssim\sigma\frac{\delta}{\sqrt{Kr}}
\]
holds under the sample size condition (\ref{eq:N_stage1}). Given
that $\E[\bY_{A}]=\E[\bY]=\sum_{k}\pkhat\Mkstar$, we have established
the existence of some universal constant $C_{1}>0$ such that
\begin{equation}
\|\bY-\E[\bY]\|\le\|\bY_{A}-\E[\bY_{A}]\|+\|\bY_{\zeta}\|\le C_{1}\delta\left(\max_{1\le k\le K}\|\Mkstar\|_{\Frm}+\frac{\sigma}{\sqrt{Kr}}\right)\eqqcolon\Delta.\label{eq:Y_perturbation}
\end{equation}

\paragraph{Step 2: controlling $\protect\disthat$. }

Before embarking on controlling $\disthat$, we make the following
claim. 

\begin{itclaim} \label{claim:eigengap}

Under the assumptions of Theorem \ref{thm:stage1}, we have\begin{subequations}
\begin{equation}
\col\left\{ \sumK\pkhat\Mkstar\right\} =\col\Big\{\left[\Uonestar,\dots,\UKstar\right]\Big\},\quad\row\left\{ \sumK\pkhat\Mkstar\right\} =\col\Big\{\left[\Vonestar,\dots,\VKstar\right]\Big\},\quad R=\sumK r_{k},\label{eq:full_col_rank}
\end{equation}
\begin{equation}
\text{and}\qquad\sigma_{R}\left(\sumK\pkhat\Mkstar\right)\gtrsim\frac{1}{K}\min_{k}\sigma_{r_{k}}(\Mkstar).\label{eq:eigengap}
\end{equation}
\end{subequations}

\end{itclaim}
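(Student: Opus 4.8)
The plan is to establish the two assertions in turn, both leveraging Assumption~\ref{asp:weak_correlation} (weak correlation of the column/row spaces) to argue that the block-diagonal structure exhibited in \eqref{eq:EY} is essentially preserved despite the small off-diagonal overlaps. For the rank/span statement \eqref{eq:full_col_rank}, recall from \eqref{eq:EY} that $\sumK \pkhat \Mkstar = \bar\bU \bar\bSigma \bar\bV^\top$ where $\bar\bU = [\Uonestar,\dots,\UKstar] \in \R^{n_1 \times (\sumK r_k)}$, $\bar\bV = [\Vonestar,\dots,\VKstar] \in \R^{n_2 \times (\sumK r_k)}$, and $\bar\bSigma = \mathrm{blkdiag}(\ponehat\Sigonestar,\dots,\pKhat\SigKstar)$ is invertible (all $\Sigkstar$ have full rank $r_k$ and $\pkhat>0$). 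Thus it suffices to show $\bar\bU$ and $\bar\bV$ each have full column rank $R \coloneqq \sumK r_k$; then $\col\{\sumK\pkhat\Mkstar\} = \col\{\bar\bU\}$, $\row\{\sumK\pkhat\Mkstar\} = \col\{\bar\bV\}$, and the rank equals $R$. First I would write $\bar\bU^\top \bar\bU = \bI_R + \bE$ where $\bE$ is the block matrix whose $(i,j)$ block ($i\neq j$) is $\Uistar{}^\top\Ujstar$ and whose diagonal blocks vanish (each $\Ukstar$ has orthonormal columns). By the incoherence definition \eqref{eq:incoherence-defn}, each off-diagonal block has Frobenius norm at most $\mu r/\sqrt{n_1}$, so $\|\bE\| \le \|\bE\|_\Frm \le \sqrt{K(K-1)}\,\mu r/\sqrt{n_1} \le K\mu r/\sqrt{n_1}$, which is at most $1/2$ by Assumption~\ref{asp:weak_correlation}. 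Hence $\bar\bU^\top\bar\bU \succeq \tfrac12 \bI_R$ is invertible, so $\bar\bU$ has full column rank; the same argument applies to $\bar\bV$ using the $\sqrt{n_2}$ bound.

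For the singular-value lower bound \eqref{eq:eigengap}, I would use $\sigma_R(\bar\bU\bar\bSigma\bar\bV^\top) \ge \sigma_{\min}(\bar\bU)\,\sigma_{\min}(\bar\bSigma)\,\sigma_{\min}(\bar\bV)$, valid since all three factors have full column rank $R$. From the previous paragraph, $\sigma_{\min}(\bar\bU) = \sqrt{\sigma_{\min}(\bar\bU^\top\bar\bU)} \ge 1/\sqrt{2}$ and likewise $\sigma_{\min}(\bar\bV)\ge 1/\sqrt2$. For the middle factor, $\bar\bSigma$ is block diagonal with blocks $\pkhat\Sigkstar$, so $\sigma_{\min}(\bar\bSigma) = \min_k \pkhat \,\sigma_{r_k}(\Mkstar) \gtrsim \tfrac1K \min_k \sigma_{r_k}(\Mkstar)$ by the well-balancedness assumption $\pkhat \asymp 1/K$ in \eqref{eq:balanced_pk}. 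Combining these three bounds yields $\sigma_R(\sumK\pkhat\Mkstar) \ge \tfrac12 \min_k \pkhat\sigma_{r_k}(\Mkstar) \gtrsim \tfrac1K \min_k \sigma_{r_k}(\Mkstar)$, as claimed.

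The only genuinely delicate point is the inequality $\sigma_R(\bA\bB\bC) \ge \sigma_{\min}(\bA)\sigma_{\min}(\bB)\sigma_{\min}(\bC)$ for non-square factors; I would justify it by noting that for any unit vector $\bx$, $\|\bA\bB\bC\bx\|_2 \ge \sigma_{\min}(\bA)\|\bB\bC\bx\|_2 \ge \sigma_{\min}(\bA)\sigma_{\min}(\bB)\|\bC\bx\|_2 \ge \sigma_{\min}(\bA)\sigma_{\min}(\bB)\sigma_{\min}(\bC)$, where $\sigma_{\min}(\cdot)$ denotes the smallest \emph{nonzero} singular value and each step uses that the relevant factor has full column rank so that its action on the image of the next factor is bounded below. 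Everything else is a routine application of $\|\cdot\|\le\|\cdot\|_\Frm$, Weyl/Gershgorin-type bounds on $\bar\bU^\top\bar\bU$, and the two standing assumptions. I expect no real obstacle here; the argument is short once the block structure in \eqref{eq:EY} and the incoherence bound \eqref{eq:incoherence-defn} are in hand.
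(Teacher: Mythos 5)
Your proof is correct and Part 1 (the rank/span statement) is essentially the paper's argument: both bound $\|\bar\bU^\top\bar\bU - \bI_R\|_{\Frm}$ by $\le 1/2$ via the incoherence parameter and Assumption~\ref{asp:weak_correlation}, then invoke a Weyl-type perturbation bound. For Part 2 your route differs from the paper's. You apply a singular-value product inequality directly to the rectangular factorization $\bar\bU\bar\bSigma\bar\bV^\top$, whereas the paper first takes compact SVDs $\bar\bU = \bU_L\boldsymbol{\Sigma}_L\bV_L^\top$ and $\bar\bV = \bU_R\boldsymbol{\Sigma}_R\bV_R^\top$, peels off the outer orthonormal factors $\bU_L,\bU_R$ (which leave singular values unchanged), and is left with a product of five $R\times R$ \emph{square} matrices, for which $\sigma_R(\bA\bB)\ge\sigma_R(\bA)\sigma_R(\bB)$ is the standard fact. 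Both get the same bound $\sigma_R(\sumK p_k\Mkstar)\ge\tfrac12\min_k p_k\sigma_{r_k}(\Mkstar)$. The paper's reduction-to-square trick avoids the subtlety you flag; your direct approach is fine but the justification as written is not quite right: the chain $\|\bA\bB\bC\bx\|\ge\cdots\ge\sigma_{\min}(\bA)\sigma_{\min}(\bB)\sigma_{\min}(\bC)$ is \emph{false} for a generic unit vector $\bx\in\R^{n_2}$ (take $\bx\in\ker(\bar\bV^\top)$, nontrivial since $n_2>R$, and the left side vanishes). The clean way to salvage your route is to invoke the Courant--Fischer characterization $\sigma_R(\bM)=\max_{\dim S=R}\min_{\bx\in S,\|\bx\|=1}\|\bM\bx\|_2$, choose $S=\row(\bar\bV^\top)=\col(\bar\bV)$, and then run your chain of inequalities only for $\bx\in S$; on that subspace $\|\bar\bV^\top\bx\|\ge\sigma_R(\bar\bV)\|\bx\|$ does hold, and the remaining two factors ($\bar\bSigma$ invertible, $\bar\bU$ full column rank) give lower bounds for all inputs. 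With that repair the argument is airtight.
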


With this claim in place, we are ready to apply Wedin's Theorem \cite{wedin1972perturbation}
to obtain 
\begin{align}
\disthat & \le\frac{\|\bY-\E[\bY]\|}{\sigma_{R}(\E[\bY])-\|\bY-\E[\bY]\|}\le\frac{\Delta}{\sigma_{R}(\E[\bY])-\Delta}\le\frac{2\Delta}{\sigma_{R}(\E[\bY])}=2C_{1}\frac{\delta\left(\max_{k}\|\Mkstar\|_{\Frm}+\frac{\sigma}{\sqrt{Kr}}\right)}{\sigma_{R}\left(\sum_{k}\pkhat\Mkstar\right)},\label{eq:tmp_disthat}
\end{align}
with the proviso that $\Delta$ defined in (\ref{eq:Y_perturbation})
obeys $\Delta\leq\frac{1}{2}\sigma_{R}(\E[\bY])$. On the other hand,
if instead one has $\Delta>\frac{1}{2}\sigma_{R}(\E[\bY])$, then
we claim that (\ref{eq:tmp_disthat}) trivially holds; this can be
seen by observing that $\disthat\leq1$, while the right-hand side
of (\ref{eq:tmp_disthat}) is greater than $1$ if $\Delta>\frac{1}{2}\sigma_{R}(\E[\bY])$.
Finally, Claim~\ref{claim:eigengap} tells us that 
\[
\sigma_{R}\left(\sum_{k}\pkhat\Mkstar\right)\gtrsim\frac{1}{K}\min_{k}\sigma_{r_{k}}(\Mkstar)\gtrsim\frac{1}{K\sqrt{r}\kappa}\min_{k}\|\Mkstar\|_{\Frm}.
\]
Substituting this relation into (\ref{eq:tmp_disthat}) immediately
leads to the advertised bound (\ref{eq:disthat}) in Theorem~\ref{thm:stage1}.
\begin{proof}
[Proof of Claim \ref{claim:eigengap}] Recall that we can write $\sum_{k}\pkhat\Mkstar$
in terms of $\{\Ukstar,\Sigkstar,\Vkstar\}$, in the form of (\ref{eq:EY}).
Therefore, to prove (\ref{eq:full_col_rank}), it suffices to show
that $\min\{\sigma_{R'}([\Uonestar,\dots,\UKstar]),\sigma_{R'}([\Vonestar,\dots,\VKstar])\}\ge1/\sqrt{2}$,
where $R'\coloneqq\sum_{k}r_{k}$. We only prove this for $\sigma_{R'}([\Uonestar,\dots,\UKstar])$,
since the proof for $\sigma_{R'}([\Vonestar,\dots,\VKstar])$ is identical.
Denoting $\bW\coloneqq[\Uonestar,\dots,\UKstar]$ for notational convenience,
we have
\[
\bW^{\top}\bW=\begin{bmatrix}\Uonestar^{\top}\\
\vdots\\
\UKstar^{\top}
\end{bmatrix}\begin{bmatrix}\Uonestar & \dots & \UKstar\end{bmatrix}=\begin{bmatrix}\bI_{r_{1}} & \Uonestar^{\top}\Utwostar & \dots & \Uonestar^{\top}\UKstar\\
\Utwostar^{\top}\Uonestar & \bI_{r_{2}} & \ddots & \vdots\\
\vdots & \ddots & \ddots & \vdots\\
\UKstar^{\top}\Uonestar & \dots & \dots & \bI_{r_{K}}
\end{bmatrix}.
\]
This together with Assumption \ref{asp:weak_correlation} gives 
\[
\|\bW^{\top}\bW-\bI_{R'}\|_{\Frm}^{2}=\sum_{i\neq j}\|{\Uistar}^{\top}\Ujstar\|_{\Frm}^{2}\le K^{2}\left(\frac{1}{2K}\right)^{2}\le\frac{1}{4}.
\]
Apply Weyl's inequality to obtain
\[
\sigma_{R'}(\bW^{\top}\bW)\ge1-\|\bW^{\top}\bW-\bI_{R'}\|\ge1-\|\bW^{\top}\bW-\bI_{R'}\|_{\Frm}\ge\frac{1}{2},
\]
thus indicating that $\sigma_{R'}(\bW)=\sqrt{\sigma_{R'}(\bW^{\top}\bW)}\ge1/\sqrt{2}$.
This completes the proof of (\ref{eq:full_col_rank}).

Next, we turn attention to (\ref{eq:eigengap}). Denote the SVD of
$[\Uonestar,\dots,\UKstar]$ (resp.~$[\Vonestar,\dots,\VKstar]$)
as $\bU_{\mathsf{left}}\boldsymbol{\Sigma}_{\mathsf{left}}\bV_{\mathsf{left}}^{\top}$
(resp.~$\bU_{\mathsf{right}}\boldsymbol{\Sigma}_{\mathsf{right}}\bV_{\mathsf{right}}^{\top}$),
where $\bV_{\mathsf{left}}$ (resp.~$\bV_{\mathsf{right}}$) is a
$R\times R$ orthonormal matrix. Substitution into (\ref{eq:EY})
yields
\[
\sum_{k=1}^{K}\pkhat\Mkstar=\bU_{\mathsf{left}}\boldsymbol{\Sigma}_{\mathsf{left}}\bV_{\mathsf{left}}^{\top}\mathsf{diag}\big(\{\pkhat\Sigkstar\}_{1\leq k\leq K}\big)\bV_{\mathsf{right}}\boldsymbol{\Sigma}_{\mathsf{right}}\bU_{\mathsf{right}}^{\top},
\]
where $\mathsf{diag}\big(\{\pkhat\Sigkstar\}_{1\leq k\leq K}\big)$
is a $R\times R$ full-rank diagonal matrix, with blocks $\ponehat\Sigonestar,\dots,\pKhat\SigKstar$
on the diagonal. This implies that
\begin{align*}
\sigma_{R}\left(\sumK\pkhat\Mkstar\right) & =\sigma_{R}\Big(\boldsymbol{\Sigma}_{\mathsf{left}}\bV_{\mathsf{left}}^{\top}\mathsf{diag}\big(\{\pkhat\Sigkstar\}_{1\leq k\leq K}\big)\bV_{\mathsf{right}}\boldsymbol{\Sigma}_{\mathsf{right}}\Big)\ge\sigma_{R}(\boldsymbol{\Sigma}_{\mathsf{left}})\sigma_{R}(\boldsymbol{\Sigma}_{\mathsf{right}})\cdot\min_{k}\big\{\pkhat\sigma_{r_{k}}(\Mkstar)\big\}\\
 & \ge\left(\frac{1}{\sqrt{2}}\right)^{2}\min_{k}\big\{\pkhat\sigma_{r_{k}}(\Mkstar)\big\}\gtrsim\frac{1}{K}\min_{k}\sigma_{r_{k}}(\Mkstar),
\end{align*}
where the last inequality uses the assumption that $\pkhat\gtrsim1/K$.
This establishes (\ref{eq:eigengap}).
\end{proof}

\subsection{\label{subsec:proof_stage2}Proof of Theorem \ref{thm:stage2}}

\paragraph{Step 1: basic properties of the auxiliary mixed linear regression
problem.}

We begin by formally characterizing the intermediate mixed linear
regression problem in Stage 2. It is easily seen from Section~\ref{subsec:alg_stage_2}
that for $i\in\Omega_{k}^{\star}$, one has 
\begin{equation}
\yi=\langle\Ai,\Mkstar\rangle+\ei=\langle\ai,\bbeta_{k}\rangle+\underbrace{z_{i}+\ei}_{\eqqcolon\xi_{i}},\label{eq:yi-MLR-proof-1}
\end{equation}
where the additional term
\begin{equation}
z_{i}\coloneqq\langle\Ai,\Mkstar\rangle-\langle\ai,\bbeta_{k}\rangle=\langle\Ai,\Mkstar-\Uhat\Uhat^{\top}\Mkstar\Vhat\Vhat^{\top}\rangle\label{eq:zi-MLR-proof-1}
\end{equation}
accounts for the subspace estimation error. In words, the observations
$\{y_{i}\}$ can be equivalently written in the mixed linear regression
form, where $\{\bm{\beta}_{k}\}$ constitutes the underlying parameters,
$\{\bm{a}_{i}\}$ the measurement vectors and $\{\xi_{i}\}$ the measurement
noise. We then focus on characterizing the properties of $\bm{a}_{i}$
and $\xi_{i}$. 

Recall from Algorithm~\ref{alg:stage2} that $\bm{a}_{i}=\mathsf{vec}(\bm{U}^{\top}\bm{A}_{i}\bm{V})$.
In view of the independence between $\{\Ai\}$ and $\Uhat,\Vhat$,
one can deduce that 
\[
\ai\overset{{\rm i.i.d.}}{\sim}\Ncal(\boldsymbol{0},\boldsymbol{I}_{d}),\quad1\le i\le N,
\]
where $d\coloneqq R^{2}$. Again, leveraging the independence between
$\{\Ai,\ei\}$ and $\Uhat,\Vhat$, we have 
\begin{align*}
\xi_{i} & =\langle\Ai,\Mkstar-\Uhat\Uhat^{\top}\Mkstar\Vhat\Vhat^{\top}\rangle+\zeta_{i}\overset{\text{i.i.d.}}{\sim}\Ncal\big(0,\|\Mkstar-\Uhat\Uhat^{\top}\Mkstar\Vhat\Vhat^{\top}\|_{\Frm}^{2}+\sigma^{2}\big).
\end{align*}
For notational convenience, we shall denote the variance to be
\begin{equation}
\sigma_{k}^{2}\coloneqq\|\Mkstar-\Uhat\Uhat^{\top}\Mkstar\Vhat\Vhat^{\top}\|_{\Frm}^{2}+\sigma^{2},\quad1\le k\le K.\label{eq:def_sigk}
\end{equation}
More importantly, the measurement vectors $\{\bm{a}_{i}\}$ are independent
of the measurement noise $\{\xi_{i}\}$. To see this, one has 
\begin{align*}
\E[\xi_{i}\ai] & =\E[\zeta_{i}\ai]+\E[z_{i}\ai]=\bm{0}+\vc\left(\E[\langle\Ai,\Mkstar-\Uhat\Uhat^{\top}\Mkstar\Vhat\Vhat^{\top}\rangle\Uhat^{\top}\Ai\Vhat]\right)\\
 & =\vc\left(\Uhat^{\top}\left(\Mkstar-\Uhat\Uhat^{\top}\Mkstar\Vhat\Vhat^{\top}\right)\Vhat\right)=\boldsymbol{0}.
\end{align*}
Here the second equality follows from the independence between $\zeta_{i}$
and $\bm{A}_{i},\bm{U},\bm{V}$, whereas the last line utilizes the
independence between $\bm{A}_{i}$ and $\bm{U},\bm{V}$ and the isotropic
property of $\bm{A}_{i}$. 

In conclusion, in Line~\ref{line:alg2_mlr} of Algorithm \ref{alg:stage2},
we are equivalently faced with a $d$-dimensional mixed linear regression
problem with data $\{\ai,\yi\}_{1\le i\le N}$, which satisfies that
for $i\in\Omega_{k}^{\star}$, 
\begin{equation}
\yi=\big\langle\ai,\bbeta_{k}\big\rangle+\xi_{i},\qquad\xi_{i}\overset{{\rm i.i.d.}}{\sim}\Ncal\left(0,\sigma_{k}^{2}\right),\,\bm{a}_{i}\overset{{\rm i.i.d.}}{\sim}\Ncal(\boldsymbol{0},\boldsymbol{I}_{d})\label{eq:mlr_characterization}
\end{equation}
with $\xi_{i}$ being independent from $\bm{a}_{i}$. 

\paragraph{Step 2: performance of the tensor method. }

Next, we characterize the performance of the tensor method for solving
the above mixed linear regression problem. Our proof follows closely
that of \cite[Theorem 1]{yi2016solving}, with minor modifications
to accommodate the noise $\{\xi_{i}\}$. Therefore we only provide
a sketch here. 

Recall that in Algorithm \ref{alg:tensor}, we randomly split the
input data $\{\ai,\yi\}_{1\le i\le N}$ into two sets $\{\ai,\yi\}_{1\le i\le N_{1}}$
and $\{\ai',\yi'\}_{1\le i\le N_{2}}$ (with slight abuse of notation).
This sample splitting strategy is adopted merely to decouple statistical
dependence and facilitate analysis. The high-level idea of the proof
of \cite[Theorem~1]{yi2016solving} is simple to state: if the quantities
\begin{equation}
\Big\|\Mtwo-\sum_{k=1}^{K}\pkhat{\bbeta_{k}}\bbeta_{k}^{\top}\Big\|\quad\text{and}\quad\Big\|\Big(\Mthree-\sum_{k=1}^{K}\pkhat{\bbeta_{k}}^{\otimes3}\Big)(\bW,\bW,\bW)\Big\|\label{eq:moment_concentration}
\end{equation}
are sufficiently small, then the tensor method returns reliable estimates
of $\{\bbeta_{k}\}$; see \cite[Eq.~(24) in Section 5.4.1]{yi2016solving}.
Here, the empirical moments $\bM_{2},\bM_{3}$ and the whitening matrix
$\bW$ are defined in Algorithm~\ref{alg:tensor}. 

With this connection in place, it suffices to control the quantities
in (\ref{eq:moment_concentration}). While the analysis in~\cite[Section 5.4.2]{yi2016solving}
only applies to the noiseless mixed linear regression problem, we
can easily modify it to accommodate our noisy case~(\ref{eq:mlr_characterization}).
The trick is to \emph{augment} $\{\bbeta_{k}\}$ and $\{\ai\}$ as
follows:
\begin{equation}
\betaktil\coloneqq\begin{bmatrix}\bbeta_{k}\\
\sigma_{k}
\end{bmatrix}\in\R^{d+1},\quad1\le k\le K;\quad\aitil\coloneqq\begin{bmatrix}\ai\\
\xi_{i}/\sigma_{k}
\end{bmatrix}\in\R^{d+1},\quad i\in\Omegakstar.\label{eq:def_betatilde_atilde}
\end{equation}
The advantage is clear: the noisy mixed linear regression problem~(\ref{eq:mlr_characterization})
can be equivalently phrased as a noiseless one, that is for all $i\in\Omega_{k}^{\star}$,
\begin{equation}
\aitil\overset{{\rm i.i.d.}}{\sim}\Ncal(\boldsymbol{0},\bI_{d+1})\quad\quad\text{and}\quad\quad\yi=\big\langle\aitil,\betaktil\big\rangle.\label{eq:equivalent_noiseless_mlr}
\end{equation}
Similarly, we can define $\aitil'$ analogously, and introduce the
augmented versions of the empirical moments as follows:
\begin{subequations}
\label{eq:def_M_tilde}
\begin{equation}
m_{0}^{\mathsf{aug}}\coloneqq\frac{1}{N_{1}}\sum_{i=1}^{N_{1}}y_{i}^{2}\in\R,\quad\boldsymbol{m}_{1}^{\mathsf{aug}}\coloneqq\frac{1}{6N_{2}}\sum_{i=1}^{N_{2}}{\yi'}^{3}\aitil'\ \in\R^{d+1},\label{eq:m0_m1_tilde}
\end{equation}
\begin{equation}
\bM_{2}^{\mathsf{aug}}\coloneqq\frac{1}{2N_{1}}\sum_{i=1}^{N_{1}}y_{i}^{2}\aitil(\aitil)^{\top}-\frac{1}{2}m_{0}^{\mathsf{aug}}\,\bI_{d+1}\ \in\R^{(d+1)\times(d+1)},\label{eq:m2_tilde}
\end{equation}
\begin{equation}
\bM_{3}^{\mathsf{aug}}\coloneqq\frac{1}{6N_{2}}\sum_{i=1}^{N_{2}}{\yi'}^{3}(\aitil')^{\otimes3}-\Tcal^{\mathsf{aug}}(\boldsymbol{m}_{1}^{\mathsf{aug}})\ \in\R^{(d+1)\times(d+1)\times(d+1)},\label{eq:m3_tilde}
\end{equation}
\end{subequations}
 where $\Tcal^{\mathsf{aug}}(\cdot)$ is defined analogously as in
(\ref{eq:Tcal}).  By virtue of the augmentation procedure, $\bm{M}_{2}$
(resp.~$\bm{M}_{3}$) is a a sub-matrix (resp.~sub-tensor) of $\bM_{2}^{\mathsf{aug}}$
(resp.~$\bM_{3}^{\mathsf{aug}}$). Consequently, we have 
\begin{align*}
\Big\|\Mtwo-\sum_{k=1}^{K}\pkhat{\bbeta_{k}}\bbeta_{k}^{\top}\Big\| & \leq\left\Vert \bM_{2}^{\mathsf{aug}}-\sumK\pkhat\betaktil(\betaktil)^{\top}\right\Vert ;\\
\Big\|\Big(\Mthree-\sum_{k=1}^{K}\pkhat{\bbeta_{k}}^{\otimes3}\Big)(\bW,\bW,\bW)\Big\| & =\left\Vert \Big(\bM_{3}^{\mathsf{aug}}-\sumK\pkhat(\betaktil)^{\otimes3}\Big)(\bm{W}^{\mathsf{aug}},\bm{W}^{\mathsf{aug}},\bm{W}^{\mathsf{aug}})\right\Vert ,
\end{align*}
where $\bm{W}^{\mathsf{aug}}\coloneqq[\bW^{\top},\boldsymbol{0}]^{\top}$. 

With the above augmented vectors/matrices/tensors in place, one can
follow the analysis in \cite[Section 5.4.2]{yi2016solving} to upper
bound the quantities above. One subtle issue is that our sampling
scheme is slightly different from the one in \cite{yi2016solving},
where each sample has i.i.d.~labeling; nevertheless, it is easy to
check that this difference is minor, and does not affect the result
of the analysis. Indeed, repeating the analysis in \cite[Section 5.4]{yi2016solving}
yields the conclusion that: in order to achieve $\epsilon$ errors
(\ref{eq:beta_error}) with probability at least $1-\gamma$, it suffices
to require the sample complexities to exceed (analogous to \cite[Eq.~(13)]{yi2016solving})
\begin{subequations}
\label{eq:N_tmp}
\begin{align}
N_{1} & \ge C_{1}\left(\frac{d}{(\min_{k}\pkhat)\,\epsilon^{2}}\frac{\max_{k}\big\|\betaktil\big\|_{2}^{10}}{\sigma_{K}(\sum_{k}\pkhat\bbeta_{k}\bbeta_{k}^{\top})^{5}}\log\frac{12K}{\gamma}\log^{2}N_{1}+\frac{K}{(\min_{k}\pkhat)\,\gamma}\right)\label{eq:N1_tmp-1}\\
 & \overset{{\rm (i)}}{\asymp}\frac{dK^{6}}{\epsilon^{2}}\left(\Gamma^{10}+\frac{\sigma^{10}}{\min_{k}\|\Mkstar\|_{\Frm}^{10}}\right)\log(K\log n)\log^{2}N_{1}+K^{2}\log n,\label{eq:N1_tmp-2}\\
N_{2} & \ge C_{2}\left(\frac{(K^{2}+d)}{(\min_{k}\pkhat)\,\epsilon^{2}}\frac{\max_{k}\big\|\betaktil\big\|_{2}^{6}}{\sigma_{K}(\sum_{k}\pkhat\bbeta_{k}\bbeta_{k}^{\top})^{3}}\log\frac{12K}{\gamma}\log^{3}N_{2}+\frac{K}{(\min_{k}\pkhat)\,\gamma}\right)\label{eq:N2_tmp-1}\\
 & \overset{{\rm (ii)}}{\asymp}\frac{dK^{4}}{\epsilon^{2}}\left(\Gamma^{6}+\frac{\sigma^{6}}{\min_{k}\|\Mkstar\|_{\Frm}^{6}}\right)\log(K\log n)\log^{3}N_{2}+K^{2}\log n.\label{eq:N2_tmp-2}
\end{align}
\end{subequations}
Here $C_{1},C_{2}>0$ are some sufficiently large constants, and
the simplifications (i) (ii) hold due to the following facts: (i)
$d=R^{2}\ge K^{2}$, (ii) $\min_{k}\pkhat\asymp1/K$, (iii) we choose
$\gamma=O(1/\log n)$, (iv) $\|\bbeta_{k}^{\mathsf{aug}}\|_{2}^{2}=\|\bbeta_{k}\|_{2}^{2}+\sigma_{k}^{2}=\|\bbeta_{k}\|_{2}^{2}+\|\Mkstar-\Uhat\Uhat^{\top}\Mkstar\Vhat\Vhat^{\top}\|_{\Frm}^{2}+\sigma^{2}$,
and (v) the following claim (in particular, (\ref{eq:claim_beta})
and (\ref{eq:sigK}) therein). 

\begin{itclaim} \label{claim:conseq_of_weak_correlation}

Instate the assumptions of Theorem \ref{thm:stage2}. 
\begin{enumerate}
\item The ground-truth matrices $\{\Mkstar\}_{1\le k\le K}$ satisfy that
for all $1\le i,j\le K,i\neq j$,
\begin{equation}
\big|\langle\Mistar,\Mjstar\rangle\big|\le\frac{1}{4K\Gamma^{2}}\|\Mistar\|_{\Frm}\|\Mjstar\|_{\Frm},\quad\text{and}\quad\big\|\Mistar-\Mjstar\big\|_{\Frm}\gtrsim\|\Mistar\|_{\Frm}+\|\Mjstar\|_{\Frm}.\label{eq:claim_Mstar}
\end{equation}
\item In addition, the parameters $\{\bbeta_{k}\}_{1\le k\le K}$ obey that
for all $1\le k,i,j\le K,i\neq j$,
\begin{equation}
0.9\|\Mkstar\|_{\Frm}\le\|\bbeta_{k}\|_{2}\le\|\Mkstar\|_{\Frm},\quad\text{and}\quad\big|\big\langle\bbeta_{i},\bbeta_{j}\big\rangle\big|\le\frac{1}{2K\Gamma^{2}}\big\|\bbeta_{i}\big\|_{2}\big\|\bbeta_{j}\big\|_{2}.\label{eq:claim_beta}
\end{equation}
\item In the end, we have
\begin{equation}
\sigma_{K}\left(\sumK\pkhat\bbeta_{k}\bbeta_{k}^{\top}\right)\asymp\frac{1}{K}\min_{1\le k\le K}\|\Mkstar\|_{\Frm}^{2}.\label{eq:sigK}
\end{equation}
\end{enumerate}
\end{itclaim}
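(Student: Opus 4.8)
The plan is to establish the three displays \eqref{eq:claim_Mstar}, \eqref{eq:claim_beta}, \eqref{eq:sigK} in that order, since each builds on the previous one; throughout, $\|\cdot\|$ denotes the spectral norm and $\disthat$ is as in \eqref{eq:def_disthat}.

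\emph{Part 1 (proving \eqref{eq:claim_Mstar}).} I would start from the compact SVDs $\Mistar=\Uistar\Sigistar(\Vistar)^\top$ and $\Mjstar=\Ujstar\Sigjstar(\Vjstar)^\top$ and use the cyclic property of the trace to write
\[
\langle\Mistar,\Mjstar\rangle=\Tr\big(\Sigistar\,(\Uistar)^\top\Ujstar\,\Sigjstar\,(\Vjstar)^\top\Vistar\big)=\big\langle\Sigistar\,(\Uistar)^\top\Ujstar\,\Sigjstar,\ (\Vistar)^\top\Vjstar\big\rangle .
\]
Cauchy--Schwarz together with $\|\bA\bB\|_\Frm\le\|\bA\|\,\|\bB\|_\Frm$ (applied twice so that the singular-value factors leave in \emph{spectral} norm) then yields
\[
\big|\langle\Mistar,\Mjstar\rangle\big|\le\|\Sigistar\|\,\|\Sigjstar\|\,\big\|(\Uistar)^\top\Ujstar\big\|_\Frm\big\|(\Vistar)^\top\Vjstar\big\|_\Frm\le\sigma_1(\Mistar)\,\sigma_1(\Mjstar)\,\frac{\mu^2r^2}{\sqrt{n_1n_2}},
\]
where the last step invokes the incoherence definition \eqref{eq:incoherence-defn}. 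Bounding $\sigma_1(\Mkstar)\le\|\Mkstar\|_\Frm$ and invoking Assumption~\ref{asp:weak_correlation} — noting $\min\{n_1,n_2\}\le\sqrt{n_1n_2}$ and $\max\{K^2,K\Gamma^2\}\ge K\Gamma^2$, so that $\mu^2r^2/\sqrt{n_1n_2}\le 1/(4K\Gamma^2)$ — gives the first bound in \eqref{eq:claim_Mstar}. The second is then immediate: expanding $\|\Mistar-\Mjstar\|_\Frm^2=\|\Mistar\|_\Frm^2+\|\Mjstar\|_\Frm^2-2\langle\Mistar,\Mjstar\rangle$ and using $2|\langle\Mistar,\Mjstar\rangle|\le\tfrac{1}{2K\Gamma^2}\|\Mistar\|_\Frm\|\Mjstar\|_\Frm\le\tfrac14(\|\Mistar\|_\Frm^2+\|\Mjstar\|_\Frm^2)$ (as $K,\Gamma\ge 1$) yields $\|\Mistar-\Mjstar\|_\Frm^2\ge\tfrac34(\|\Mistar\|_\Frm^2+\|\Mjstar\|_\Frm^2)\gtrsim(\|\Mistar\|_\Frm+\|\Mjstar\|_\Frm)^2$.

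\emph{Part 2 (proving \eqref{eq:claim_beta}).} The crux is the projection-error bound
\[
\big\|\Mkstar-\Uhat\Uhat^\top\Mkstar\Vhat\Vhat^\top\big\|_\Frm\le 2\,\disthat\,\|\Mkstar\|_\Frm ,
\]
which I would obtain by decomposing $\Mkstar-\Uhat\Uhat^\top\Mkstar\Vhat\Vhat^\top=(\bI-\Uhat\Uhat^\top)\Mkstar+\Uhat\Uhat^\top\Mkstar(\bI-\Vhat\Vhat^\top)$, then using $\Mkstar=\Ustar\Ustar^\top\Mkstar=\Mkstar\Vstar\Vstar^\top$ (since $\col\{\Mkstar\}\subseteq\Ustar$ and $\row\{\Mkstar\}\subseteq\Vstar$) together with the identity $(\bI-\Uhat\Uhat^\top)\Ustar\Ustar^\top=(\bI-\Uhat\Uhat^\top)(\Ustar\Ustar^\top-\Uhat\Uhat^\top)$, whose spectral norm is at most $\disthat$; the row-space term is handled symmetrically and $\|\Uhat\Uhat^\top\bX\|_\Frm\le\|\bX\|_\Frm$. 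Since $\|\bbeta_k\|_2=\|\Uhat^\top\Mkstar\Vhat\|_\Frm=\|\Uhat\Uhat^\top\Mkstar\Vhat\Vhat^\top\|_\Frm$, the bound $\|\bbeta_k\|_2\le\|\Mkstar\|_\Frm$ is non-expansiveness of orthogonal projections, while $\|\bbeta_k\|_2\ge(1-2\disthat)\|\Mkstar\|_\Frm\ge 0.9\|\Mkstar\|_\Frm$ follows from the triangle inequality and the hypothesis $\disthat\le c_1/(K\Gamma^2)$ of Theorem~\ref{thm:stage2} with $c_1$ small. For near-orthogonality, I would write $\Uhat\Uhat^\top\Mistar\Vhat\Vhat^\top=\Mistar+\bE_i$ with $\|\bE_i\|_\Frm\le 2\disthat\|\Mistar\|_\Frm$, expand $\langle\bbeta_i,\bbeta_j\rangle=\langle\Mistar,\Mjstar\rangle+\langle\Mistar,\bE_j\rangle+\langle\bE_i,\Mjstar\rangle+\langle\bE_i,\bE_j\rangle$, and bound the last three terms by Cauchy--Schwarz; combined with the first bound of \eqref{eq:claim_Mstar} and $\disthat\le c_1/(K\Gamma^2)$ this gives $|\langle\bbeta_i,\bbeta_j\rangle|\le\big(\tfrac{1}{4K\Gamma^2}+O(\disthat)\big)\|\Mistar\|_\Frm\|\Mjstar\|_\Frm\le\tfrac{1}{2K\Gamma^2}\|\bbeta_i\|_2\|\bbeta_j\|_2$, where the final step re-uses $\|\bbeta_k\|_2\ge 0.9\|\Mkstar\|_\Frm$.

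\emph{Part 3 (proving \eqref{eq:sigK}).} Set $\bB\coloneqq[\bbeta_1,\dots,\bbeta_K]$ and $\bD\coloneqq\mathsf{diag}(p_1,\dots,p_K)$, so that the nonzero eigenvalues of $\sum_k p_k\bbeta_k\bbeta_k^\top=\bB\bD\bB^\top$ coincide with those of the $K\times K$ matrix $\bD^{1/2}\bB^\top\bB\bD^{1/2}=\boldsymbol{\Theta}\bG\boldsymbol{\Theta}$, where $\bG$ is the Gram matrix of the unit vectors $\{\bbeta_k/\|\bbeta_k\|_2\}$ and $\boldsymbol{\Theta}\coloneqq\mathsf{diag}(\sqrt{p_k}\,\|\bbeta_k\|_2)$. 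By \eqref{eq:claim_beta}, $\|\bG-\bI_K\|_\Frm^2=\sum_{i\neq j}\big(\langle\bbeta_i,\bbeta_j\rangle/(\|\bbeta_i\|_2\|\bbeta_j\|_2)\big)^2\le 1/4$, so $1/2\le\lambda_{\min}(\bG)\le\lambda_{\max}(\bG)\le 3/2$; in particular $\bG$ is full rank, so $\sum_k p_k\bbeta_k\bbeta_k^\top$ has rank exactly $K$. Hence
\[
\tfrac12\min_k p_k\|\bbeta_k\|_2^2\le\lambda_{\min}(\bG)\,\lambda_{\min}(\boldsymbol{\Theta})^2\le\lambda_{\min}(\boldsymbol{\Theta}\bG\boldsymbol{\Theta})=\sigma_K\Big(\sum_k p_k\bbeta_k\bbeta_k^\top\Big)\le\min_k(\boldsymbol{\Theta}\bG\boldsymbol{\Theta})_{kk}=\min_k p_k\|\bbeta_k\|_2^2 ,
\]
using $\lambda_{\min}(\boldsymbol{\Theta}\bG\boldsymbol{\Theta})\ge\lambda_{\min}(\bG)\lambda_{\min}(\boldsymbol{\Theta})^2$ for the lower chain and $\lambda_{\min}(A)\le A_{kk}$ for PSD $A$ for the upper one. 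Substituting $p_k\asymp 1/K$ and $0.9\|\Mkstar\|_\Frm\le\|\bbeta_k\|_2\le\|\Mkstar\|_\Frm$ from \eqref{eq:claim_beta} turns this into $\sigma_K(\sum_k p_k\bbeta_k\bbeta_k^\top)\asymp\tfrac1K\min_k\|\Mkstar\|_\Frm^2$.

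\emph{Anticipated obstacle.} Every individual step is an elementary perturbation or linear-algebra estimate, so the work is routine; the one genuinely delicate point is constant-tracking. One must verify that the \emph{sharp} constants demanded in \eqref{eq:claim_beta} (namely $0.9$, and $1/(2K\Gamma^2)$) all follow simultaneously from the single smallness hypothesis $\disthat\le c_1/(K\Gamma^2)$ of Theorem~\ref{thm:stage2} — this is what dictates how small $c_1$ must be chosen, and it requires checking that the $1/(4K\Gamma^2)$ appearing in \eqref{eq:claim_Mstar} leaves enough slack for the $O(\disthat)$ corrections. A lesser subtlety is the trace/Cauchy--Schwarz manipulation in Part 1, where the singular-value factors must be peeled off in spectral (not Frobenius) norm so that the incoherence bounds on $\|(\Uistar)^\top\Ujstar\|_\Frm$ and $\|(\Vistar)^\top\Vjstar\|_\Frm$ can be applied verbatim.
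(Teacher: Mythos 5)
Your argument is correct, and Parts 1 and 2 mirror the paper's own proof essentially step for step (the paper's Cauchy--Schwarz bound uses $\|\Sigkstar\|_{\Frm}$ where you use $\|\Sigkstar\|$ plus $\sigma_1(\Mkstar)\le\|\Mkstar\|_{\Frm}$, and its near-orthogonality expansion is a three-term telescope rather than your four-term $(\Mistar+\bE_i,\Mjstar+\bE_j)$ expansion, but these are cosmetic). Part 3 is where you take a genuinely different--if minor--route: the paper first factors out the weights via the bound $\sigma_K(\sum_k p_k\bbeta_k\bbeta_k^\top)\asymp K^{-1}\sigma_K(\sum_k\bbeta_k\bbeta_k^\top)$ (using $p_k\asymp 1/K$ and monotonicity of $\sigma_K$ under the PSD order), and then decomposes the un-weighted Gram matrix $\bB^\top\bB=\bD+\bO$ into diagonal and off-diagonal parts, bounding $\|\bO\|_{\Frm}$ and invoking Weyl's inequality. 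You instead absorb both the weights $p_k$ and the scales $\|\bbeta_k\|_2$ into a single diagonal matrix $\boldsymbol{\Theta}$, reduce to the unit-normalized Gram matrix $\bG$, and sandwich $\lambda_{\min}(\boldsymbol{\Theta}\bG\boldsymbol{\Theta})$ between $\lambda_{\min}(\bG)\lambda_{\min}(\boldsymbol{\Theta})^2$ and $\min_k(\boldsymbol{\Theta}\bG\boldsymbol{\Theta})_{kk}$. Your version handles the $p_k$'s and the $\|\bbeta_k\|_2$'s in one shot and avoids the separate $\asymp K^{-1}$ reduction; the paper's version keeps those two perturbation effects cleanly decoupled. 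Both are elementary and correct, and your tracking of the constants (verifying that $\disthat\le c_1/(K\Gamma^2)$ with $c_1$ small leaves enough slack to turn the $1/(4K\Gamma^2)$ of \eqref{eq:claim_Mstar} into the $1/(2K\Gamma^2)$ of \eqref{eq:claim_beta} after renormalizing by $\|\bbeta_k\|_2\ge 0.9\|\Mkstar\|_{\Frm}$) is precisely the delicate point the paper also has to check.
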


Armed with (\ref{eq:N1_tmp-2}) and (\ref{eq:N2_tmp-2}), we can plug
in the bounds $d=R^{2}\le K^{2}r^{2}$ and $\log(K\log n)\lesssim\log n$
to complete the proof of Theorem~\ref{thm:stage2}. 
\begin{proof}
[Proof of Claim~\ref{claim:conseq_of_weak_correlation}] With regards
to the first part of (\ref{eq:claim_Mstar}), it is seen that 
\begin{align*}
\left|\langle\Mistar,\Mjstar\rangle\right| & =\big|\langle\Uistar\Sigistar{\Vistar}^{\top},\Ujstar\Sigjstar{\Vjstar}^{\top}\rangle\big|=\big|\langle\Sigistar,\bm{U}_{i}^{\star\top}\bm{U}_{j}^{\star}\Sigjstar\bm{V}_{j}^{\star\top}\bm{V}_{i}^{\star}\rangle\big|\le\|{\Uistar}^{\top}\Ujstar\|_{\Frm}\|{\Vistar}^{\top}\Vjstar\|_{\Frm}\|\Sigistar\|_{\Frm}\|\Sigjstar\|_{\Frm}\\
 & \le\left(\frac{1}{2\sqrt{K}\Gamma}\right)^{2}\|\Sigistar\|_{\Frm}\|\Sigjstar\|_{\Frm}=\frac{1}{4K\Gamma^{2}}\|\Mistar\|_{\Frm}\|\Mjstar\|_{\Frm},
\end{align*}
where the second line utilizes Assumption \ref{asp:weak_correlation}.
The second part of (\ref{eq:claim_Mstar}) follows immediately from
the first part and some elementary calculations. 

Next, we turn to proving (\ref{eq:claim_beta}). Recall the definitions
$\bbeta_{k}=\vc(\bS_{k})=\vc(\Uhat^{\top}\Mkstar\Vhat)$ and $\disthat=\max\{\|\Uhat\Uhat^{\top}-\Ustar\Ustar^{\top}\|,\|\Vhat\Vhat^{\top}-\Vstar\Vstar^{\top}\|\}$.
We have the upper bound $\|\bbeta_{k}\|_{2}=\|\Uhat^{\top}\Mkstar\Vhat\|_{\mathrm{F}}\le\|\Mkstar\|_{\Frm}$
as well as the lower bound 
\begin{align*}
\big\|\bbeta_{k}\big\|_{2} & =\big\|\Uhat^{\top}\Mkstar\Vhat\big\|_{{\rm F}}=\|\Uhat\Uhat^{\top}\Mkstar\Vhat\Vhat^{\top}\|_{\Frm}\ge\|\Mkstar\|_{{\rm F}}-\|\Ustar\Ustar^{\top}\Mkstar\Vstar\Vstar^{\top}-\Uhat\Uhat^{\top}\Mkstar\Vhat\Vhat^{\top}\|_{{\rm F}}\\
 & \geq\|\Mkstar\|_{{\rm F}}-\|\big(\Ustar\Ustar^{\top}-\Uhat\Uhat^{\top}\big)\Mkstar\Vstar\Vstar^{\top}\|_{{\rm F}}-\|\Uhat\Uhat^{\top}\Mkstar\big(\Vstar\Vstar^{\top}-\Vhat\Vhat^{\top}\big)\|_{{\rm F}}\\
 & \ge(1-2\,\disthat)\|\Mkstar\|_{{\rm F}}\ge0.9\|\Mkstar\|_{\Frm},
\end{align*}
where the last inequality uses the assumption that $\disthat\le c_{1}/(K\Gamma^{2})\leq0.05$;
this justifies the first part of~(\ref{eq:claim_beta}). To prove
the second part of (\ref{eq:claim_beta}), we start with the decomposition
\begin{align*}
\big\langle\bbeta_{i},\bbeta_{j}\big\rangle & =\big\langle\Uhat^{\top}\Mistar\Vhat,\Uhat^{\top}\Mjstar\Vhat\big\rangle=\big\langle\Mistar,\Uhat\Uhat^{\top}\Mjstar\Vhat\Vhat^{\top}\big\rangle\\
 & =\big\langle\Mistar,\Mjstar\big\rangle+\big\langle\Mistar,\Uhat\Uhat^{\top}\Mjstar\Vhat\Vhat^{\top}-\Ustar\Ustar^{\top}\Mjstar\Vhat\Vhat^{\top}\big\rangle\\
 & \quad+\big\langle\Mistar,\Ustar\Ustar^{\top}\Mjstar\Vhat\Vhat^{\top}-\Ustar\Ustar^{\top}\Mjstar\Vstar\Vstar^{\top}\big\rangle,
\end{align*}
which together with the triangle inequality yields
\[
\big|\big\langle\bbeta_{i},\bbeta_{j}\big\rangle\big|\le|\langle\Mistar,\Mjstar\rangle|+2\,\disthat\|\Mistar\|_{\Frm}\|\Mjstar\|_{\Frm}.
\]
In light of the first part of (\ref{eq:claim_Mstar}), the first part
of (\ref{eq:claim_beta}), and our assumption on $\disthat$, this
establishes the second part of (\ref{eq:claim_beta}).

Finally, it remains to prove (\ref{eq:sigK}). In view of the assumption
that $p_{k}\asymp1/K$ ($1\leq k\leq K$), one has 
\begin{equation}
\sigma_{K}\bigg(\sum_{k=1}^{K}\pkhat\bbeta_{k}\bbeta_{k}^{\top}\bigg)\asymp\frac{1}{K}\sigma_{K}\bigg(\sum_{k=1}^{K}\bbeta_{k}\bbeta_{k}^{\top}\bigg).\label{eq:cons-of-balancedness}
\end{equation}
Therefore, it suffices to show that $\sigma_{K}(\sum_{k}\bbeta_{k}\bbeta_{k}^{\top})\asymp\min_{k}\|\bm{M}_{k}^{\star}\|_{\mathrm{F}}^{2}$.
Towards this, we find it helpful to define $\bB\coloneqq[\bbeta_{1},\dots,\bbeta_{K}]\in\R^{d\times K}$,
and decompose $\bB^{\top}\bB$ as $\bB^{\top}\bB=\bD+\bO$. Here,
$\bD$ stands for the diagonal part of $\bB^{\top}\bB$ with $D_{kk}=\|\bbeta_{k}\|_{2}^{2}$,
while $\bO$ is the off-diagonal part of $\bB^{\top}\bB$. Note that
for any $i\neq j$, $[\bO]_{ij}=[\bB^{\top}\bB]_{ij}=\langle\bbeta_{i},\bbeta_{j}\rangle$,
which combined with (\ref{eq:claim_beta}) gives
\[
\|\bO\|_{\Frm}^{2}=\sum_{i\neq j}\big\langle\bbeta_{i},\bbeta_{j}\big\rangle^{2}\le K^{2}\left(\frac{1}{2K\Gamma^{2}}\right)^{2}\max_{k}\big\|\bbeta_{k}\big\|_{2}^{4}=\frac{1}{4\Gamma^{4}}\max_{k}\big\|\bbeta_{k}\big\|_{2}^{4}\le\frac{1}{2}\min_{k}\big\|\bbeta_{k}\big\|_{2}^{4};
\]
the last inequality follows from the definition $\Gamma=\max_{k}\|\Mkstar\|_{\Frm}/\min_{k}\|\Mkstar\|_{\Frm}$,
and the first part of (\ref{eq:claim_beta}). This together with Weyl's
inequality implies that
\begin{equation}
\Big|\sigma_{K}\left(\bB^{\top}\bB\right)-\sigma_{K}(\bD)\Big|=\Big|\sigma_{K}\left(\bB^{\top}\bB\right)-\min_{k}\|\bbeta_{k}\|_{2}^{2}\Big|\le\|\bO\|_{{\rm F}}\le\frac{1}{\sqrt{2}}\min_{k}\big\|\bbeta_{k}\big\|_{2}^{2}.\label{eq:eq:sigK_BTB}
\end{equation}
As a result, we arrive at 
\[
\sigma_{K}\left(\sum_{k}\bbeta_{k}\bbeta_{k}^{\top}\right)=\sigma_{K}\left(\bB\bB^{\top}\right)=\sigma_{K}\left(\bB^{\top}\bB\right)\asymp\min_{k}\big\|\bbeta_{k}\big\|_{2}^{2}\asymp\min_{k}\|\Mkstar\|_{\Frm}^{2},
\]
which in conjunction with (\ref{eq:cons-of-balancedness}) completes
the proof of (\ref{eq:sigK}).
\end{proof}

\subsection{Proof of Proposition \ref{prop:init} \label{subsec:proof_prop_stage2}}

To begin with, the triangle inequality gives
\begin{equation}
\big\|\bL_{k}\bR_{k}^{\top}-\Mkstar\big\|_{{\rm F}}\le\big\|\Uhat\bS_{k}\Vhat^{\top}-\Mkstar\big\|_{{\rm F}}+\big\|\bL_{k}\bR_{k}^{\top}-\Uhat\bS_{k}\Vhat^{\top}\big\|_{{\rm F}}.\label{eq:stage2_error_decompose}
\end{equation}
Regarding the first term on the right-hand side of (\ref{eq:stage2_error_decompose}),
we plug in the definition (\ref{eq:def_Skhat}) of $\bS_{k}$ to obtain
\begin{align*}
\big\|\Uhat\bS_{k}\Vhat^{\top}-\Mkstar\big\|_{{\rm F}} & =\big\|\Uhat\Uhat^{\top}\Mkstar\Vhat\Vhat^{\top}-\Ustar\Ustar^{\top}\Mkstar\Vstar\Vstar^{\top}\big\|_{{\rm F}}\\
 & \leq\big\|\big(\Uhat\Uhat^{\top}-\Ustar\Ustar^{\top}\big)\Mkstar\Vhat\Vhat^{\top}\big\|_{{\rm F}}+\big\|\Ustar\Ustar^{\top}\Mkstar\big(\Vhat\Vhat^{\top}-\Vstar\Vstar^{\top}\big)\big\|_{{\rm F}}\\
 & \le2\max\Big\{\distU,\distV\Big\}\big\|\Mkstar\big\|_{{\rm F}}.
\end{align*}
With regards to the second term on the right-hand side of (\ref{eq:stage2_error_decompose}),
we observe that
\begin{align*}
\big\|\bL_{k}\bR_{k}^{\top}-\Uhat\bS_{k}\Vhat^{\top}\big\|_{{\rm F}} & \le\big\|\bL_{k}\bR_{k}^{\top}-\Uhat\Skhat\Vhat^{\top}\big\|_{\Frm}+\big\|\Uhat\Skhat\Vhat^{\top}-\Uhat\bS_{k}\Vhat^{\top}\big\|_{\Frm}\\
 & \overset{{\rm (i)}}{\le}2\big\|\Uhat(\Skhat-\bS_{k})\Vhat^{\top}\big\|_{\Frm}\overset{{\rm (ii)}}{\le}2\big\|\betakhat-\bbeta_{k}\big\|_{2}.
\end{align*}
Here, (i) follows since $\bL_{k}\bR_{k}^{\top}$ is the best rank-$r_{k}$
approximation of $\Uhat\Skhat\Vhat^{\top}$ and $\Uhat\bS_{k}\Vhat^{\top}$
is also rank-$r_{k}$; (ii) holds since $\betakhat=\vc(\Skhat)$ and
$\bbeta_{k}=\vc(\bS_{k})$. Substitution into (\ref{eq:stage2_error_decompose})
establishes (\ref{eq:init_error}).

\subsection{\label{subsec:proof_stage3}Proof of Theorem \ref{thm:stage3} }

We shall only prove the local convergence w.r.t.~the matrix $\Monestar$;
the proof for other components is identical and hence is omitted.
Our proof is decomposed into three steps.
\begin{enumerate}
\item Study the ScaledTGD dynamics (particularly the population-level dynamics),
and control the effects of mislabeling and finite-sample errors.
\item Show that if the estimation error is larger than the error floor (namely,
the last term in (\ref{eq:tsgd_error_floor})), then one step of the
ScaledTGD update contracts the error by a constant factor.
\item Show that, once the estimation error gets smaller than this error
floor, then the estimation errors remain small in subsequent iterations.
\end{enumerate}
Before continuing, we note that Condition (\ref{eq:local_region})
with $k=1$ implies the existence of some constant $c_{1}>0$ such
that 
\[
(\bL^{0},\bR^{0})\in\neighbor,
\]
where
\begin{equation}
\neighbor\coloneqq\Bigg\{(\bL,\bR)\in\R^{n_{1}\times r_{1}}\times\R^{n_{2}\times r_{1}}:\big\|\bL\bR^{\top}-\bm{M}_{1}^{\star}\big\|_{\Frm}\le c_{1}\min\left\{ \sigma_{r_{1}}(\bm{M}_{1}^{\star}),\,\frac{1}{K}\min_{j\neq1}\big\|\Mjstar-\bm{M}_{1}^{\star}\big\|_{\Frm}\right\} \Bigg\}.\label{eq:local_region_general}
\end{equation}
 This arises from the inequalities $\sigma_{r_{1}}(\bm{M}_{1}^{\star})\ge\|\bm{M}_{1}^{\star}\|_{\Frm}/(\sqrt{r}\kappa)$
and $\min_{j\neq1}\|\Mjstar-\bm{M}_{1}^{\star}\|_{\Frm}\gtrsim\|\bm{M}_{1}^{\star}\|_{\Frm}$
(due to Assumption \ref{asp:weak_correlation}). We isolate Condition
(\ref{eq:local_region_general}) since it is more convenient to work
with in the analysis. 

\paragraph{Notation.}

To simplify presentation, we shall often let $(\bm{L},\bm{R})$ denote
an iterate lying within $\mathcal{B}$ (cf.~\eqref{eq:local_region_general}),
and define the corresponding estimation errors as
\begin{equation}
\Delk\coloneqq\bL\bR^{\top}-\Mkstar,\quad1\le k\le K.\label{eq:estimation-error-Delta-k}
\end{equation}
The truncating level for a prescribed truncating fraction $\alpha$
is denoted by
\begin{equation}
\tau\coloneqq\quantalpha\left(\Big\{\big|\langle\Ai,\bL\bR^{\top}\rangle-\yi\big|\Big\}_{1\le i\le N}\right),\label{eq:def_tau_tsgd}
\end{equation}
where $Q_{\alpha}$ is the $\alpha$-quantile defined in Section \ref{subsec:Notation}.
We also define the following functions and quantities:
\begin{align}
\ind(a;b) & \coloneqq\ind(|a|\le b),\quad a,b\in\R,\label{eq:def_indicator}\\
w(x)\coloneqq\int_{-x}^{x}t^{2}\phi(t)\,\mathrm{d}t,\quad x\ge0, & \qquad\quad w_{k}\coloneqq w\left(\frac{\tau}{\sqrt{\|\Delk\|_{\Frm}^{2}+\sigma^{2}}}\right),\quad1\le k\le K,\label{eq:def_w}
\end{align}
where $\phi$ stands for the probability density function of a standard
Gaussian random variable. 

\paragraph{Step 1: characterizing the ScaledTGD dynamic.}

The above notation allows one to express the ScaledTGD update rule
(\ref{eq:tsgd_updates}) as
\begin{subequations}
\label{eq:tsgd_ind}
\begin{align}
\Lp & =\bL-\frac{\eta}{\Nall}\sumall\big(\langle\Ai,\bL\bR^{\top}\rangle-\yi\big)\ind\big(\langle\Ai,\bL\bR^{\top}\rangle-\yi;\tau\big)\Ai\bR\big(\bR^{\top}\bR\big)^{-1},\label{eq:tsgd_update_L_ind}\\
\Rp & =\bR-\frac{\eta}{\Nall}\sumall\big(\langle\Ai,\bL\bR^{\top}\rangle-\yi\big)\ind\big(\langle\Ai,\bL\bR^{\top}\rangle-\yi;\tau\big)\Ai^{\top}\bL\big(\bL^{\top}\bL\big)^{-1}.\label{eq:tsgd_update_R_ind}
\end{align}
\end{subequations}
Recall that for any $i\in\Omegakstar$, we have $\yi=\langle\Ai,\Mkstar\rangle+\ei$,
and thus 
\begin{equation}
\langle\Ai,\bL\bR^{\top}\rangle-\yi=\langle\Ai,\Delk\rangle-\ei,\qquad\text{for all }i\in\Omega_{k}^{\star}.\label{eq:product-Ai-L-R}
\end{equation}
The following result makes apparent a useful decomposition of the
ScaledTGD update rule.

\begin{itclaim} 

Recall the notation \eqref{eq:def_indicator} and \eqref{eq:def_w}.
The ScaledTGD update rule (\ref{eq:tsgd_ind}) can be written as 
\begin{equation}
\Lp=\Ltildep-\eta\EL,\qquad\Rp=\Rtildep-\eta\ER.\label{eq:Lplus_pop_E}
\end{equation}
Here, $(\Ltildep,\Rtildep)$ represents the population-level update
from $\Omegaonestar$  
\begin{equation}
\Ltildep\coloneqq\bL-\eta\ponehat w_{1}\Delone\Rinv,\qquad\Rtildep\coloneqq\bR-\eta\ponehat w_{1}\Delone^{\top}\Linv,\label{eq:tsgd_pop}
\end{equation}
and the residual components are given by
\[
\EL\coloneqq(\Delmis+\Deltrip)\Rinv,\qquad\ER\coloneqq(\Delmis+\Deltrip)^{\top}\Linv
\]
with
\begin{equation}
\Delmis\coloneqq\sum_{k\neq1}\pkhat w_{k}\Delk,\quad\Deltrip\coloneqq\sum_{k=1}^{K}\pkhat\left(\frac{1}{|\Omegakstar|}\sumomgk\big(\langle\Ai,\Delk\rangle-\ei\big)\ind\big(\langle\Ai,\Delk\rangle-\ei;\tau\big)\Ai-w_{k}\Delk\right).\label{eq:def_Delmis}
\end{equation}

\end{itclaim}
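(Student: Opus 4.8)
The plan is to derive \eqref{eq:Lplus_pop_E} by straightforward bookkeeping on the update rule \eqref{eq:tsgd_ind}: partition the full sum according to the ground-truth label sets $\{\Omegakstar\}_{1\le k\le K}$, rescale each block, and add and subtract $w_{k}\Delk$ inside every block. Concretely, for the $\bL$-update I would start from \eqref{eq:tsgd_update_L_ind}, write $\sumall=\sumK\sumomgk$, and invoke \eqref{eq:product-Ai-L-R}, i.e.\ $\langle\Ai,\bL\bR^{\top}\rangle-\yi=\langle\Ai,\Delk\rangle-\ei$ for every $i\in\Omegakstar$. Using $\pkhat=|\Omegakstar|/\Nall$ to turn $\tfrac1{\Nall}\sumomgk$ into $\pkhat\cdot\tfrac1{|\Omegakstar|}\sumomgk$ then gives
\[
\Lp=\bL-\eta\sumK\pkhat\bigg(\frac{1}{|\Omegakstar|}\sumomgk\big(\langle\Ai,\Delk\rangle-\ei\big)\ind\big(\langle\Ai,\Delk\rangle-\ei;\tau\big)\Ai\bigg)\Rinv.
\]

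Next, inside each inner average I would add and subtract $w_{k}\Delk$. The collection of ``$-w_{k}\Delk$'' pieces is, by the definition \eqref{eq:def_Delmis}, exactly $\Deltrip$, so those terms contribute $-\eta\,\Deltrip\Rinv$. The ``$+w_{k}\Delk$'' pieces contribute $-\eta\big(\sumK\pkhat w_{k}\Delk\big)\Rinv$; isolating the $k=1$ term gives $-\eta\,\ponehat w_{1}\Delone\Rinv$, which is precisely the correction defining $\Ltildep$ in \eqref{eq:tsgd_pop}, while the $k\neq1$ terms sum to $-\eta\big(\sum_{k\neq1}\pkhat w_{k}\Delk\big)\Rinv=-\eta\,\Delmis\Rinv$. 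Collecting everything yields $\Lp=\bL-\eta\,\ponehat w_{1}\Delone\Rinv-\eta(\Delmis+\Deltrip)\Rinv=\Ltildep-\eta\EL$, as claimed. The $\bR$-update is handled in exactly the same way, transposing throughout and replacing $\Rinv$ by $\Linv$, which gives $\Rp=\Rtildep-\eta\ER$.

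To justify calling $(\Ltildep,\Rtildep)$ the ``population-level update from $\Omegaonestar$'' --- though this is not needed for the identity itself --- I would additionally record the elementary Gaussian computation showing that, with $\tau$ treated as fixed, $\E\big[(\langle\Ai,\Delk\rangle-\ei)\ind(\langle\Ai,\Delk\rangle-\ei;\tau)\Ai\big]=w_{k}\Delk$ for $i\in\Omegakstar$. This follows by decomposing $\Ai$ into its component along $\Delk$ in the trace inner product plus an orthogonal Gaussian part that is independent of $g_{i}\coloneqq\langle\Ai,\Delk\rangle-\ei$ (hence drops out in expectation), then using $\E[\langle\Ai,\Delk\rangle\mid g_{i}]=\tfrac{\|\Delk\|_{\Frm}^{2}}{\|\Delk\|_{\Frm}^{2}+\sigma^{2}}\,g_{i}$ and the substitution $g_{i}=\sqrt{\|\Delk\|_{\Frm}^{2}+\sigma^{2}}\,t$ with $t\sim\Ncal(0,1)$, which turns $\E[g_{i}^{2}\ind(|g_{i}|\le\tau)]$ into $(\|\Delk\|_{\Frm}^{2}+\sigma^{2})\,w\big(\tau/\sqrt{\|\Delk\|_{\Frm}^{2}+\sigma^{2}}\big)$, matching the definition of $w_{k}$ in \eqref{eq:def_w}. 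Thus $w_{k}\Delk$ is the mean per-sample truncated gradient contribution of component $k$; $\Delmis=\sum_{k\neq1}\pkhat w_{k}\Delk$ aggregates the (generally nonzero) mean contributions of the mislabeled components, and $\Deltrip$ collects the finite-sample fluctuations about these means.

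There is no real obstacle in this claim: the statement is an exact rewriting of \eqref{eq:tsgd_ind} and the argument is a few lines of algebra; the only care needed is to keep $\tau$ fixed throughout (it is a deterministic function of the current iterate and the data, so the identity holds for every realization) and to track that the $\pkhat$ rescaling cancels the $1/|\Omegakstar|$ normalization. The genuinely difficult analysis --- bounding $\|\Delmis\|_{\Frm}$ using Assumption~\ref{asp:weak_correlation} and controlling $\|\Deltrip\|_{\Frm}$ by a uniform concentration argument over the neighborhood $\neighbor$ in \eqref{eq:local_region_general} --- is postponed to the subsequent steps of the proof of Theorem~\ref{thm:stage3}.
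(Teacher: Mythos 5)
Your derivation is correct and is the natural (and essentially unique) way to prove the claim, which the paper itself states without a separate proof because it is pure bookkeeping: partition the sum over $i$ by the ground-truth label sets, substitute $\langle\bm{A}_i,\bL\bR^\top\rangle-y_i=\langle\bm{A}_i,\bm{\Delta}_k\rangle-\zeta_i$ for $i\in\Omega_k^\star$, rewrite $\frac{1}{N}\sum_{i\in\Omega_k^\star}=p_k\cdot\frac{1}{|\Omega_k^\star|}\sum_{i\in\Omega_k^\star}$, add and subtract $w_k\bm{\Delta}_k$ in each block, and regroup. You correctly note that the identity is deterministic (holds for every realization with $\tau$ the current truncation level), and your supplementary Gaussian computation showing $\E[(\langle\bm{A}_i,\bm{\Delta}_k\rangle-\zeta_i)\,\mathbbm{1}(\cdot;\tau)\,\bm{A}_i]=w_k\bm{\Delta}_k$ (via conditioning on $g_i$ and the substitution $g_i=\sqrt{\|\bm{\Delta}_k\|_{\rm F}^2+\sigma^2}\,t$) is also correct and properly justifies the labels ``$\mathsf{mis}$'' and ``$\mathsf{fs}$'' for the two residual pieces. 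No gaps.
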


Before moving on, we note that it is crucial to control the sizes
of $\Delmis$ and $\Deltrip$, where ``$\mathsf{mis}$'' stands
for ``mislabeling'', and ``$\mathsf{fs}$'' stands for ``finite
sample''. Regarding $\Delmis$, Fact \ref{fact:w_quadratic} tells
us that for all $k\neq1$,
\begin{align*}
w_{k}\|\Delk\|_{{\rm F}} & =w_{1}\frac{w_{k}}{w_{1}}\|\Delk\|_{{\rm \Frm}}\le w_{1}\frac{\|\Delone\|_{\Frm}^{2}+\sigma^{2}}{\|\Delk\|_{\Frm}^{2}+\sigma^{2}}\|\Delk\|_{{\rm F}}\le\frac{\|\Delone\|_{\Frm}^{2}+\sigma^{2}}{\|\Delk\|_{{\rm F}}}\le2\frac{\|\Delone\|_{\Frm}^{2}+\sigma^{2}}{\|\Mkstar-\Monestar\|_{{\rm F}}}.
\end{align*}
Here, the last inequality holds since 
\[
\|\Delk\|_{{\rm F}}=\|\bm{L}\bm{R}^{\top}-\Mkstar\|_{{\rm F}}\ge\|\Monestar-\Mkstar\|_{{\rm F}}-\|\Delone\|_{{\rm F}}\geq0.5\|\Monestar-\Mkstar\|_{{\rm F}},
\]
where we have used $\|\Delone\|_{{\rm F}}\le c_{2}\sigma_{r_{1}}(\Monestar)\le0.5\|\Monestar-\Mkstar\|_{{\rm F}}$
due to the assumption that $(\bL,\bR)\in\neighbor$ defined in (\ref{eq:local_region_general}).
Consequently, we obtain
\begin{equation}
\|\Delmis\|_{{\rm F}}=\Big\|\sum_{k\neq1}\pkhat w_{k}\Delk\Big\|_{\Frm}\le\sum_{k\ne1}\pkhat w_{k}\|\Delk\|_{{\rm F}}\le2\frac{\|\Delone\|_{\Frm}^{2}+\sigma^{2}}{\min_{k\neq1}\|\Mkstar-\Monestar\|_{\Frm}}.\label{eq:Delmis_bound}
\end{equation}
Next, we turn to the term $\Deltrip$.  Note that $\rank(\Delk)\le2r$.
Therefore, Lemmas \ref{lem:TRIP} and \ref{lem:quantile} (see Remark~\ref{rem:noisy_TRIP_quantile})
imply that, with probability at least $1-Ce^{-cn}$ for some constants
$c,C>0$, the following holds simultaneously for all $(\bL,\bR)\in\mathcal{B}$
(cf.~(\ref{eq:local_region_general})): 
\begin{enumerate}
\item the truncating level $\tau$ obeys
\begin{equation}
0.54<\frac{\tau}{\sqrt{\|\Delone\|_{\Frm}^{2}+\sigma^{2}}}<1.35;\label{eq:tau_range_tsgd}
\end{equation}
\item for any real matrix $\bW$ with $n_{2}$ rows and of rank at most
$r$, we have
\begin{equation}
\|\Deltrip\bW\|_{\Frm}\le\sum_{k=1}^{K}\pkhat\delta\tau\|\bW\|=\delta\tau\|\bW\|\le1.35\delta\sqrt{\|\Delone\|_{\Frm}^{2}+\sigma^{2}}\|\bW\|.\label{eq:case1_fs}
\end{equation}
\end{enumerate}
The above-mentioned bounds will play a useful role in subsequent steps.

\paragraph{Step 2: per-iteration improvement above the error floor (\ref{eq:tsgd_error_floor}).}

Let us look at the Euclidean error
\begin{subequations}
\label{eq:tsgd_improve}
\begin{align}
\big\|\bL^{+}(\bR^{+})^{\top} & -\Monestar\big\|_{{\rm F}}=\big\|(\Ltildep-\eta\EL)(\Rtildep-\eta\ER)^{\top}-\Monestar\big\|_{{\rm F}}\label{eq:tsgd_Fro}\\
 & \le\big\|\Ltildep(\Rtildep)^{\top}-\Monestar\big\|_{{\rm F}}+\eta\Big(\big\|\EL(\Rtildep)^{\top})\big\|_{{\rm F}}+\big\|\Ltildep(\ER)^{\top}\big\|_{{\rm F}}+\eta\big\|\EL(\ER)^{\top}\big\|_{{\rm F}}\Big).\label{eq:tsgd_Fro_decompose}
\end{align}
\end{subequations}
Since $\Ltildep$ and $\Rtildep$ (\ref{eq:tsgd_pop}) are exactly
the same as the update rule of scaled gradient descent for low-rank
matrix factorization, \cite[Theorem 5]{tong2020accelerating} tells
us that if $0<\eta\ponehat w_{1}\le2/3$ (which holds true under our
choices of $\eta\le1.3/\ponehat$ and $\alpha\le0.8\ponehat$), then
\begin{equation}
\big\|\Ltildep(\Rtildep)^{\top}-\Monestar\big\|_{{\rm F}}\le\big(1-0.7\eta\ponehat w_{1}\big)\big\|\bL\bR^{\top}-\Monestar\big\|_{{\rm F}}.\label{eq:pop_improvement}
\end{equation}
It remains to control the perturbation terms in (\ref{eq:tsgd_Fro_decompose}),
accomplished as follows.

\begin{itclaim} \label{claim:perturb}

Denoting
\begin{equation}
B\coloneqq2\left(\delta\sqrt{\|\Delone\|_{\Frm}^{2}+\sigma^{2}}+\frac{\|\Delone\|_{\Frm}^{2}+\sigma^{2}}{\min_{k\neq1}\|\Mkstar-\Monestar\|_{{\rm F}}}\right),\label{eq:def_star}
\end{equation}
one has
\begin{equation}
\max\Big\{\big\|\EL(\Rtildep)^{\top}\big\|_{{\rm F}},\big\|\Ltildep(\ER)^{\top}\big\|_{{\rm F}}\Big\}\le2B,\quad\big\|\EL(\ER)^{\top}\big\|_{{\rm F}}\le\frac{2}{\sigma_{r_{1}}(\Monestar)}B^{2}.\label{eq:claim_perturb}
\end{equation}

\end{itclaim}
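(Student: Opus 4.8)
The plan is to expand each of the three products in Claim~\ref{claim:perturb} in terms of the single perturbation matrix $\bE\coloneqq\Delmis+\Deltrip$ (so that $\EL=\bE\Rinv$ and $\ER=\bE^{\top}\Linv$) and the ScaledGD preconditioners built from $(\bL,\bR)$, thereby reducing every quantity to a product of the form $\bE\bW$ or $\bW\bE$ with $\bW$ of rank at most $r_{1}\le r$. I will then bound these using three ingredients already at hand: the mislabeling estimate \eqref{eq:Delmis_bound}, which gives $\|\Delmis\|_{\Frm}\le 2(\|\Delone\|_{\Frm}^{2}+\sigma^{2})/\min_{k\ne1}\|\Mkstar-\Monestar\|_{\Frm}$; the finite-sample (restricted-isometry-type) bound \eqref{eq:case1_fs}, $\|\Deltrip\bW\|_{\Frm}\le1.35\delta\sqrt{\|\Delone\|_{\Frm}^{2}+\sigma^{2}}\,\|\bW\|$ for any rank-$\le r$ matrix $\bW$, together with its transposed version (valid since the underlying RIP is two-sided); and the standard ScaledGD spectral regularity of the iterate, $\sigma_{\min}(\bL)\,\sigma_{\min}(\bR)\ge\tfrac{1}{2}\sigma_{r_{1}}(\Monestar)$, which in particular makes the ``double preconditioner'' $\bN\coloneqq\bR(\bR^{\top}\bR)^{-1}(\bL^{\top}\bL)^{-1}\bL^{\top}$ obey $\|\bN\|\le2/\sigma_{r_{1}}(\Monestar)$.

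First I would record the algebraic identities. Writing $\Rtildep=\bR-\eta\ponehat w_{1}\Delone^{\top}\Linv$ and using $(\Linv)^{\top}=(\bL^{\top}\bL)^{-1}\bL^{\top}$, one gets $\Rinv(\Rtildep)^{\top}=\bm{P}_{\bR}-\eta\ponehat w_{1}\,\bN\Delone$, where $\bm{P}_{\bR}\coloneqq\bR(\bR^{\top}\bR)^{-1}\bR^{\top}$ is the orthogonal projector onto $\col(\bR)$; hence $\EL(\Rtildep)^{\top}=\bE\bm{P}_{\bR}-\eta\ponehat w_{1}\,\bE\bN\Delone$. By the same manipulation $\Ltildep(\ER)^{\top}=\bm{P}_{\bL}\bE-\eta\ponehat w_{1}\,(\Delone\bN)\bE$ with $\bm{P}_{\bL}\coloneqq\bL(\bL^{\top}\bL)^{-1}\bL^{\top}$, and $\EL(\ER)^{\top}=\bigl[\bE\,\bR(\bR^{\top}\bR)^{-1}\bigr]\bigl[(\bL^{\top}\bL)^{-1}\bL^{\top}\,\bE\bigr]$. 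In each of these, the matrices sandwiched against $\bE$ --- namely $\bm{P}_{\bR}$, $\bm{P}_{\bL}$, $\bN\Delone$, $\Delone\bN$, $\bR(\bR^{\top}\bR)^{-1}$, $(\bL^{\top}\bL)^{-1}\bL^{\top}$ --- all have rank at most $r_{1}\le r$, so \eqref{eq:case1_fs} (or its transpose) applies to the $\Deltrip$ contribution while the $\Delmis$ contribution is controlled by submultiplicativity and \eqref{eq:Delmis_bound}.

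Next I would assemble the bounds. For the projector pieces, $\|\bE\bm{P}_{\bR}\|_{\Frm}\le\|\Delmis\|_{\Frm}+1.35\delta\sqrt{\|\Delone\|_{\Frm}^{2}+\sigma^{2}}\le B$ (and identically $\|\bm{P}_{\bL}\bE\|_{\Frm}\le B$), using \eqref{eq:Delmis_bound} and $1.35<2$. For the cross pieces, $(\bL,\bR)\in\neighbor$ gives $\|\Delone\|\le\|\Delone\|_{\Frm}\le c_{1}\sigma_{r_{1}}(\Monestar)$, hence $\|\bN\Delone\|\le\|\bN\|\|\Delone\|\le2c_{1}$ and likewise $\|\Delone\bN\|\le2c_{1}$, so that $\eta\ponehat w_{1}\,\|\bE\bN\Delone\|_{\Frm}\le\tfrac{2}{3}\cdot2c_{1}\bigl(\|\Delmis\|_{\Frm}+1.35\delta\sqrt{\|\Delone\|_{\Frm}^{2}+\sigma^{2}}\bigr)\le\tfrac{4c_{1}}{3}B\le B$ once $c_{1}$ is a sufficiently small constant (here using $0<\eta\ponehat w_{1}\le2/3$ as in the derivation of \eqref{eq:pop_improvement}). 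Adding the two contributions yields $\|\EL(\Rtildep)^{\top}\|_{\Frm}\le2B$, and the transposed computation gives $\|\Ltildep(\ER)^{\top}\|_{\Frm}\le2B$. For the last bound I would estimate each bracketed factor separately: $\|\bE\,\bR(\bR^{\top}\bR)^{-1}\|_{\Frm}\le\|\bR(\bR^{\top}\bR)^{-1}\|\bigl(\|\Delmis\|_{\Frm}+1.35\delta\sqrt{\|\Delone\|_{\Frm}^{2}+\sigma^{2}}\bigr)\le B/\sigma_{\min}(\bR)$ and similarly $\|(\bL^{\top}\bL)^{-1}\bL^{\top}\,\bE\|_{\Frm}\le B/\sigma_{\min}(\bL)$, so $\|\EL(\ER)^{\top}\|_{\Frm}\le B^{2}/\bigl(\sigma_{\min}(\bL)\sigma_{\min}(\bR)\bigr)\le2B^{2}/\sigma_{r_{1}}(\Monestar)$.

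The crux --- and the only place where more than bookkeeping is needed --- is the condition-number-free control of the preconditioners: the fact that $\bR(\bR^{\top}\bR)^{-1}(\bL^{\top}\bL)^{-1}\bL^{\top}$ behaves like the pseudoinverse $(\Monestar)^{+}$ up to the (small) error of the current iterate, equivalently that $\sigma_{\min}(\bL)$ and $\sigma_{\min}(\bR)$ do not collapse along the trajectory. This does not follow from $(\bL,\bR)\in\neighbor$ alone, since membership in $\neighbor$ only constrains the product $\bL\bR^{\top}$; it must be extracted from the ScaledGD geometry --- the balanced SVD initialization produced in Stage~2 together with the contraction (hence smallness) of the scaled distance $\mathsf{dist}(\bL,\bR;\Lonestar,\Ronestar)$ along the iteration --- and this spectral regularity has to be threaded through the induction in Steps~1--3 of the proof. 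Everything else is the triangle inequality, submultiplicativity of the spectral/Frobenius norms, and the two imported estimates \eqref{eq:Delmis_bound} and \eqref{eq:case1_fs}.
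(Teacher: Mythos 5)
Your overall plan is correct, and the identities you derive are the right ones, but there is a genuine error in the way you justify the preconditioner bound, and it breaks your argument for the third estimate $\big\|\EL(\ER)^{\top}\big\|_{{\rm F}}\le\frac{2}{\sigma_{r_{1}}(\Monestar)}B^{2}$. You factor $\Rinv(\bL^{\top}\bL)^{-1}\bL^{\top}=\big[\bR(\bR^{\top}\bR)^{-1}\big]\big[(\bL^{\top}\bL)^{-1}\bL^{\top}\big]$, bound each piece by $1/\sigma_{\min}(\bR)$ and $1/\sigma_{\min}(\bL)$ respectively, and then need $\sigma_{\min}(\bL)\sigma_{\min}(\bR)\ge\tfrac12\sigma_{r_1}(\Monestar)$. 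You correctly observe this does not follow from $(\bL,\bR)\in\neighbor$, since $\neighbor$ constrains only $\bL\bR^{\top}$ and is invariant under $(\bL,\bR)\mapsto(t\bL,t^{-1}\bR)$, which sends $\sigma_{\min}(\bL)\sigma_{\min}(\bR)$ nowhere in particular. But you then conclude that this regularity must be imported from a ``ScaledGD geometry'' induction (balancedness of the iterates, contraction of a scaled distance), which is both unnecessary and not actually what the claim needs --- the claim is stated for an arbitrary $(\bL,\bR)\in\neighbor$, so it cannot rely on a property that only holds along a particular trajectory.

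The fact you actually need, and the fact the paper uses, is weaker and does follow from $(\bL,\bR)\in\neighbor$ alone: writing the SVDs $\bL=\bU_{L}\boldsymbol{\Sigma}_{L}\bV_{L}^{\top}$ and $\bR=\bU_{R}\boldsymbol{\Sigma}_{R}\bV_{R}^{\top}$, one has
\[
\big\|\Rinv(\bL^{\top}\bL)^{-1}\bL^{\top}\big\|=\big\|\boldsymbol{\Sigma}_{R}^{-1}\bV_{R}^{\top}\bV_{L}\boldsymbol{\Sigma}_{L}^{-1}\big\|=\big\|\big(\boldsymbol{\Sigma}_{L}\bV_{L}^{\top}\bV_{R}\boldsymbol{\Sigma}_{R}\big)^{-1}\big\|=\frac{1}{\sigma_{r_{1}}(\bL\bR^{\top})}\le\frac{2}{\sigma_{r_{1}}(\Monestar)},
\]
the last inequality by Weyl and $\|\bL\bR^{\top}-\Monestar\|_{\Frm}\le c_{1}\sigma_{r_{1}}(\Monestar)\le\tfrac12\sigma_{r_{1}}(\Monestar)$. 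The middle identity is where the rebalancing invariance lives: the quantity depends on $\bL,\bR$ only through $\bL\bR^{\top}$, unlike the product $\sigma_{\min}(\bL)\sigma_{\min}(\bR)$ (it is in general strictly larger than that product). You already invoke exactly this bound on $\|\bN\|$ in your first two estimates --- it just has a different and more elementary justification than the one you give. To repair the third estimate you should keep the preconditioner $\bW\coloneqq\Rinv(\bL^{\top}\bL)^{-1}\bL^{\top}$ intact: apply the truncated-RIP and mislabeling bounds to the outer $\bE$ with the rank-$r_{1}$ matrix $\bW\bE$ to get $\|\bE\bW\bE\|_{\Frm}\le B\,\|\bW\bE\|$, then pass to $\|\bW\bE\|\le\|\bE^{\top}\bW^{\top}\|_{\Frm}$ and apply the transposed estimates against the rank-$r_{1}$ matrix $\bW^{\top}$ to get $\|\bE^{\top}\bW^{\top}\|_{\Frm}\le B\,\|\bW\|\le 2B/\sigma_{r_{1}}(\Monestar)$. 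This gives $\|\EL(\ER)^{\top}\|_{\Frm}\le 2B^{2}/\sigma_{r_{1}}(\Monestar)$ with no trajectory-level input, which is the point of the claim.
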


Putting (\ref{eq:pop_improvement}) and (\ref{eq:claim_perturb})
back to (\ref{eq:tsgd_improve}) and denoting $\Delone^{+}\coloneqq\bL^{+}(\bR^{+})^{\top}-\Monestar$,
we have
\begin{align}
\|\Delone^{+}\|_{{\rm F}} & \le\big(1-0.7\eta\ponehat w_{1}\big)\|\Delone\|_{{\rm F}}+\eta\left(4B+\frac{2\eta}{\sigma_{r_{1}}(\Monestar)}B^{2}\right).\label{eq:Delta_one_step}
\end{align}
It remains to control $B$. First, the relations $\delta\le c_{0}/K$
and $\|\Delone\|_{\Frm}\ge C_{2}K\delta\sigma$ (for some sufficiently
large constant $C_{2}>0$) imply that
\begin{equation}
\delta\sqrt{\|\Delone\|_{{\rm F}}+\sige^{2}}\le\delta\|\Delone\|_{\Frm}+\delta\sigma\le\frac{c_{3}}{K}\|\Delone\|_{{\rm F}}\label{eq:star1}
\end{equation}
for some sufficiently small constant $c_{3}>0$. Moreover, observing
that
\[
\frac{C_{2}K\sigma^{2}}{\min_{k\neq1}\|\Mkstar-\Monestar\|_{\Frm}}\le\|\Delone\|_{\Frm}\le\frac{c_{1}}{K}\min_{k\neq1}\|\Mkstar-\Monestar\|_{\Frm},
\]
we have
\begin{equation}
\frac{\|\Delone\|_{{\rm F}}^{2}+\sige^{2}}{\min_{k\neq1}\|\Mkstar-\Monestar\|_{{\rm F}}}=\frac{\|\Delone\|_{{\rm F}}^{2}}{\min_{k\neq1}\|\Mkstar-\Monestar\|_{{\rm F}}}+\frac{\sige^{2}}{\min_{k\neq1}\|\Mkstar-\Monestar\|_{{\rm F}}}\le\frac{c_{4}}{K}\|\Delone\|_{{\rm F}}\label{eq:star2}
\end{equation}
for some sufficiently small constant $c_{4}>0$. Putting (\ref{eq:star1})
and (\ref{eq:star2}) back into (\ref{eq:def_star}), we have
\begin{equation}
B\le\frac{2(c_{3}+c_{4})}{K}\|\Delone\|_{\Frm},\label{eq:B_Delone}
\end{equation}
which together with $\|\Delone\|_{\Frm}\le c_{1}\sigma_{r_{1}}(\Monestar)$
implies the existence of some small constant $c_{5}>0$ such that
\[
4B+\frac{2\eta}{\sigma_{r_{1}}(\Monestar)}B^{2}=4B\left(1+\frac{\eta}{2\sigma_{r_{1}}(\Monestar)}B\right)\le8B\le\frac{c_{5}}{K}\|\Delone\|_{\Frm}.
\]
Substituting this into (\ref{eq:Delta_one_step}), we arrive at the
desired bound
\[
\|\Delone^{+}\|_{\Frm}\le(1-c_{2}\eta\ponehat)\|\Delone\|_{\Frm}
\]
for some constant $c_{2}>0$; this is because in (\ref{eq:Delta_one_step}),
we have $p_{1}\asymp1/K$ by assumption, and $w_{1}\gtrsim1$ according
to (\ref{eq:tau_range_tsgd}).

\paragraph{Step 3: no blowing up below the error floor (\ref{eq:tsgd_error_floor}).}

Suppose that the estimation error satisfies
\begin{equation}
\|\Delone\|_{\Frm}\lesssim\max\left\{ K\sigma\delta,\frac{K\sigma^{2}}{\min_{k\neq1}\|\Mkstar-\Monestar\|_{\Frm}}\right\} .\label{eq:below_error_floor}
\end{equation}
We intend to show that, in this case, the estimation error of the
next iterate $\|\Delone^{+}\|_{\Frm}$ satisfies the same upper bound
(\ref{eq:below_error_floor}); if this claim were true, then combining
this with our results in Step 2 would complete the convergence analysis
of ScaledTGD. 

Note that (\ref{eq:Delta_one_step}) remains valid when $\|\Delone\|_{\Frm}$
is below the error floor, which implies that
\begin{equation}
\|\Delone^{+}\|_{\Frm}\lesssim\|\Delone\|_{\Frm}+KB\left(1+\frac{KB}{\sigma_{r_{1}}(\Monestar)}\right).\label{eq:one_step_tsgd}
\end{equation}
Recalling the definition of $B$ in (\ref{eq:def_star}), one has
\[
KB\lesssim K\big(\|\Delone\|_{\Frm}+\sigma\big)\left(\delta+\frac{\|\Delone\|_{\Frm}+\sigma}{\min_{k\neq1}\|\Mkstar-\Monestar\|_{\Frm}}\right).
\]
By the assumption that $\delta\lesssim1/K$ and $\sigma\lesssim\min_{k}\|\Mkstar\|_{\Frm}/K$,
we have $\|\Delone\|_{\Frm}\lesssim\sigma$ according to~(\ref{eq:below_error_floor}),
and thus $KB/\sigma_{r_{1}}(\Monestar)\lesssim\sigma/\sigma_{r_{1}}(\Monestar)\lesssim1$.
Consequently, on the right-hand side of (\ref{eq:one_step_tsgd})
we have
\[
KB\left(1+\frac{KB}{\sigma_{r_{1}}(\Monestar)}\right)\lesssim KB\lesssim K\sigma\left(\delta+\frac{\sigma}{\min_{k\neq1}\|\Mkstar-\Monestar\|_{\Frm}}\right),
\]
which has exactly the same form as the error floor in (\ref{eq:below_error_floor}).
This completes our proof for this step. 
\begin{proof}
[Proof of Claim \ref{claim:perturb}] We shall only prove the first
part of (\ref{eq:claim_perturb}) concerning $\|\EL(\Rtildep)^{\top}\|_{{\rm F}}$;
the analysis for $\|\Ltildep(\ER)^{\top}\|_{{\rm F}}$ is essentially
the same. By the triangle inequality, we have 
\begin{align}
\big\|\EL(\Rtildep)^{\top}\big\|_{{\rm F}} & \le\big\|\EL\bR^{\top}\big\|_{{\rm F}}+\eta\ponehat w_{1}\big\|\EL(\bL^{\top}\bL)^{-1}\bL^{\top}\Delone\big\|_{{\rm F}}.\label{eq:EL_Rpop}
\end{align}
We utilize (\ref{eq:Delmis_bound}) and (\ref{eq:case1_fs}) from
Step 1 to control the terms above. For the first term of \eqref{eq:EL_Rpop},
recognizing that $\|\Rinv\bR^{\top}\|\le1$ we have
\begin{align*}
\big\|\EL\bR^{\top}\big\|_{{\rm F}} & =\big\|(\Deltrip+\Delmis)\Rinv\bR^{\top}\big\|_{{\rm F}}\le\big\|\Deltrip\Rinv\bR^{\top}\big\|_{{\rm F}}+\big\|\Delmis\big\|_{{\rm F}}\\
 & \le2\left(\delta\sqrt{\|\Delone\|_{\Frm}^{2}+\sigma^{2}}+\frac{\|\Delone\|_{\Frm}^{2}+\sigma^{2}}{\min_{k\neq1}\|\Mkstar-\Monestar\|_{{\rm F}}}\right)=B.
\end{align*}
Regarding the second term of \eqref{eq:EL_Rpop}, we observe that
\begin{align*}
\big\|\EL(\bL^{\top}\bL)^{-1}\bL^{\top}\Delone\big\|_{{\rm F}} & =\big\|(\Deltrip+\Delmis)\Rinv(\bL^{\top}\bL)^{-1}\bL^{\top}\Delone\big\|_{{\rm F}}\\
 & \le\Big(\big\|\Deltrip\Rinv(\bL^{\top}\bL)^{-1}\bL^{\top}\big\|_{{\rm F}}+\big\|\Delmis\big\|_{{\rm F}}\big\|\Rinv(\bL^{\top}\bL)^{-1}\bL^{\top}\big\|\Big)\|\Delone\|_{{\rm F}}\\
 & \overset{{\rm (i)}}{\le}2\left(\delta\sqrt{\|\Delone\|_{\Frm}^{2}+\sigma^{2}}+\frac{\|\Delone\|_{\Frm}^{2}+\sigma^{2}}{\min_{k\neq1}\|\Mkstar-\Monestar\|_{{\rm F}}}\right)\frac{2}{\sigma_{r_{1}}(\Monestar)}\cdot c_{1}\sigma_{r_{1}}(\Monestar)\\
 & =4c_{1}\left(\delta\sqrt{\|\Delone\|_{\Frm}^{2}+\sigma^{2}}+\frac{\|\Delone\|_{\Frm}^{2}+\sigma^{2}}{\min_{k\neq1}\|\Mkstar-\Monestar\|_{{\rm F}}}\right)=2c_{1}B,
\end{align*}
where (i) follows from $\|\Delone\|_{{\rm F}}\le c_{1}\sigma_{r_{1}}(\Monestar)$
(see (\ref{eq:local_region_general})) as well as the following fact
(which will be proved at the end of this section): for any $\bL\in\R^{n_{1}\times r_{1}}$
and $\bR\in\R^{n_{2}\times r_{1}}$,
\begin{equation}
\text{if}\quad\big\|\bL\bR^{\top}-\Monestar\big\|_{{\rm F}}\le\frac{\sigma_{r_{1}}(\Monestar)}{2},\quad\text{then}\quad\big\|\Linv(\bR^{\top}\bR)^{-1}\bR^{\top}\big\|\le\frac{2}{\sigma_{r_{1}}(\Monestar)}.\label{eq:fact_Linv_Rinv}
\end{equation}
Combining these with (\ref{eq:EL_Rpop}) establishes that $\|\EL(\Rtildep)^{\top}\|_{{\rm F}}\le2B$,
which is the first part of (\ref{eq:claim_perturb}). 

Finally, for the second part of (\ref{eq:claim_perturb}), we can
apply similar techniques to reach
\begin{align*}
\big\|\EL(\ER)^{\top}\big\|_{{\rm F}} & =\big\|(\Deltrip+\Delmis)\Rinv(\bL^{\top}\bL)^{-1}\bL^{\top}(\Deltrip+\Delmis)\big\|_{{\rm F}}\\
 & \le\frac{2}{\sigma_{r_{1}}(\Monestar)}\cdot4\left(\delta\sqrt{\|\Delone\|_{\Frm}^{2}+\sigma^{2}}+\frac{\|\Delone\|_{\Frm}^{2}+\sigma^{2}}{\min_{k\neq1}\|\Mkstar-\Monestar\|_{{\rm F}}}\right)^{2}=\frac{2}{\sigma_{r_{1}}(\Monestar)}B^{2}.
\end{align*}
\end{proof}
\begin{proof}
[Proof of (\ref{eq:fact_Linv_Rinv})]Weyl's inequality tells us that
\begin{equation}
\sigma_{r_{1}}\big(\bL\bR^{\top}\big)\ge\sigma_{r_{1}}\big(\Monestar\big)-\big\|\bL\bR^{\top}-\Monestar\big\|_{{\rm F}}\ge\frac{\sigma_{r_{1}}\big(\Monestar\big)}{2},\label{eq:sigr1}
\end{equation}
which further implies that both $\bL$ and $\bR$ have full column
rank $r_{1}$. Consequently, we denote the SVD of $\bL$ and $\bR$
as $\bL=\UL\SigL\VL^{\top}$ and $\bR=\UR\SigR\VR^{\top}$, where
$\bV_{L},\bV_{R}$ are $r_{1}\times r_{1}$ orthonormal matrices.
With the SVD representations in place, it is easy to check that 
\[
\bL\bR^{\top}=\UL\SigL\VL^{\top}\VR\SigR\UR^{\top},\quad\text{and}\quad\Linv(\bR^{\top}\bR)^{-1}\bR^{\top}=\UL\SigL^{-1}\VL^{\top}\VR\SigR^{-1}\UR^{\top}.
\]
In addition, the orthonormality of $\bm{V}_{L}$ and $\bm{V}_{R}$
implies 
\[
(\SigL\VL^{\top}\VR\SigR)^{-1}=\SigR^{-1}(\VL^{\top}\VR)^{-1}\SigL^{-1}=\SigR^{-1}\VR^{\top}\VL\SigL^{-1}=(\SigL^{-1}\VL^{\top}\VR\SigR^{-1})^{\top},
\]
thus indicating that
\begin{align*}
\big\|\Linv(\bR^{\top}\bR)^{-1}\bR^{\top}\big\| & =\big\|\UL\SigL^{-1}\VL^{\top}\VR\SigR^{-1}\UR^{\top}\big\|=\big\|\SigL^{-1}\VL^{\top}\VR\SigR^{-1}\big\|\\
 & =\big\|(\SigL\VL^{\top}\VR\SigR)^{-1}\big\|=\frac{1}{\sigma_{r_{1}}\big(\SigL\VL^{\top}\VR\SigR\big)}=\frac{1}{\sigma_{r_{1}}\big(\bL\bR^{\top}\big)}.
\end{align*}
Combining this with (\ref{eq:sigr1}) completes the proof.
\end{proof}

\section{Technical lemmas \label{subsec:technical_lemmas}}

This section collects several technical lemmas that are helpful for
our analysis (particularly for the analysis of Stage 3). For notational
convenience, we define the set of low-rank matrices as 
\begin{equation}
\setlowrank_{r}\coloneqq\big\{\bX\in\R^{n_{1}\times n_{2}}:\rank(\bX)\le r\big\}.\label{eq:defn-rank-r-set}
\end{equation}
We remind the reader of the definitions $\ind(a;b)=\ind(|a|\le b)$
for $a,b\in\mathbb{R}$ and $w(x)=\int_{-x}^{x}t^{2}\phi(t)\,\mathrm{d}t$
for $x\geq0$.

\paragraph{Variants of matrix-RIP. }

We recall the standard notion of restricted isometry property (RIP)
from the literature of matrix sensing, and introduce a variant called
\emph{truncated} RIP (TRIP). 
\begin{defn}
Let $\{\Ai\}_{i=1}^{m}$ be a set of matrices in $\R^{n_{1}\times n_{2}}$.
Consider $1\le r\le\min\{n_{1},n_{2}\}$ and $0<\delta<1$. 
\begin{enumerate}
\item We say that $\{\Ai\}_{1\le i\le m}$ satisfy $(r,\delta)$-RIP if
\begin{equation}
\bigg|\frac{1}{m}\sum_{i=1}^{m}\langle\Ai,\bX\rangle\langle\Ai,\bZ\rangle-\langle\bX,\bZ\rangle\bigg|\le\delta\|\bX\|_{{\rm F}}\|\bZ\|_{{\rm F}}\label{eq:defn-RIP}
\end{equation}
holds simultaneously for all $\bX,\bm{Z}\in\setlowrank_{r}$.
\item We say that $\{\Ai\}_{1\le i\le m}$ satisfy $(r,\delta)$-TRIP if
\begin{equation}
\bigg|\frac{1}{m}\sum_{i=1}^{m}\langle\Ai,\bX\rangle\ind\big(\langle\Ai,\bX\rangle;\tau\|\bX\|_{\Frm}\big)\langle\Ai,\bZ\rangle-w(\tau)\langle\bX,\bZ\rangle\bigg|\le\delta\tau\|\bX\|_{\Frm}\|\bZ\|_{{\rm F}}\label{eq:def_TRIP}
\end{equation}
holds simultaneously for all $\bX,\bZ\in\setlowrank_{r}$ and for
all $0\le\tau\le1.35$.
\end{enumerate}
\end{defn}
As it turns out, the Gaussian design satisfies the above notion of
RIP and TRIP, as formalized below. 
\begin{lem}
\label{lem:TRIP} Let $\{\Ai\}_{1\le i\le m}$ be random matrices
in $\R^{n_{1}\times n_{2}}$ with i.i.d.~$\Ncal(0,1)$ entries, and
denote $n\coloneqq\max\{n_{1},n_{2}\}$. There exist some sufficiently
large constants $C_{1},C_{3}>0$ and some other constants $C_{2},c_{2},C_{4},c_{4}>0$
such that 
\begin{enumerate}
\item If $m\ge C_{1}nr\delta^{-2}\log(1/\delta)$, then with probability
at least $1-C_{2}e^{-c_{2}n}$, $\{\Ai\}_{1\le i\le m}$ satisfy $(r,\delta)$-RIP.
\item If $m\ge C_{3}nr\delta^{-2}\log m$, then with probability at least
$1-C_{4}e^{-c_{4}n}$, $\{\Ai\}_{1\le i\le m}$ satisfy $(r,\delta)$-TRIP. 
\end{enumerate}
\end{lem}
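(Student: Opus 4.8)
The plan is to treat the two parts separately: Part 1 is the classical matrix-RIP estimate, whereas Part 2 (the truncated version) is where the genuinely new work lies. Throughout, the engine is a Bernstein bound for a fixed low-rank matrix (or pair), combined with an $\epsilon$-net over the set of low-rank matrices whose SVD parametrization has dimension $O(nr)$; the difference between the two parts is entirely in how the extra truncation parameter $\tau$ is handled.

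For Part 1 I would run the standard $\epsilon$-net argument (as in \cite{candes2011tight,recht2010guaranteed}). By the polarization identity it suffices to establish the one-sided bound $\big|\frac{1}{m}\sum_{i=1}^{m}\langle\Ai,\bX\rangle^{2}-\|\bX\|_{\Frm}^{2}\big|\le\delta\|\bX\|_{\Frm}^{2}$ for every $\bX\in\setlowrank_{2r}$. For a fixed $\bX$ with $\|\bX\|_{\Frm}=1$ the scalars $\langle\Ai,\bX\rangle$ are i.i.d.\ $\Ncal(0,1)$, so $\frac{1}{m}\sum_i\langle\Ai,\bX\rangle^{2}$ is an average of sub-exponential variables and Bernstein's inequality gives a failure probability $\le 2e^{-cm\delta^{2}}$. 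An $\epsilon$-net $\mathcal{N}$ of $\{\bX\in\setlowrank_{2r}:\|\bX\|_{\Frm}=1\}$, obtained through the SVD, satisfies $\log|\mathcal{N}|\lesssim nr\log(1/\epsilon)$; choosing $\epsilon\asymp\delta$ and $m\gtrsim nr\delta^{-2}\log(1/\delta)$ ensures $m\delta^{2}\gtrsim nr\log(1/\delta)+n$, so a union bound controls all net points with probability $1-Ce^{-cn}$, and a routine approximation step (bounding the residual $\sup_{\bX,\bZ\in\setlowrank_{2r}}\frac{1}{m}\sum_i|\langle\Ai,\bX\rangle\langle\Ai,\bZ\rangle|$ at scale $\epsilon$ via the same estimate) extends the bound to all of $\setlowrank_{2r}$; polarization then yields $(r,\delta)$-RIP.

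For Part 2 I would first record the population identity. By homogeneity one may assume $\|\bX\|_{\Frm}=\|\bZ\|_{\Frm}=1$; writing $u=\langle\Ai,\bX\rangle$ and using Gaussian conditioning $\E[\langle\Ai,\bZ\rangle\mid u]=\langle\bX,\bZ\rangle\, u$ together with the fact that $\E[u^{2}\ind(|u|\le\tau)]=w(\tau)$ for $u\sim\Ncal(0,1)$, one obtains $\E\big[\langle\Ai,\bX\rangle\,\ind(\langle\Ai,\bX\rangle;\tau)\,\langle\Ai,\bZ\rangle\big]=w(\tau)\langle\bX,\bZ\rangle$, which is precisely the centering in \eqref{eq:def_TRIP}. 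The payoff of truncation is that $\big|\langle\Ai,\bX\rangle\,\ind(\langle\Ai,\bX\rangle;\tau)\big|\le\tau\le1.35$, so for a fixed triple $(\bX,\bZ,\tau)$ the summand is a bounded-times-subgaussian random variable with sub-exponential norm $\lesssim\tau$, and Bernstein again gives a failure probability $\le 2e^{-cm\delta^{2}}$ at the target scale $\delta\tau$. One then intends to union over an $\epsilon$-net of pairs $(\bX,\bZ)\in\setlowrank_{r}\times\setlowrank_{r}$ as before, and over a net of $\tau\in[0,1.35]$.

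The main obstacle — and the reason Part 2 requires $\log m$ in place of the $\log(1/\delta)$ of Part 1 — is uniformity in the truncation level $\tau$: since $\tau\mapsto\ind(\langle\Ai,\bX\rangle;\tau\|\bX\|_{\Frm})$ is a step function, a fine Euclidean net on $[0,1.35]$ cannot be transferred in a Lipschitz fashion. The key observation is that, for a fixed $\bX$, as $\tau$ sweeps $[0,1.35]$ the truncation pattern $\big(\ind(\langle\Ai,\bX\rangle;\tau\|\bX\|_{\Frm})\big)_{i=1}^{m}$ takes at most $m+1$ distinct values, with breakpoints at the order statistics of $|\langle\Ai,\bX\rangle|/\|\bX\|_{\Frm}$; on each maximal sub-interval between consecutive breakpoints the empirical average in \eqref{eq:def_TRIP} is literally constant in $\tau$, so after union-bounding (via the per-triple Bernstein estimate) over these $O(m)$ distinguished values of $\tau$ and over the $\epsilon$-net of rank-$r$ unit-Frobenius pairs $(\bX,\bZ)$, the only residual error inside a sub-interval comes from the variation of the smooth centering $w(\tau)\langle\bX,\bZ\rangle$, which is Lipschitz in $\tau$ on $[0,1.35]$ and hence negligible. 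Off-net perturbations of $(\bX,\bZ)$ are handled by a short argument combining the RIP of Part 1 (for the $\bZ$-direction, which enters linearly) with monotonicity of the indicator in $\tau$ (for the $\bX$-direction, a small perturbation of which merely shifts the effective threshold and flips a controlled number of summands). The union bound then has log-cardinality $\lesssim nr\log(1/\epsilon)+\log m$; taking $\epsilon$ a small polynomial in $\delta/n$ and demanding $m\delta^{2}\gtrsim nr\log m+n$ yields the advertised sample size $m\gtrsim nr\delta^{-2}\log m$ together with the failure probability $1-C_{4}e^{-c_{4}n}$.
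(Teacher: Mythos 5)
Your Part~1 is the standard $\epsilon$-net plus Bernstein argument and is fine (the paper in fact just cites \cite{candes2011tight} for it). Your Part~2 correctly records the population identity and the per-triple sub-exponential tail bound, and correctly diagnoses uniformity in $\tau$ as the crux, but the mechanism you propose for that uniformity has a genuine gap. You plan to ``union bound (via the per-triple Bernstein estimate) over the $O(m)$ distinguished values of $\tau$,'' taking those values to be the breakpoints of the piecewise-constant empirical sum, i.e.~the order statistics of $|\langle\Ai,\bX\rangle|/\|\bX\|_{\Frm}$. Those thresholds are \emph{data-dependent}, and the per-triple Bernstein bound is only valid when $(\bX,\bZ,\tau)$ is fixed independently of $\{\Ai\}$, so it cannot legitimately be applied at a random $\tau$ equal to one of the $|\langle\Ai,\bX\rangle|$. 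If you instead take a \emph{deterministic} grid on $[0,1.35]$, the empirical sum is no longer constant on a grid sub-interval --- such an interval generically contains on the order of $m\cdot(\text{spacing})$ breakpoints --- so the claim that ``the only residual error inside a sub-interval comes from the variation of the smooth centering'' fails; you would additionally need a quantitative bound on the empirical mass that migrates in or out of the truncation window as $\tau$ moves within the interval, and that piece is missing from your sketch. Separately, you never address the tiny-$\tau$ regime: when $\tau$ is polynomially small in $m$, the additive part of a grid-point Bernstein estimate dominates the target relative scale $\delta\tau$, and the proof needs a different argument there.

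The paper's actual proof sidesteps both issues by a different device: it replaces $\ind(\cdot;\tau)$ with a piecewise-linear surrogate $\indtilde(\cdot;\tau)$ that is $1/(\cchi\tau)$-Lipschitz with $\cchi$ polynomially small in $m$, runs a covering argument in $(\bX,\bZ,\tau)$ directly on the smoothed process (where a fine deterministic $\tau$-net is now legitimate because the smoothed summand is Lipschitz in $\tau$), and then controls the population and empirical discrepancy between $\ind$ and $\indtilde$ by the sandwich $\indtilde(\cdot;\tau)\le\ind(\cdot;\tau)\le\indtilde(\cdot;\tau/(1-\cchi))$ together with RIP; the case $0\le\tau<m^{-100}$ is handled separately with a crude Cauchy--Schwarz/RIP estimate and the uniform indicator bound of Lemma~\ref{lemma:helper}. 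Your breakpoint idea could in principle be made rigorous --- for instance by decomposing $\langle\Ai,\bZ\rangle$ into its projection onto $\langle\Ai,\bX\rangle$ plus an independent Gaussian remainder, treating the first piece via monotonicity in $\tau$ and the second via a conditional Gaussian-process bound --- but that is a substantially more delicate argument than your sketch suggests, and it is not the route the paper takes.
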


\paragraph{Empirical quantiles.}

Our next technical lemma is a uniform concentration result for empirical
quantiles. Given the design matrices $\{\Ai\}_{1\le i\le N}$, the
index sets $\{\Omegakstar\}_{1\le k\le K}$ and the low-rank matrices
$\{\bX_{k}\}_{1\le k\le K}$, we define several sets as follows:
\begin{equation}
\Dcal_{k}\coloneqq\Big\{\big|\langle\Ai,\bX_{k}\rangle\big|\Big\}_{i\in\Omegakstar},\quad1\le k\le K;\qquad\Dcal\coloneqq\Dcal_{1}\cup\dots\cup\Dcal_{K}.\label{eq:def_D}
\end{equation}
In addition, let us introduce the following set of low-rank matrices:
\begin{equation}
\Tcal_{1}\coloneqq\Big\{(\bX_{1},\dots,\bX_{K}):\bX_{k}\in\setlowrank_{r},1\le k\le K;\,0<\|\bX_{1}\|_{\Frm}\le\frac{c_{0}}{K}\,\min_{k\neq1}\|\bX_{k}\|_{\Frm}\Big\},\label{eq:set_Tquant}
\end{equation}
where $c_{0}>0$ is some sufficiently small constant. Recall that
$\quantalpha(\Dcal)$ denotes the $\alpha$-quantile of $\Dcal$,
as defined in (\ref{eq:def_quantile_distr}). 
\begin{lem}
\label{lem:quantile} Let $\{\Ai\}_{1\le i\le N}$ be random matrices
in $\R^{n_{1}\times n_{2}}$ with i.i.d.~$\Ncal(0,1)$ entries. Set
$n=\max\{n_{1},n_{2}\}$, and suppose the index sets $\{\Omegakstar\}_{1\le k\le K}$
are disjoint and satisfy the condition (\ref{eq:balanced_pk}). If
$0.6\ponehat\le\alpha\le0.8\ponehat$ and $\Nall\ge C_{0}nrK^{3}\log N$
for some sufficiently large constant $C_{0}>0$, then there exist
some universal constants $C,c>0$ such that: with probability at least
$1-Ce^{-cn}$,
\[
0.54<\frac{Q_{\alpha}(\Dcal)}{\|\bX_{1}\|_{\Frm}}<1.35
\]
holds simultaneously for all $(\bX_{1},\dots,\bX_{K})\in\Tcal_{1}$,
where $\Dcal$ is defined in (\ref{eq:def_D}).
\end{lem}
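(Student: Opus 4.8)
\textbf{Proof plan for Lemma \ref{lem:quantile}.}

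The plan is to establish the two-sided bound on the empirical $\alpha$-quantile $Q_\alpha(\Dcal)$ by relating it to the \emph{population} quantiles of the relevant Gaussian magnitudes and then paying a small price for (i) the finite-sample fluctuation of the empirical CDF and (ii) the uniformity over the continuum of low-rank tuples $(\bX_1,\dots,\bX_K)\in\Tcal_1$. Since $\ai$ is irrelevant here and the design is matrix-Gaussian, for a fixed matrix $\bX_k$ with $\|\bX_k\|_\Frm=1$ the scalar $\langle\Ai,\bX_k\rangle$ is exactly $\Ncal(0,1)$, so $|\langle\Ai,\bX_k\rangle|$ has a known half-normal law. The first step is therefore purely distributional: let $G(t)\coloneqq\Pr(|\Ncal(0,1)|\le t)$ and note that the population $\beta$-quantile of $|\Ncal(0,1)|$, call it $q_\beta\coloneqq G^{-1}(\beta)$, satisfies $q_{0.5}\approx 0.6745$ and $q_{0.6}\approx 0.8416$; a short numerical check gives $q_{0.5}>0.54$ and $q_{0.8}<1.35$ (indeed $q_{0.8}\approx 1.2816$), which pins down the target constants with a little room to spare. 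Combined with the balancedness $p_k\asymp 1/K$, the constraint $0.6p_1\le\alpha\le 0.8p_1$ means the fraction $\alpha/p_1\in[0.6,0.8]$ of \emph{all} $N$ samples that we keep is, because $\|\bX_1\|_\Frm$ is by definition at least a factor $K/c_0$ smaller than every other $\|\bX_k\|_\Frm$, dominated by the contribution of the $\Omega_1^\star$-samples: the residuals $|\langle\Ai,\bX_k\rangle|$ for $k\neq 1$ are typically of order $\|\bX_k\|_\Frm\gg \|\bX_1\|_\Frm$, hence they almost never fall below a threshold of order $\|\bX_1\|_\Frm$. This is the conceptual heart of why the quantile of the \emph{mixture} set $\Dcal$ behaves like a quantile of $\Dcal_1$ alone, rescaled by $\|\bX_1\|_\Frm$.

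Concretely, I would proceed as follows. First, reduce to $\|\bX_1\|_\Frm=1$ by homogeneity of $\langle\Ai,\cdot\rangle$, so that the statement becomes $0.54<Q_\alpha(\Dcal)<1.35$ with $\|\bX_k\|_\Frm\ge (K/c_0)$ for $k\neq 1$. Second, for the \emph{lower} bound, observe that $\Pr(X\le Q_\alpha(\Dcal)\mid X\sim\mathrm{Unif}(\Dcal))\ge\alpha$ forces at least $\alpha N$ entries of $\Dcal$ to lie in $[0,Q_\alpha(\Dcal)]$; since at most $|\Omega_1^\star|=p_1 N$ of them come from $\Dcal_1$ and $\alpha\ge 0.6 p_1$, at least $0.6$ of the $\Dcal_1$-entries lie below $Q_\alpha(\Dcal)$ after discarding the ``large'' off-component entries — but this needs a control that few off-component entries are small, which is the tail estimate $\Pr(|\langle\Ai,\bX_k\rangle|\le t)\le t/\|\bX_k\|_\Frm\le c_0 t/K$, summed over the $\lesssim N$ such samples and the $K-1$ components, giving an expected count $\lesssim c_0 N Q_\alpha(\Dcal)$ of small off-component values; an additive-Chernoff bound upgrades this to a high-probability $O(c_0) N$ bound. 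Hence at least $(0.6-O(c_0))$ of $\Dcal_1$ lies below $Q_\alpha(\Dcal)$, and since the empirical $0.55$-quantile of $\Dcal_1$ concentrates (by the DKW inequality applied to the $\approx N/K$ i.i.d.\ half-normals in $\Dcal_1$, with deviation $\sqrt{K\log N / N}$ which is $o(1)$ under $N\gtrsim nrK^3\log N$) around $q_{0.55}>0.54$, we get $Q_\alpha(\Dcal)>0.54$. Third, for the \emph{upper} bound, it suffices to exhibit $0.8 p_1 N+1\le $ (number of $\Dcal$-entries in $[0,1.35]$); but the $\Dcal_1$-entries alone contribute at least $(G(1.35)-o(1))\cdot p_1 N\ge 0.9 p_1 N$ such entries with high probability (again by DKW on $\Dcal_1$), which already exceeds $0.8p_1 N\ge\alpha N$, so $Q_\alpha(\Dcal)\le 1.35$.

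Fourth — and this is the step I expect to be the main obstacle — all of the above must hold \emph{simultaneously} over the uncountable set $\Tcal_1$. The standard remedy is a covering/$\epsilon$-net argument on the low-rank manifolds: the set of rank-$\le r$ matrices in $\R^{n_1\times n_2}$ with Frobenius norm in a fixed dyadic band admits an $\epsilon$-net of cardinality $(C/\epsilon)^{O(nr)}$, so the joint net over $(\bX_1,\dots,\bX_K)$ has log-cardinality $O(nrK\log(1/\epsilon))$; choosing $\epsilon$ polynomially small in $n$ and invoking a union bound against the per-point exponential failure probability $Ce^{-cn}$ (from the Chernoff/DKW estimates, each with deviation parameter chosen as a small constant) is affordable precisely because $N\gtrsim nrK^3\log N$ makes the finite-sample fluctuations $o(1)$ and the net is only exponential in $nrK$. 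The remaining subtlety is a \emph{Lipschitz/stability} argument showing that perturbing each $\bX_k$ by $\epsilon$ in Frobenius norm changes $Q_\alpha(\Dcal)$ by at most $\mathrm{poly}(n)\cdot\epsilon$ (uniformly, with high probability); this follows by bounding $\max_i\|\Ai\|$ by $O(\sqrt n)$ on the good event and noting that $|\langle\Ai,\bX_k\rangle-\langle\Ai,\bX_k'\rangle|\le\|\Ai\|\,\|\bX_k-\bX_k'\|_\Frm$, together with an anti-concentration bound on the half-normal density near the quantile (its density is bounded below by a positive constant on a neighborhood of $[0.54,1.35]$) to convert an $O(\sqrt n\,\epsilon)$ shift in each coordinate into an $O(\sqrt n\,\epsilon)$ shift in the quantile. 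I would also need to handle the scale-invariance carefully: since $\Tcal_1$ is scale-free but the net is not, one discretizes the ratios $\|\bX_k\|_\Frm/\|\bX_1\|_\Frm$ on a (logarithmically spaced) grid up to a polynomially large cutoff and argues that beyond the cutoff the off-component entries are so large they provably never enter $[0,1.35]$. Assembling these pieces — distributional constants, the ``few small off-component values'' Chernoff bound, DKW on $\Dcal_1$, and the net-plus-Lipschitz uniformization — yields the claimed two-sided bound on the good event of probability $1-Ce^{-cn}$.
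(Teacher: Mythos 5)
Your plan follows essentially the same route as the paper's proof: reduce to $\|\bX_1\|_{\Frm}=1$ by homogeneity, separate the contribution of $\Dcal_1$ (whose entries are unit half-normals) from the off-components (whose entries rarely dip below a threshold of order $\|\bX_1\|_\Frm$ because $\|\bX_k\|_\Frm\ge K/c_0$), verify the numerical constants against the half-normal quantiles, and uniformize over the low-rank manifold via a covering-net plus a Lipschitz stability argument that exploits $\max_i\|\Ai\|_\Frm$. The paper does exactly this: the upper bound uses $Q_\alpha(\Dcal)\le Q_{0.8}(\Dcal_1)$ (your ``$\Dcal_1$ alone already contributes $\ge\alpha N$ entries in $[0,1.35]$'' is the same argument read contrapositively), and the lower bound bounds the fraction $B_N$ of all samples below a fixed cutoff by the sum of a $\Dcal_1$ piece and a small off-component piece.

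Two differences are worth flagging. First, your lower-bound step as written has a soft circularity: you bound the number of small off-component residuals at threshold $Q_\alpha(\Dcal)$, then deduce a lower bound on $Q_\alpha(\Dcal)$ — but you needed $Q_\alpha(\Dcal)\lesssim 1$ to make that count small in the first place. The paper sidesteps this by using a deterministic cutoff $Q_q(Z)/1.01$ with $q=0.7\alpha/p_1\in[0.42,0.56]$ and bounding $\Pr(Q_\alpha(\Dcal)<Q_q(Z)/1.01)\le\Pr(B_N>\alpha)$; equivalently you could simply prove the upper bound first. Second, your proposed discretization of the ratios $\|\bX_k\|_\Frm/\|\bX_1\|_\Frm$ is unnecessary: after normalization, the off-component counts only need a one-sided (upper) bound, and $\Pr(|\langle\Ai,\bX_k/\|\bX_k\|_\Frm\rangle|\le t/\|\bX_k\|_\Frm)\le\Pr(|\langle\Ai,\bX_k/\|\bX_k\|_\Frm\rangle|\le c_0 t/K)$ is monotone in $\|\bX_k\|_\Frm$, so one can replace $\|\bX_k\|_\Frm$ by its minimum $K/c_0$ and only ever cover the \emph{normalized} matrices $\bX_k/\|\bX_k\|_\Frm\in\setlowrank_r^{\mathsf{norm}}$. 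The paper packages the resulting uniform control of $\frac{1}{m}\sum_i\ind(|\langle\Ai,\bX\rangle|\le\tau)$ over $\setlowrank_r^{\mathsf{norm}}$ into a dedicated helper lemma (Lemma~\ref{lemma:helper}), which it then calls once for $\Dcal_1$ and once per off-component — a cleaner way to organize the covering, Lipschitz smoothing, and Bernstein steps that you describe piecemeal. Your DKW-plus-anti-concentration variant for $\Dcal_1$ would also work, but note that the Lipschitz constant per sample is governed by $\|\Ai\|_\Frm\lesssim\sqrt{n_1n_2}+\sqrt{\log N}$, not $O(\sqrt n)$, which only affects the polynomial choice of $\epsilon$ in the net.
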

\begin{rem}
\label{rem:noisy_TRIP_quantile} We can further incorporate additional
Gaussian noise $\{\ei\}$ into Lemmas \ref{lem:TRIP} and \ref{lem:quantile},
where $\ei\iid\Ncal(0,\sigma^{2})$. For example, we claim that, with
the same sample complexity $m$ as in Lemma \ref{lem:TRIP}, we have
the following noisy version of $(r,\delta)$-TRIP (\ref{eq:def_TRIP}):
\begin{equation}
\bigg|\frac{1}{m}\sum_{i=1}^{m}\big(\langle\Ai,\bX\rangle-\ei\big)\ind\bigg(\langle\Ai,\bX\rangle-\ei;\tau\sqrt{\|\bX\|_{\Frm}^{2}+\sigma^{2}}\bigg)\langle\Ai,\bZ\rangle-w(\tau)\langle\bX,\bZ\rangle\bigg|\le\delta\tau\sqrt{\|\bX\|_{\Frm}^{2}+\sigma^{2}}\|\bZ\|_{{\rm F}}.\label{eq:noisy_trip}
\end{equation}
To see this, let us define the augmented matrices
\[
\Xaug\coloneqq\begin{bmatrix}\bX & \boldsymbol{0}\\
\boldsymbol{0} & -\sigma
\end{bmatrix},\quad\Zaug\coloneqq\begin{bmatrix}\bZ & \boldsymbol{0}\\
\boldsymbol{0} & 0
\end{bmatrix},\quad\Aiaug\coloneqq\begin{bmatrix}\Ai & *\\*
* & \zeta_{i}/\sigma
\end{bmatrix},\quad1\le i\le m,
\]
where $*$ stands for some auxiliary i.i.d.~$\Ncal(0,1)$ entries.
Observe that $\{\Aiaug\}_{1\le i\le N}$ are random matrices with
i.i.d.~$\Ncal(0,1)$ entries; in addition, $\rank(\Xaug)=\rank(\bX)+1$,
$\rank(\Zaug)=\rank(\bZ)$, and $\|\Xaug\|_{\Frm}^{2}=\|\bX\|_{\Frm}^{2}+\sigma^{2}$;
finally, $\langle\Ai,\bX\rangle-\ei=\langle\Aiaug,\Xaug\rangle$,
$\langle\Ai,\bZ\rangle=\langle\Aiaug,\Zaug\rangle$, and $\langle\bX,\bZ\rangle=\langle\Xaug,\Zaug\rangle$.
Therefore, the left-hand side of (\ref{eq:noisy_trip}) can be equivalently
written as in the noiseless form (\ref{eq:def_TRIP}), in terms of
these augmented matrices, thus allowing us to apply Lemma \ref{lem:TRIP}
to prove (\ref{eq:noisy_trip}). This trick of augmentation can be
applied to Lemma \ref{lem:quantile} as well, which we omit here for
brevity.
\end{rem}

\paragraph{One miscellaneous result. }

Further, we record below a basic property concerning the function
$w(\cdot)$. 
\begin{fact}
\label{fact:w_quadratic}The function $w(\cdot)$ defined in (\ref{eq:def_w})
satisfies
\begin{equation}
\frac{w(x)}{w(y)}\le\frac{x^{2}}{y^{2}},\quad\quad0<x\le y\le1.35.\label{eq:w_quadratic}
\end{equation}
\end{fact}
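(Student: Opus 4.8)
The plan is to recast \eqref{eq:w_quadratic} as a monotonicity statement. Since $w(x),w(y)>0$ for $x,y\in(0,1.35]$, cross-multiplying shows that \eqref{eq:w_quadratic} is equivalent to $w(x)/x^{2}\le w(y)/y^{2}$ whenever $0<x\le y\le 1.35$; that is, the function $g(x)\coloneqq w(x)/x^{2}$ should be non-decreasing on $(0,1.35]$. So it suffices to verify $g'(x)\ge 0$ there.

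First I would differentiate. Because $\phi$ is even, the fundamental theorem of calculus gives $w'(x)=2x^{2}\phi(x)$, and since $w(x)=2\int_{0}^{x}t^{2}\phi(t)\,\mathrm{d}t$ one obtains
\[
g'(x)=\frac{x\,w'(x)-2w(x)}{x^{3}}=\frac{2}{x^{3}}\left(x^{3}\phi(x)-2\int_{0}^{x}t^{2}\phi(t)\,\mathrm{d}t\right).
\]
Hence it remains to prove $h(x)\coloneqq x^{3}\phi(x)-2\int_{0}^{x}t^{2}\phi(t)\,\mathrm{d}t\ge 0$ on $(0,1.35]$.

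Next I would study $h$ directly. Clearly $h(0)=0$, and using $\phi'(x)=-x\phi(x)$ a short computation gives $h'(x)=x^{2}\phi(x)(1-x^{2})$, which is positive on $(0,1)$ and negative on $(1,1.35]$. Thus $h$ increases from $h(0)=0$ on $[0,1]$ and decreases on $[1,1.35]$, so its minimum over $[0,1.35]$ is attained at an endpoint: $h(x)\ge\min\{h(0),h(1.35)\}$ for all $x\in[0,1.35]$.

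Finally, I would verify $h(1.35)\ge 0$ by an explicit numerical check. Integration by parts yields the closed form $\int_{0}^{x}t^{2}\phi(t)\,\mathrm{d}t=\Phi(x)-\tfrac12-x\phi(x)$ (with $\Phi$ the standard normal CDF), whence $h(x)=x\phi(x)(x^{2}+2)-(2\Phi(x)-1)$; substituting $x=1.35$, with $\phi(1.35)\approx 0.1604$ and $\Phi(1.35)\approx 0.9115$, gives $h(1.35)>0$, which completes the argument. The one delicate point is that this last margin is small --- $h(1.35)$ is on the order of $5\times 10^{-3}$, and $h$ does become negative for slightly larger arguments --- so the numerical step must be carried out with sufficient precision to be rigorous; but since $h$ is smooth and unimodal on $[0,1.35]$ this is routine, and the constant $1.35$ is precisely the threshold beyond which the estimate fails, consistent with its role as a hard cutoff elsewhere in the paper.
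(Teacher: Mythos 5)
Your proposal is correct and follows the same overall reduction as the paper — both recast \eqref{eq:w_quadratic} as the claim that $g(x)=w(x)/x^{2}$ is nondecreasing on $(0,1.35]$, differentiate, and reduce to $h(x)\ge 0$ on that interval (your $h$ is exactly half the paper's $h$, as $\sqrt{2/\pi}\,e^{-x^{2}/2}=2\phi(x)$). Where you diverge is the final step: the paper simply declares $h\ge 0$ "can be verified numerically" and points to a plot (Figure~\ref{fig:func_h}), whereas you compute $h'(x)=x^{2}\phi(x)(1-x^{2})$, deduce that $h$ is unimodal on $[0,1.35]$ with $h(0)=0$, and thereby reduce the entire positivity claim to the single-point check $h(1.35)\ge 0$, for which you also supply a closed form $h(x)=x\phi(x)(x^{2}+2)-(2\Phi(x)-1)$ via integration by parts. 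This is a genuinely tighter argument: it replaces an appeal to a plot over an interval by one explicit numerical evaluation with a quantifiable margin (roughly $5\times 10^{-3}$), and it explains \emph{why} the cutoff $1.35$ matters (it sits just below the zero of $h$). The one caveat you correctly flag yourself is that the final evaluation should use rigorous bounds on $\phi(1.35)$ and $\Phi(1.35)$ rather than rounded decimals, since the margin is small — but given the closed form this is routine, and overall your version is more self-contained and more rigorous than the paper's.
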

\begin{proof}
This result is equivalent to saying $w(x)/x^{2}\le w(y)/y^{2}$ for
any $0<x\le y\le1.35$.\texttt{ }Hence, it suffices to show that the
function $g(x)\coloneqq w(x)/x^{2}$ is nondecreasing over $(0,1.35]$,
or equivalently,
\begin{equation}
h(x)\coloneqq\sqrt{\frac{2}{\pi}}x^{3}e^{-\frac{x^{2}}{2}}-2w(x),\quad g'(x)=\frac{1}{x^{3}}h(x)\ge0,\quad0<x\le1.35.\label{eq:def_h}
\end{equation}
This can be verified numerically (see Figure \ref{fig:func_h}), which
completes the proof.
\end{proof}
\begin{figure}
\begin{centering}
\includegraphics[width=0.35\textwidth]{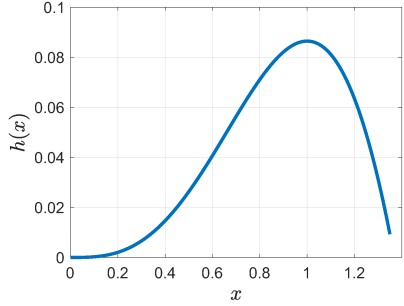}
\par\end{centering}
\caption{\label{fig:func_h} The function $h(\cdot)$ defined in (\ref{eq:def_h})
is nonnegative over the interval $(0,1.35]$.}
\end{figure}

The rest of this section is devoted to proving Lemmas \ref{lem:TRIP}
and \ref{lem:quantile}. We use the standard notions (e.g.~the subgaussian
norm $\|\cdot\|_{\psi_{2}}$) and properties related to subgaussian
random variables (cf.~\cite[Section 2]{vershynin2018high}). For
notational convenience, we define the normalized version of $\setlowrank_{r}$
defined in (\ref{eq:defn-rank-r-set}), as follows:
\begin{equation}
\setlowrank_{r}^{\mathsf{norm}}\coloneqq\big\{\bX\in\R^{n_{1}\times n_{2}}:\rank(\bX)\le r,\|\bX\|_{\Frm}=1\big\}.\label{eq:defn-rank-r-unit-norm}
\end{equation}
Before moving on, we record two results that will be useful throughout
the proof. 
\begin{lem}
\label{lemma:helper} Let $\{\Ai\}_{i=1}^{m}$ be a set of random
matrices in $\R^{n_{1}\times n_{2}}$ with i.i.d.~$\Ncal(0,1)$ entries.
Denote $n\coloneqq\max\{n_{1},n_{2}\}$, and let $Z$ be a random
variable having the same distribution as $|\mathcal{N}(0,1)|$. For
all $t>0$ and $0<\epsilon<1$, with probability at least $1-(9/\epsilon)^{3nr}\exp(-c_{1}mt^{2}/(\tau+t))-C_{2}e^{-c_{2}n}$,
the following
\[
\frac{1}{m}\sum_{i=1}^{m}\ind\Big({\left|\left\langle \bm{A}_{i},\bm{X}\right\rangle \right|\leq\tau}\Big)\leq\mathbb{P}\left(Z\leq1.01\tau\right)+t+\frac{200\epsilon}{\tau}
\]
holds simultaneously for all $\bm{X}\in\setlowrank_{r}^{\mathsf{norm}}$,
provided that $m\geq Cnr\log m$. Here, $c_{1},C_{2},c_{2}>0$ are
universal constants, and $C>0$ is some sufficiently large constant.
\end{lem}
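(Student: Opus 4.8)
The plan is to prove the bound first for a single fixed $\bm{X}\in\setlowrank_r^{\mathsf{norm}}$ and then upgrade it to a uniform statement via an $\epsilon$-net over $\setlowrank_r^{\mathsf{norm}}$, handling the discontinuity of the indicator by a Lipschitz relaxation. For the pointwise step, note that when $\bm{X}$ is fixed with $\|\bm{X}\|_{\Frm}=1$, the numbers $\langle\Ai,\bm{X}\rangle$ are i.i.d.\ $\Ncal(0,1)$, so $\{\ind(|\langle\Ai,\bm{X}\rangle|\le\tau)\}_i$ are i.i.d.\ Bernoulli with mean $\Pr(Z\le\tau)\le\sqrt{2/\pi}\,\tau$ and variance at most $\tau$; a one-sided Bernstein inequality then yields $\frac1m\sum_i\ind(|\langle\Ai,\bm{X}\rangle|\le\tau)\le\Pr(Z\le\tau)+t$ with failure probability $\exp(-c_1mt^2/(\tau+t))$, where the variance scale $\lesssim\tau$ is exactly what produces the denominator $\tau+t$ rather than $1+t$.

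Next I would pass to a net. Set $\gamma:=0.01\tau$ and let $\psi_\gamma:\R\to[0,1]$ be the $\tfrac1\gamma$-Lipschitz trapezoid equal to $1$ on $[-\tau,\tau]$ and to $0$ outside $[-\tau-\gamma,\tau+\gamma]$, so that $\ind(|a|\le\tau)\le\psi_\gamma(a)\le\ind(|a|\le1.01\tau)$. Take an $\epsilon$-net $\mathcal{N}_\epsilon$ of $\setlowrank_r^{\mathsf{norm}}$ with $|\mathcal{N}_\epsilon|\le(9/\epsilon)^{(n_1+n_2+1)r}\le(9/\epsilon)^{3nr}$ (the standard covering-number bound for rank-$r$ matrices in the unit Frobenius ball). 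For arbitrary $\bm{X}\in\setlowrank_r^{\mathsf{norm}}$, choosing $\bm{X}'\in\mathcal{N}_\epsilon$ with $\|\bm{X}-\bm{X}'\|_{\Frm}\le\epsilon$ (and noting $\rank(\bm{X}-\bm{X}')\le2r$), the Lipschitz property gives
\[
\frac1m\sum_{i=1}^m\ind\big(|\langle\Ai,\bm{X}\rangle|\le\tau\big)\le\frac1m\sum_{i=1}^m\psi_\gamma\big(\langle\Ai,\bm{X}'\rangle\big)+\frac1\gamma\cdot\frac1m\sum_{i=1}^m\big|\langle\Ai,\bm{X}-\bm{X}'\rangle\big|.
\]
To control the second term uniformly I would invoke the RIP part of Lemma~\ref{lem:TRIP} with rank $2r$ and a fixed constant $\delta$ (say $\delta=\tfrac12$), valid with probability $\ge1-C_2e^{-c_2n}$ under $m\ge Cnr\log m$, which gives $\frac1m\sum_i\langle\Ai,\bm{X}-\bm{X}'\rangle^2\le\tfrac32\epsilon^2$ for \emph{all} such perturbations; Cauchy--Schwarz then yields $\frac1m\sum_i|\langle\Ai,\bm{X}-\bm{X}'\rangle|\le\sqrt{3/2}\,\epsilon$, so the second term is at most $\sqrt{3/2}\,\epsilon/(0.01\tau)<200\epsilon/\tau$.

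Finally, applying the pointwise Bernstein bound of the first step to $\psi_\gamma$ in place of the indicator (still a bounded i.i.d.\ sum with mean $\le\Pr(Z\le1.01\tau)\le\tau$ and variance $\lesssim\tau$) at each of the $|\mathcal{N}_\epsilon|$ net points and union bounding, I would obtain $\frac1m\sum_i\psi_\gamma(\langle\Ai,\bm{X}'\rangle)\le\Pr(Z\le1.01\tau)+t$ simultaneously over the net, except with probability $(9/\epsilon)^{3nr}\exp(-c_1mt^2/(\tau+t))$. On the intersection of this event with the RIP event, combining the two displayed bounds delivers the claimed inequality uniformly over $\setlowrank_r^{\mathsf{norm}}$, with total failure probability at most $(9/\epsilon)^{3nr}\exp(-c_1mt^2/(\tau+t))+C_2e^{-c_2n}$.

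The hard part is the net transfer: because $\ind(|\cdot|\le\tau)$ is discontinuous, one cannot compare $\bm{X}$ to a nearby net point directly, and the fix — relaxing to a Lipschitz bump of width $\gamma\asymp\tau$ — is also the reason the target quantile widens from $\Pr(Z\le\tau)$ to $\Pr(Z\le1.01\tau)$ and the residual $200\epsilon/\tau\asymp\epsilon/\gamma$ appears. Making this work needs a uniform bound on $\frac1m\sum_i|\langle\Ai,\bm{X}-\bm{X}'\rangle|$ over all rank-$2r$ perturbations of Frobenius size $\le\epsilon$, which is precisely where the matrix-RIP of Lemma~\ref{lem:TRIP} and the hypothesis $m\ge Cnr\log m$ enter; one must also keep the Bernstein variance proxy at scale $\tau$ (not a constant) so that the net factor $(9/\epsilon)^{3nr}$ is overcome for moderate $t$.
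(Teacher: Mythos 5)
Your proposal is correct and follows essentially the same route as the paper: replace the indicator by a $\Theta(1/\tau)$-Lipschitz trapezoid dominated by $\ind(|\cdot|\le 1.01\tau)$ (the paper's function $\chi(\cdot;1.01\tau)$ with $c_\chi=0.01/1.01$ is exactly your $\psi_\gamma$ with $\gamma=0.01\tau$), apply one-sided Bernstein with variance proxy $\lesssim\tau$ at each point of a $(9/\epsilon)^{3nr}$-sized net and union bound, and control the net-transfer term via Cauchy--Schwarz together with the rank-$2r$ RIP from Lemma~\ref{lem:TRIP}. The only deviations are cosmetic (you take $\delta=1/2$ where the paper takes $\delta=1/4$ in the RIP, which still leaves slack in the $200\epsilon/\tau$ budget), so the argument matches the paper's proof.
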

\begin{prop}
\label{prop:Gaussian_max_norm} Consider $\ai\iid\Ncal(\boldsymbol{0},\bI_{d}),1\le i\le m$.
There exist some universal constants $C,c>0$ such that with probability
at least $1-Ce^{-cd}$, we have
\[
\max_{1\le i\le m}\|\ai\|_{2}\lesssim\sqrt{d}+\sqrt{\log m}.
\]
\end{prop}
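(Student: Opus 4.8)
The plan is to reduce this to a single-vector tail bound followed by a union bound over the $m$ samples. The key observation is that $\bx\mapsto\|\bx\|_2$ is $1$-Lipschitz on $\R^d$, so I would invoke the Gaussian concentration inequality for Lipschitz functions (e.g.~\cite{vershynin2018high}) to get, for each fixed $i$ and any $t>0$,
\[
\Pr\big(\|\ai\|_2\ge\E\|\ai\|_2+t\big)\le e^{-t^2/2}.
\]
Combining this with the elementary bound $\E\|\ai\|_2\le\sqrt{\E\|\ai\|_2^2}=\sqrt{d}$ (Jensen's inequality) yields $\Pr(\|\ai\|_2\ge\sqrt{d}+t)\le e^{-t^2/2}$ for every $1\le i\le m$.

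Next I would take a union bound over the $m$ samples to obtain $\Pr(\max_{1\le i\le m}\|\ai\|_2\ge\sqrt{d}+t)\le m\,e^{-t^2/2}$, and then choose $t\asymp\sqrt{d}+\sqrt{\log m}$ — for instance $t=2(\sqrt{d}+\sqrt{\log m})$, so that $t^2/2\ge 2(d+\log m)$ and hence $m\,e^{-t^2/2}\le m^{-1}e^{-2d}\le e^{-2d}$. On the resulting event, which has probability at least $1-e^{-2d}$, one has $\max_i\|\ai\|_2\le\sqrt{d}+2(\sqrt{d}+\sqrt{\log m})\le 3(\sqrt{d}+\sqrt{\log m})$, which is precisely the claimed bound (with universal constants, e.g., $C=1$ and $c=2$).

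There is essentially no obstacle here; the only point deserving a little care is that the proposition asks for a failure probability of the form $e^{-cd}$ that does not degrade with $m$, which forces us to absorb the $\sqrt{\log m}$ contribution into the bound on $\max_i\|\ai\|_2$ rather than into the exponent. An equally short alternative would be to bound $\|\ai\|_2^2\sim\chi^2_d$ directly via the Laurent--Massart $\chi^2$ tail inequality and then union bound; either route delivers the stated conclusion with the same elementary calculation.
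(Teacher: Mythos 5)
Your proof is correct and takes essentially the same route as the paper: the paper's one-line proof invokes a norm-concentration result from Vershynin's book together with a union bound, and you simply unpack that concentration step via the Lipschitz–Gaussian inequality and Jensen before applying the same union bound and choice of $t$. The calculation absorbing $\sqrt{\log m}$ into the deviation so that the failure probability remains $e^{-cd}$ is handled correctly.
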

\begin{proof}
This result follows from \cite[Corollary 7.3.3]{vershynin2018high}
and the union bound.
\end{proof}

\subsection{\label{app:pf_trip}Proof of Lemma \ref{lem:TRIP}}

The first result on RIP has been established in the literature (e.g.~\cite[Theorem 2.3]{candes2011tight}),
and hence we only need to prove the second result on TRIP. We first
restrict to the case 
\[
m^{-100}\le\tau\le1.35;
\]
at the end of this subsection, we will prove TRIP for the case $0\le\tau<m^{-100}$
separately. By homogeneity, it is sufficient to show that 
\begin{equation}
\left|\frac{1}{m}\sum_{i=1}^{m}\langle\Ai,\bX\rangle\ind\big(\langle\Ai,\bX\rangle;\tau\big)\langle\Ai,\bZ\rangle-w(\tau)\langle\bX,\bZ\rangle\right|\le\delta\tau\label{eq:TRIP_normalized}
\end{equation}
holds simultaneously for all $(\bX,\bZ,\tau)\in\TTRIP$, where
\[
\TTRIP\coloneqq\big\{(\bX,\bZ,\tau):\bX,\bZ\in\setlowrank_{r}^{\mathsf{norm}},m^{-100}\le\tau\le1.35\big\}.
\]
The proof consists of two steps: (1) we replace the discontinuous
function $\ind$ by a Lipschitz continuous surrogate $\indtilde$
and establish a uniform concentration result for $\chi$; (2) we show
that the discrepancy incurred by replacing $\ind$ with $\indtilde$
is uniformly small. Our proof argument is conditioned on the high-probability
event that $\{\Ai\}_{i=1}^{m}$ satisfy $(2r,\delta$)-RIP. 

\paragraph{Step 1: replacing $\protect\ind$ with $\protect\indtilde$.}

Define an auxiliary function $\indtilde$ as follows: for all $a\in\R$
and $\tau>0$,
\begin{equation}
\indtilde(a;\tau)\coloneqq\begin{cases}
1, & |a|\le(1-\cchi)\tau;\\
0, & |a|\ge\tau;\\
\frac{\tau-|a|}{\cchi\tau}, & (1-\cchi)\tau<|a|<\tau.
\end{cases}\label{eq:defn-chi-a-tau}
\end{equation}
Here we set the parameter 
\begin{equation}
\cchi\coloneqq c_{0}\delta^{2}m^{-100}\label{eq:cchi_choice}
\end{equation}
for some sufficiently small constant $c_{0}>0$, whose rationale will
be made apparent in Step 2. It is easily seen that $\chi$ enjoys
the following properties:
\begin{itemize}
\item (Continuity) For any $\tau>0$, $\indtilde(\cdot;\tau)$ is piecewise
linear and $1/(\cchi\tau)$-Lipschitz continuous.
\item (Closeness to $\ind$) For any $\tau>0$ and $a\in\R$, $\indtilde(a;\tau)\le\ind(a;\tau)\le\indtilde(a;\tau/(1-\cchi))$.
\item (Homogeneity) For any $\tau>0$, $a\in\R$ and $c_{0}>0$, $\indtilde(a;\tau)=\indtilde(a/c_{0};\tau/c_{0})$.
\item If $0\le\epstau\le\cchi\tau$ and $\tau-\epstau\le\tau_{0}\le\tau$,
then $\|\indtilde(\cdot;\tau)-\indtilde(\cdot;\tau_{0})\|_{\infty}=\indtilde(\tau_{0};\tau)=(\tau-\tau_{0})/(\cchi\tau)\le\epstau/(\cchi\tau)$.
\item The function $f(a)\coloneqq a\cdot\indtilde(a;\tau)$ is $1/\cchi$-Lipschitz
continuous.
\end{itemize}
For notational convenience, define 
\begin{subequations}
\label{eq:def_Echi}
\begin{align}
\Emchi\xztau & \coloneqq\frac{1}{m}\sum_{i=1}^{m}\langle\Ai,\bX\rangle\indtilde\big(\langle\Ai,\bX\rangle;\tau\big)\langle\Ai,\bZ\rangle,\label{eq:Emchi}\\
\Echi(\bX,\bZ,\tau) & \coloneqq\E\big[\langle\bA_{i},\bX\rangle\indtilde(\langle\bA_{i},\bX\rangle;\tau)\langle\bA_{i},\bZ\rangle\big],\label{eq:Echi}
\end{align}
\end{subequations}
where the expectation is taken w.r.t.~$\{\bm{A}_{i}\}$ while assuming
that $(\bm{X},\bm{Z},\tau)$ are fixed. With these preparations in
place, we set out to prove that: if $m\ge C_{0}nr\delta^{-2}\log m$,
then with probability at least $1-Ce^{-cn}$, 
\begin{equation}
\big|\Emchi\xztau-\Echi\xztau\big|\le\delta\tau/2\label{eq:claim-step1-lemma1}
\end{equation}
 holds simultaneously for all $(\bX,\bZ,\tau)\in\TTRIP$; here $C_{0}>0$
is some sufficiently large constant, and $C,c>0$ are some universal
constants. 

First, consider any fixed point $(\bX,\bZ,\tau)\in\TTRIP$. Note that
$|\langle\Ai,\bX\rangle\indtilde\big(\langle\Ai,\bX\rangle;\tau\big)|\le\tau$
is bounded, and that the subgaussian norm of $\langle\Ai,\bZ\rangle$
obeys $\|\langle\Ai,\bZ\rangle\|_{\psi_{2}}=\|\mathcal{N}(0,1)\|_{\psi_{2}}\lesssim1.$
As a result, 
\[
\big\|\langle\Ai,\bX\rangle\indtilde\big(\langle\Ai,\bX\rangle;\tau\big)\langle\Ai,\bZ\rangle-\Echi(\bX,\bZ,\tau)\big\|_{\psi_{2}}\lesssim\tau.
\]
Invoking \cite[Theorem 2.6.2]{vershynin2018high} tells us that for
all $t\ge0$,
\[
\Pr\Big(\big|\Emchi\xztau-\Echi(\bX,\bZ,\tau)\big|\ge t\tau\Big)\le2\exp\left(-c_{1}mt^{2}\right)
\]
holds for some constant $c_{1}>0$. Next, we construct an $\epsilon$-net
to cover $\TTRIP$. In view of \cite[Lemma 3.1]{candes2011tight},
the set $\setlowrank_{r}^{\mathsf{norm}}$ defined in (\ref{eq:defn-rank-r-unit-norm})
has an $\epsilon$-net (in terms of $\|\cdot\|_{\mathrm{F}}$ distance)
of cardinality at most $(9/\epsilon)^{3nr}$. In addition, we can
cover the interval $[m^{-100},1.35]$ with precision $\epsilon_{\tau}$
using no more than $2/\epsilon_{\tau}$ equidistant points. Putting
all this together, we can construct a set $\MTRIP\subseteq\setlowrank_{r}^{\mathsf{norm}}\times\setlowrank_{r}^{\mathsf{norm}}\times[0,1.35]$
of cardinality at most $(9/\epsilon)^{6nr}(2/\epstau)$ such that:
for any $(\bX,\bZ,\tau)\in\TTRIP$, there exists some point $(\bX_{0},\bZ_{0},\tau_{0})\in\MTRIP$
obeying
\begin{equation}
\|\bX-\bX_{0}\|_{\Frm}\le\epsilon,\quad\|\bZ-\bZ_{0}\|_{\Frm}\le\epsilon,\quad\text{and}\quad\tau-\epstau\le\tau_{0}\le\tau.\label{eq:cover_TTRIP}
\end{equation}
The union bound then implies that with probability at least $1-2\exp(-c_{1}mt^{2})(9/\epsilon)^{6nr}(2/\epstau)$,
one has 
\begin{equation}
\big|\Emchi-\Echi(\bX,\bZ,\tau)\big|\le t\tau,\qquad\text{for all }\xztau\in\MTRIP.\label{eq:error-EmX-1}
\end{equation}
In what follows, we shall choose
\begin{equation}
t=\frac{1}{4}\delta\quad\text{and}\quad m\ge C_{3}\frac{1}{\delta^{2}}\left(nr\log\frac{9}{\epsilon}+\log\frac{2}{\epstau}\right)\label{eq:trip_m_tmp}
\end{equation}
so as to achieve a uniformly small error $t\tau=\delta\tau/4$ in
\eqref{eq:error-EmX-1} with probability at least $1-2\exp(-c_{3}m\delta^{2})$
for some universal constant $c_{3}>0$. 

Now, for any $\xztau\in\TTRIP$, let $\xztauzero\in\MTRIP$ be the
point satisfying (\ref{eq:cover_TTRIP}). Then we have
\begin{subequations}
\label{eq:chi_cover_error}
\begin{align}
 & \big|\Emchi\xztau-\Echi(\bX,\bZ,\tau)\big|\le\underset{{\rm (A)}}{\underbrace{\big|\Emchi\xztauzero-\Echi\xztauzero\big|}}\label{eq:chi_cover_error_A_B}\\
 & \qquad+\underset{{\rm (B)}}{\underbrace{\big|\Emchi\xztau-\Emchi\xztauzero\big|}}+\underset{{\rm (C)}}{\underbrace{\big|\Echi(\bX,\bZ,\tau)-\Echi\xztauzero\big|}}.\label{eq:chi_cover_error_C}
\end{align}
\end{subequations}
Here, (A) is already bounded by $\delta\tau/4$ by construction. We
can control (B) via the following decomposition:
\begin{align*}
{\rm (B)} & \le\underset{{\rm (B.1)}}{\underbrace{\bigg|\frac{1}{m}\sum_{i=1}^{m}\langle\Ai,\bX\rangle\indtilde(\langle\Ai,\bX\rangle;\tau)\langle\Ai,\bZ-\bZ_{0}\rangle\bigg|}}\\
 & \quad+\underset{{\rm (B.2)}}{\underbrace{\bigg|\frac{1}{m}\sum_{i=1}^{m}\Big(\langle\Ai,\bX\rangle\indtilde(\langle\Ai,\bX\rangle;\tau)-\langle\Ai,\bX_{0}\rangle\indtilde(\langle\Ai,\bX_{0}\rangle;\tau)\Big)\langle\Ai,\bZ_{0}\rangle\bigg|}}\\
 & \quad+\underset{{\rm (B.3)}}{\underbrace{\bigg|\frac{1}{m}\sum_{i=1}^{m}\langle\Ai,\bX_{0}\rangle\Big(\indtilde(\langle\Ai,\bX_{0}\rangle;\tau)-\indtilde(\langle\Ai,\bX_{0}\rangle;\tau_{0})\Big)\langle\Ai,\bZ_{0}\rangle\bigg|}}.
\end{align*}
In light of the $(2r,\delta)$-RIP, the aforementioned properties
of $\chi$, and the Cauchy-Schwarz inequality, we have
\begin{align*}
{\rm (B.1)} & \overset{{\rm (i)}}{\le}\tau\frac{1}{m}\sum_{i=1}^{m}\big|\langle\Ai,\bZ-\bZ_{0}\rangle\big|\le\tau\sqrt{\frac{1}{m}\sum_{i=1}^{m}\langle\Ai,\bZ-\bZ_{0}\rangle^{2}}\lesssim\tau\epsilon,\\
{\rm (B.2)} & \le\frac{1}{m}\sum_{i=1}^{m}\big|\langle\Ai,\bX\rangle\indtilde(\langle\Ai,\bX\rangle;\tau)-\langle\Ai,\bX_{0}\rangle\indtilde(\langle\Ai,\bX_{0}\rangle;\tau)\big|\cdot\big|\langle\Ai,\bZ_{0}\rangle\big|\\
 & \overset{{\rm (ii)}}{\le}\frac{1}{\cchi}\frac{1}{m}\sum_{i=1}^{m}\big|\langle\Ai,\bX-\bX_{0}\rangle\big|\cdot\big|\langle\Ai,\bZ_{0}\rangle\big|\le\frac{1}{\cchi}\sqrt{\frac{1}{m}\sum_{i=1}^{m}\langle\Ai,\bX-\bX_{0}\rangle^{2}}\sqrt{\frac{1}{m}\sum_{i=1}^{m}\langle\Ai,\bZ_{0}\rangle^{2}}\lesssim\frac{\epsilon}{\cchi},\\
{\rm (B.3)} & \le\big\|\indtilde(\cdot;\tau)-\indtilde(\cdot;\tau_{0})\big\|_{\infty}\frac{1}{m}\sum_{i=1}^{m}\big|\langle\Ai,\bX_{0}\rangle\big|\cdot\big|\langle\Ai,\bZ_{0}\rangle\big|\\
 & \overset{{\rm (iii)}}{\lesssim}\frac{\epstau}{\cchi\tau}\sqrt{\frac{1}{m}\sum_{i=1}^{m}\langle\Ai,\bX_{0}\rangle^{2}}\sqrt{\frac{1}{m}\sum_{i=1}^{m}\langle\Ai,\bZ_{0}\rangle^{2}}\lesssim\frac{\epstau}{\cchi\tau}.
\end{align*}
Here, (i) uses $|\langle\Ai,\bX\rangle\indtilde(\langle\Ai,\bX\rangle;\tau)|\le\tau$,
(ii) follows from the property that the function $f(a)=a\cdot\indtilde(a;\tau)$
is $1/\cchi$-Lipschitz continuous, whereas (iii) is due to the property
$\|\indtilde(\cdot;\tau)-\indtilde(\cdot;\tau_{0})\|_{\infty}\le\epstau/(\cchi\tau)$.
The term (C) can be controlled by the same decomposition and thus
enjoys the same upper bound. Putting these back into (\ref{eq:chi_cover_error}),
we have for all $\xztau\in\TTRIP$,
\[
\left|\Emchi\xztau-\Echi(\bX,\bZ,\tau)\right|\le\frac{1}{4}\delta\tau+C_{3}\left(\tau\epsilon+\frac{\epsilon}{\cchi}+\frac{\epstau}{\cchi\tau}\right)
\]
for some universal constant $C_{3}>0$. Recalling that $\tau\ge m^{-100}$,
and choosing $\epsilon\le c_{4}\delta\cchi m^{-100}$ and $\epsilon_{\tau}\le c_{5}\delta\cchi m^{-200}$
for some sufficiently small constants $c_{4},c_{5}>0$, we have
\[
\big|\Emchi\xztau-\Echi(\bX,\bZ,\tau)\big|\le\delta\tau/2.
\]
Plugging our choice of $\epsilon$ and $\epstau$ into (\ref{eq:trip_m_tmp})
immediately establishes the claim \eqref{eq:claim-step1-lemma1} of
this step.

\paragraph{Step 2: controlling the errors incurred by using the surrogate $\protect\indtilde$.}

Similar to (\ref{eq:def_Echi}), we define 
\begin{align*}
\Emind\xztau & \coloneqq\frac{1}{m}\sum_{i=1}^{m}\langle\Ai,\bX\rangle\ind(\langle\Ai,\bX\rangle;\tau)\langle\Ai,\bZ\rangle,\\
\Eind(\bX,\bZ,\tau) & \coloneqq\E\big[\langle\bA_{i},\bX\rangle\ind(\langle\bA_{i},\bX\rangle;\tau)\langle\bA_{i},\bZ\rangle\big]=w(\tau)\langle\bX,\bZ\rangle,
\end{align*}
where the expectation is taken assuming independence between $\bA_{i}$
and $(\bX,\bZ,\tau)$. In this step, we aim to show that: if  $m\ge C_{0}nr\delta^{-2}\log m$,
then with probability at least $1-Ce^{-cn}$, 
\begin{equation}
\big|\Emind\xztau-\Eind\xztau\big|\le\big|\Emchi\xztau-\Echi\xztau\big|+\delta\tau/2\label{eq:claim-step2-lemma1}
\end{equation}
 holds simultaneously for all $(\bX,\bZ,\tau)\in\TTRIP$. If this
were true, then combining this with \eqref{eq:claim-step1-lemma1}
would immediately conclude the proof of Lemma~\ref{lem:TRIP}. 

Towards establishing \eqref{eq:claim-step2-lemma1}, we start with
the following decomposition:
\begin{align}
\left|\Emind\xztau-E\xztau\right| & \le\big|\Emchi\xztau-\Echi\xztau\big|+\underset{{\rm (A)}}{\underbrace{\big|\Eind\xztau-\Echi\xztau\big|}}\nonumber \\
 & \quad+\underset{{\rm (B)}}{\underbrace{\big|\Emind\xztau-\Emchi\xztau\big|}},\label{eq:Em-E-diff-AB-123}
\end{align}
where we abuse the notation (A) and (B). In the sequel, we shall control
(A) and (B) separately. 
\begin{itemize}
\item Regarding (A), the Cauchy-Schwarz inequality gives
\begin{align*}
{\rm (A)} & =\Big|\E\big[\langle\bA_{i},\bX\rangle\big(\ind(\langle\bA_{i},\bX\rangle;\tau)-\indtilde(\langle\bA_{i},\bX\rangle;\tau)\big)\langle\bA_{i},\bZ\rangle\big]\Big|\\
 & \le\sqrt{\E\Big[\big(\ind(\langle\bA_{i},\bX\rangle;\tau)-\indtilde(\langle\bA_{i},\bX\rangle;\tau)\big)^{2}\Big]}\sqrt{\E\Big[\big(\langle\bA_{i},\bX\rangle\langle\bA_{i},\bZ\rangle\big)^{2}\Big]}\lesssim\sqrt{\cchi\tau}.
\end{align*}
The last inequality holds since $|\ind(\langle\bA_{i},\bX\rangle;\tau)-\indtilde(\langle\bA_{i},\bX\rangle;\tau)|\in[0,1]$
is non-zero only for $|\langle\bA_{i},\bX\rangle|$ on an interval
of length $\cchi\tau$, over which the probability density function
of $\langle\bA_{i},\bX\rangle\sim\Ncal(0,1)$ is upper bounded by
some constant. By our choice of $\cchi$ in (\ref{eq:cchi_choice}),
we have ${\rm (A)}\le\delta\tau/4$. 
\item We then move on to (B). For notational convenience, given any $\tau>0$,
we let 
\begin{equation}
\tau'=\tau'(\tau)\coloneqq\frac{\tau}{1-\cchi},\label{eq:def_tauprime}
\end{equation}
which clearly satisfies $\indtilde(a;\tau)\le\ind(a;\tau)\le\indtilde(a;\tau')$.
In addition, defining 
\[
\ind_{-}(a)\coloneqq\ind(a<0),\quad\ind_{+}(a)\coloneqq\ind(a\ge0),\quad a\in\R,
\]
we can deduce that
\begin{align*}
\Emind\xztau & \le\Emchi\xztau+\frac{1}{m}\sum_{i=1}^{m}\big(\indtilde(\langle\Ai,\bX\rangle;\tau')-\indtilde(\langle\Ai,\bX\rangle;\tau)\big)\langle\Ai,\bX\rangle\langle\Ai,\bZ\rangle\ind_{+}(\langle\Ai,\bX\rangle\langle\Ai,\bZ\rangle),\\
\Emind\xztau & \ge\Emchi\xztau+\frac{1}{m}\sum_{i=1}^{m}\big(\indtilde(\langle\Ai,\bX\rangle;\tau')-\indtilde(\langle\Ai,\bX\rangle;\tau)\big)\langle\Ai,\bX\rangle\langle\Ai,\bZ\rangle\ind_{-}(\langle\Ai,\bX\rangle\langle\Ai,\bZ\rangle).
\end{align*}
As a consequence,
\begin{align*}
{\rm (B)}\le\max\bigg\{ & \underset{{\rm (C)}}{\underbrace{\Big|\frac{1}{m}\sum_{i=1}^{m}\big(\indtilde(\langle\Ai,\bX\rangle;\tau')-\indtilde(\langle\Ai,\bX\rangle;\tau)\big)\langle\Ai,\bX\rangle\langle\Ai,\bZ\rangle\ind_{+}(\langle\Ai,\bX\rangle\langle\Ai,\bZ\rangle)\Big|}},\\
 & \Big|\frac{1}{m}\sum_{i=1}^{m}\big(\indtilde(\langle\Ai,\bX\rangle;\tau')-\indtilde(\langle\Ai,\bX\rangle;\tau)\big)\langle\Ai,\bX\rangle\langle\Ai,\bZ\rangle\ind_{-}(\langle\Ai,\bX\rangle\langle\Ai,\bZ\rangle)\Big|\bigg\}.
\end{align*}
Next, we demonstrate how to analyze the first term (C) above; the
analysis for the other term is essentially the same. For notational
simplicity, define 
\begin{align*}
F_{m}^{+}\xztau & \coloneqq\frac{1}{m}\sum_{i=1}^{m}\indtilde(\langle\Ai,\bX\rangle;\tau)\langle\Ai,\bX\rangle\langle\Ai,\bZ\rangle\ind_{+}(\langle\Ai,\bX\rangle\langle\Ai,\bZ\rangle),\\
E^{+}\xztau & \coloneqq\E\Big[\big(\indtilde(\langle\bA_{i},\bX\rangle;\tau')-\indtilde(\langle\bA_{i},\bX\rangle;\tau)\big)\langle\bA_{i},\bX\rangle\langle\bA_{i},\bZ\rangle\ind_{+}(\langle\bA_{i},\bX\rangle\langle\bA_{i},\bZ\rangle)\Big],
\end{align*}
where the expectation is again taken assuming that $\bA_{i}$ is independent
of $\bm{X}$, $\bm{Z}$ and $\tau$. Then we have
\begin{align*}
{\rm (C)} & =\big|F_{m}^{+}(\bX,\bZ,\tau')-F_{m}^{+}\xztau\big|\le\big|E^{+}\xztau\big|+\big|F_{m}^{+}(\bX,\bZ,\tau')-F_{m}^{+}\xztau-E^{+}\xztau\big|.
\end{align*}
Regarding the first term on the right-hand side, we follow an argument
similar to our previous analysis for (A) to obtain
\begin{align*}
\big|E^{+}\xztau\big| & \le\sqrt{\E\Big[\big(\indtilde(\langle\bA_{i},\bX\rangle;\tau')-\indtilde(\langle\bA_{i},\bX\rangle;\tau)\big)^{2}\Big]}\sqrt{\E\Big[\big(\langle\bA_{i},\bX\rangle\langle\bA_{i},\bZ\rangle\big)^{2}\Big]}\lesssim\sqrt{\cchi\tau}\le c_{2}\delta\tau
\end{align*}
for some sufficiently small constant $0<c_{2}<1/8$. Thus, it remains
to show that 
\begin{equation}
\big|F_{m}^{+}(\bX,\bZ,\tau')-F_{m}^{+}\xztau-E^{+}\xztau\big|\le\frac{1}{8}\delta\tau\label{eq:concentration_step2}
\end{equation}
holds simultaneously for all $\xztau\in\TTRIP$. Note that by definition,
$F_{m}^{+}\xztau$ is the empirical average of some Lipschitz continuous
function (in particular, $\langle\Ai,\bX\rangle\langle\Ai,\bZ\rangle\ind_{+}(\langle\Ai,\bX\rangle\langle\Ai,\bZ\rangle)$
is $1$-Lipschitz continuous over $\langle\Ai,\bX\rangle\langle\Ai,\bZ\rangle$).
Therefore, we can prove (\ref{eq:concentration_step2}) by a standard
covering argument similar to that in Step 1; we omit the details for
brevity. Putting the above bounds together, we establish that ${\rm (B)}\le\delta\tau/4$.
\item Combining the above bounds ${\rm (A)}\le\delta\tau/4$ and ${\rm (B)}\le\delta\tau/4$
with \eqref{eq:Em-E-diff-AB-123}, we finish the proof of \eqref{eq:claim-step2-lemma1}.
\end{itemize}

\paragraph{Proof for the case $0\le\tau<m^{-100}$.}

It remains to prove that (\ref{eq:TRIP_normalized}) holds simultaneously
for all $\bX,\bZ\in\setlowrank_{r}^{\mathsf{norm}}$ (cf.~\eqref{eq:defn-rank-r-unit-norm})
and all $0\le\tau<m^{-100}$. We start with the following decomposition:
\begin{equation}
\bigg|\frac{1}{m}\sum_{i=1}^{m}\langle\Ai,\bX\rangle\ind\big(\langle\Ai,\bX\rangle;\tau\big)\langle\Ai,\bZ\rangle-w(\tau)\langle\bX,\bZ\rangle\bigg|\le\bigg|\frac{1}{m}\sum_{i=1}^{m}\langle\Ai,\bX\rangle\ind\big(\langle\Ai,\bX\rangle;\tau\big)\langle\Ai,\bZ\rangle\bigg|+\big|w(\tau)\langle\bX,\bZ\rangle\big|.\label{eq:tmp_trip_decomp}
\end{equation}
The second term on the right-hand side of (\ref{eq:tmp_trip_decomp})
can be bounded by
\[
\big|w(\tau)\langle\bX,\bZ\rangle\big|\le w(\tau)\overset{{\rm (i)}}{\le}\tau^{3}\le m^{-200}\tau\overset{{\rm (ii)}}{\le}0.1\delta\tau,
\]
where (i) can be seen from the definition of $w(\cdot)$ in (\ref{eq:def_w}),
and (ii) relies on the observation that our assumption $m\ge C_{0}nr\delta^{-2}\log m$
implies $\delta\gtrsim m^{-1/2}$.

It thus remains to show that the first term on the right-hand side
of (\ref{eq:tmp_trip_decomp}) is bounded by $0.9\delta\tau$. In
view of $(2r,\delta)$-RIP, the Cauchy-Schwarz inequality, and the
observation that $|\langle\Ai,\bX\rangle\ind\big(\langle\Ai,\bX\rangle;\tau\big)|\le\tau$,
we have
\begin{align}
\bigg|\frac{1}{m}\sum_{i=1}^{m}\langle\Ai,\bX\rangle\ind\big(\langle\Ai,\bX\rangle;\tau\big)\langle\Ai,\bZ\rangle\bigg| & \le\sqrt{\frac{1}{m}\sum_{i=1}^{m}\langle\Ai,\bX\rangle^{2}\ind\big(\langle\Ai,\bX\rangle;\tau\big)}\cdot\sqrt{\frac{1}{m}\sum_{i=1}^{m}\langle\Ai,\bZ\rangle^{2}}\nonumber \\
 & \le2\sqrt{\frac{1}{m}\sum_{i=1}^{m}\tau^{2}\ind\big(\langle\Ai,\bX\rangle;\tau\big)}\le2\tau\sqrt{\frac{1}{m}\sum_{i=1}^{m}\ind\big(\langle\Ai,\bX\rangle;m^{-100}\big)},\label{eq:turn_into_bernoulli}
\end{align}
 where the last inequality uses the assumption that $\tau<m^{-100}$.
We can invoke Lemma \ref{lemma:helper} with $t=0.01\delta^{2}$ and
$\epsilon=m^{-200}$ to obtain that with probability at least $1-Ce^{-cn}$
(for some constants $c,C>0$),
\[
\frac{1}{m}\sum_{i=1}^{m}\ind\big(\langle\Ai,\bX\rangle;m^{-100}\big)\le\Pr(Z_{0}\le1.01m^{-100})+t+\frac{200\epsilon}{m^{-100}}\le2t=0.02\delta^{2}
\]
holds simultaneously for all $\bX\in\setlowrank_{r}^{\mathsf{norm}}$,
provided that $m\ge C_{0}nr\delta^{-2}\log m$; here, $Z_{0}$ denotes
a random variable having the same distribution as $|\Ncal(0,1)|$.
Plugging this into (\ref{eq:turn_into_bernoulli}) confirms that the
first term on the right-hand side of (\ref{eq:tmp_trip_decomp}) is
bounded by $0.9\delta\tau$, thus concluding the proof for the case
with $0\le\tau<m^{-100}$.

\subsection{Proof of Lemma \ref{lem:quantile} }

It is easy to check that $\quantalpha(\Dcal)/\|\bX_{1}\|_{\Frm}$
is invariant under a global scaling of $\left\{ \bX_{1},\dots,\bX_{K}\right\} $.
Therefore, it suffices to consider a normalized version of $\Tquant$
(\ref{eq:set_Tquant}) defined as follows:
\begin{equation}
\Tquantnorm\coloneqq\Tquant\cap\big\{(\bX_{1},\dots,\bX_{k}):\|\bX_{1}\|_{\Frm}=1\big\}.\label{eq:def_Tnorm}
\end{equation}
In what follows, we shall treat the upper bound and the lower bound
separately and invoke a standard covering argument to prove Lemma
\ref{lem:quantile} with $\Tquant$ replaced by $\Tquantnorm$. Throughout
this proof, we denote by $Z$ a random variable following the distribution
of $|\Ncal(0,1)|$. 

\paragraph{Step 1: upper bounding $Q_{\alpha}(\protect\Dcal)$.}

Since $\alpha\leq0.8p_{1}$, we have
\[
\quantalpha(\Dcal)\le Q_{\alpha/\ponehat}(\Dcal_{1})\le Q_{0.8}(\Dcal_{1}).
\]
Now it suffices to upper bound $Q_{0.8}(\Dcal_{1})$, which is only
related to $\bX_{1}\in\setlowrank_{r}^{\mathsf{norm}}$. Consider
any fixed point $\bX_{1}\in\setlowrank_{r}^{\mathsf{norm}}$. Note
that the set $\Dcal_{1}$ defined in~(\ref{eq:def_D}) contains i.i.d.~samples
having the same distribution as $Z$. This combined with the concentration
of empirical quantiles \cite[Section 2.3.2]{serfling2009approximation}
gives
\begin{equation}
\Pr\big(Q_{0.8}(\Dcal_{1})\ge Q_{0.8}(Z)+0.01\big)\le\exp\left(-c_{2}N_{1}\right)\label{eq:quant_upper_prob}
\end{equation}
for some universal constant $c_{2}>0$. Here, $N_{1}\coloneqq|\Omegaonestar|\asymp N/K$
by the assumption of the well-balancedness property~(\ref{eq:balanced_pk}).
Next, we construct an $\epsilon$-net of $\setlowrank_{r}^{\mathsf{norm}}$
--- denoted by $\mathcal{M}$ --- whose cardinality is at most $(9/\epsilon)^{3nr}$
(according to~\cite[Lemma~3.1]{candes2011tight}). Taking the union
bound over $\mathcal{M}$ and assuming that
\[
N_{1}\ge C_{0}nr\log\frac{9}{\epsilon}
\]
for some sufficiently large constant $C_{0}>0$, we have with probability
at least $1-Ce^{-cn}$, for all $\bX_{1}\in\mathcal{M}$, the dataset
$\Dcal_{1}$ defined in~(\ref{eq:def_D}) satisfies $Q_{0.8}(\Dcal_{1})\le Q_{0.8}(Z)+0.01$.
Finally, consider an arbitrary $\bX_{1}\in\setlowrank_{r}^{\mathsf{norm}}$,
and let $\bX_{1}^{0}$ be the point in $\mathcal{M}$ such that $\|\bm{X}_{1}^{0}-\bm{X}_{1}\|_{\mathrm{F}}\leq\epsilon$.
Denote by $\Dcal_{1}^{0}$ the dataset generated by $\bX_{1}^{0}$
analogous to (\ref{eq:def_D}). Then we have 
\[
\big|Q_{0.8}(\Dcal_{1})-Q_{0.8}(\Dcal_{1}^{0})\big|\le\max_{i\in\Omegaonestar}\big|\langle\Ai,\bX_{1}\rangle-\langle\Ai,\bX_{1}^{0}\rangle\big|\le\epsilon\max_{i\in\Omegaonestar}\|\Ai\|_{\Frm}\lesssim\epsilon\left(n+\sqrt{\log N_{1}}\right),
\]
where the last inequality holds with probability at least $1-Ce^{-cn}$,
according to Proposition~\ref{prop:Gaussian_max_norm}. Setting $\epsilon=N_{1}^{-10}$,
we further have $|Q_{0.8}(\Dcal_{1})-Q_{0.8}(\Dcal_{1}^{0})|\lesssim N_{1}^{-9}\le0.01$,
as long as $N_{1}$ is sufficiently large. In addition, it can be
verified numerically that $Q_{0.8}(Z)<1.30$. These together imply
that for any $(\bX_{1},\dots,\bX_{K})\in\Tquantnorm$, we have
\[
\quantalpha(\Dcal)\le Q_{0.8}(\Dcal_{1})\le Q_{0.8}(Z)+0.02\le1.35,
\]
which gives rise to the upper bound in Lemma \ref{lem:quantile}.

\paragraph{Step 2: lower bounding $Q_{\alpha}(\protect\Dcal)$.}

For notational convenience, we denote 
\begin{equation}
q\coloneqq\frac{0.7\alpha}{\ponehat}\in[0.42,0.56],\quad\text{and}\quad B_{N}\coloneqq\frac{1}{\Nall}\sum_{k=1}^{K}\sum_{i\in\Omegakstar}\ind\Big(\big|\langle\Ai,\bX_{k}\rangle\big|\le\frac{Q_{q}(Z)}{1.01}\Big).\label{eq:quant_lower_bound}
\end{equation}
Clearly, by the definition of $B_{N}$, one has 
\[
\Pr\bigg(\quantalpha(\Dcal)<\frac{Q_{q}(Z)}{1.01}\bigg)\le\Pr\big(B_{N}>\alpha\big),
\]
where it can be verified numerically that $Q_{q}(Z)/1.01\ge0.54$.
Therefore, it suffices to upper bound the probability $\Pr\big(B_{N}>\alpha\big)$.
To accomplish this, we first upper bound $B_{N}$ as follows:
\begin{align}
B_{N} & =\frac{1}{\Nall}\sum_{k=1}^{K}\sum_{i\in\Omegakstar}\ind\Bigg(\bigg|\Big\langle\Ai,\frac{\bX_{k}}{\|\bX_{k}\|_{\Frm}}\Big\rangle\bigg|\le\frac{Q_{q}(Z)}{1.01\|\bX_{k}\|_{\Frm}}\Bigg)\nonumber \\
 & \le\frac{1}{\Nall}\sum_{i\in\Omegaonestar}\ind\Big(\big|\langle\Ai,\bX_{1}\rangle\big|\le\frac{Q_{q}(Z)}{1.01}\Big)+\frac{1}{\Nall}\sum_{k\neq1}\sum_{i\in\Omegakstar}\ind\Bigg(\bigg|\Big\langle\Ai,\frac{\bX_{k}}{\|\bX_{k}\|_{\Frm}}\Big\rangle\bigg|\le\frac{c_{0}Q_{q}(Z)}{1.01K}\Bigg).\label{eq:BN_upper_bound}
\end{align}
Here, the last line follows from the assumption that $1=\|\bX_{1}\|_{\Frm}\le(c_{0}/K)\min_{k\neq1}\|\bX_{k}\|_{\Frm}$;
see the definition of $\Tquantnorm$ in (\ref{eq:def_Tnorm}). Note
that $\bX_{1}\in\setlowrank_{r}^{\mathsf{norm}}$, and for all $k\neq1$,
we also have $\bX_{k}/\|\bX_{k}\|_{\Frm}\in\setlowrank_{r}^{\mathsf{norm}}$.
Therefore, we can invoke Lemma \ref{lemma:helper} with $m=N_{1}=|\Omegaonestar|,\tau=Q_{q}(Z)/1.01,t=0.15\alpha$
and $\epsilon=N_{1}^{-10}$ to obtain that: with probability at least
$1-Ce^{-cn}$ (provided that $m\ge C_{0}nrK^{2}\log m$), the following
holds simultaneously for all $\bX_{1}\in\setlowrank_{r}^{\mathsf{norm}}$:
\begin{align*}
\frac{1}{\Nall_{1}}\sum_{i\in\Omegaonestar}\ind\Big(\big|\langle\Ai,\bX_{1}\rangle\big|\le\frac{Q_{q}(Z)}{1.01}\Big) & \leq\mathbb{P}\big(Z\leq Q_{q}(Z)\big)+t+\frac{200\epsilon}{\tau}=q+0.15\alpha+\frac{202N_{1}^{-10}}{Q_{q}(Z)}.
\end{align*}
Similarly, for all $k\neq1$, one can apply Lemma \ref{lemma:helper}
with $m=N_{k}\coloneqq|\Omegakstar|,\tau=c_{0}Q_{q}(Z)/(1.01K),t=0.15\alpha$
and $\epsilon=N_{k}^{-10}$ to show that: with probability at least
$1-Ce^{-cn}$ (provided $m\ge C_{0}nrK^{2}\log m$), the following
holds simultaneously for all $\bX_{k}/\|\bX_{k}\|_{\Frm}\in\setlowrank_{r}^{\mathsf{norm}}$:
\begin{align*}
\frac{1}{N_{k}}\sum_{i\in\Omegakstar}\ind\Bigg(\bigg|\Big\langle\Ai,\frac{\bX_{k}}{\|\bX_{k}\|_{\Frm}}\Big\rangle\bigg|\le\frac{c_{0}Q_{q}(Z)}{1.01K}\Bigg) & \leq\mathbb{P}\left(Z\leq\frac{c_{0}Q_{q}(Z)}{K}\right)+t+\frac{200\epsilon}{\tau}\le\frac{c_{0}Q_{q}(Z)}{K}+0.15\alpha+\frac{202KN_{k}^{-10}}{c_{0}Q_{q}(Z)},
\end{align*}
where the last inequality relies on the property of $Z$. Combine
the above two bounds with (\ref{eq:BN_upper_bound}) to reach 
\begin{align*}
B_{N} & \leq p_{1}\bigg(q+0.15\alpha+\frac{202N_{1}^{-10}}{Q_{q}(Z)}\bigg)+\sum_{k\neq1}p_{k}\bigg(\frac{c_{0}Q_{q}(Z)}{K}+0.15\alpha+\frac{202KN_{k}^{-10}}{c_{0}Q_{q}(Z)}\bigg)\\
 & \le p_{1}q+\frac{c_{0}Q_{q}(Z)}{K}+0.15\alpha+p_{1}\frac{202N_{1}^{-10}}{Q_{q}(Z)}+\sum_{k\neq1}p_{k}\frac{202KN_{k}^{-10}}{c_{0}Q_{q}(Z)}.
\end{align*}
Recall that $p_{1}q=0.7\alpha$, $\alpha\asymp p_{1}\asymp1/K$, and
observe that $p_{1}\frac{202N_{1}^{-10}}{Q_{q}(Z)}+\sum_{k\neq1}p_{k}\frac{202KN_{k}^{-10}}{c_{0}Q_{q}(Z)}\le0.05\alpha$
as long as $N_{k}\gtrsim K$ for all $k$. Putting these together
guarantees that $B_{N}\le\alpha$ as desired, which further implies
\[
Q_{\alpha}(\mathcal{D})\geq Q_{q}(Z)/1.01\geq0.54.
\]
Combining this lower bound with the upper bound in Step 1 completes
our proof of Lemma~\ref{lem:quantile}.

\paragraph{}

\subsection{Proof of Lemma \ref{lemma:helper}}

Throughout the proof, we assume that the ensemble $\{\bm{A}_{i}\}$
obeys $(2r,1/4)$-RIP. In view of Lemma~\ref{lem:TRIP}, this happens
with probability at least $1-C_{2}e^{-c_{2}n}$ for some constants
$c_{2},C_{2}>0$, as long as $m\ge Cnr$. Recall the definition of
$\chi$ from Appendix \ref{app:pf_trip}, and set the parameter as
$\cchi=0.01/1.01$. One then has
\[
\frac{1}{m}\sum_{i=1}^{m}\ind\left({\left|\left\langle \bm{A}_{i},\bm{X}\right\rangle \right|\leq\tau}\right)\leq\frac{1}{m}\sum_{i=1}^{m}\chi\left(\left\langle \bm{A}_{i},\bm{X}\right\rangle ;1.01\tau\right)
\]
In the sequel, we invoke the standard covering argument to upper bound
$\frac{1}{m}\sum_{i=1}^{m}\chi\left(\left\langle \bm{A}_{i},\bm{X}\right\rangle ;1.01\tau\right)$. 

First, consider a fixed $\bm{X}\in\setlowrank_{r}^{\mathsf{norm}}$
independent of $\{\bm{A}_{i}\}$. In this case we can bound the expectation
as 
\begin{align*}
\mathbb{E}\left[\frac{1}{m}\sum_{i=1}^{m}\chi\left(\left\langle \bm{A}_{i},\bm{X}\right\rangle ;1.01\tau\right)\right] & \leq\mathbb{E}\left[\frac{1}{m}\sum_{i=1}^{m}\ind\left({\left|\left\langle \bm{A}_{i},\bm{X}\right\rangle \right|\leq1.01\tau}\right)\right]=\mathbb{P}\left(Z\leq1.01\tau\right),
\end{align*}
where we recall that $Z$ follows the same distribution as $|\mathcal{N}(0,1)|$.
In addition, note that $\frac{1}{m}\sum_{i=1}^{m}\chi\left(\left\langle \bm{A}_{i},\bm{X}\right\rangle ;1.01\tau\right)$
is the empirical average of $m$ independent random variables, each
lying within $[0,1]$ and having variance bounded by $2\tau$. Therefore,
for all $t\ge0$, one sees from Bernstein's inequality \cite[Theorem 2.8.4]{vershynin2018high}
that 
\[
\mathbb{P}\left(\frac{1}{m}\sum_{i=1}^{m}\chi\left(\left\langle \bm{A}_{i},\bm{X}\right\rangle ;1.01\tau\right)\geq\mathbb{E}\left[\frac{1}{m}\sum_{i=1}^{m}\chi\left(\left\langle \bm{A}_{i},\bm{X}\right\rangle ;1.01\tau\right)\right]+t\right)\leq\exp\left(-\frac{c_{1}mt^{2}}{\tau+t}\right),
\]
where $c_{0},c_{1}>0$ are some universal constants. Let $\mathcal{M}\subseteq\setlowrank_{r}^{\mathsf{norm}}$
be an $\epsilon$-net of $\setlowrank_{r}^{\mathsf{norm}}$, whose
cardinality is at most $(9/\epsilon)^{3nr}$. The union bound reveals
that: with probability at least $1-(9/\epsilon)^{3nr}\exp(-c_{1}mt^{2}/(\tau+t))$,
one has
\[
\sup_{\bm{X}\in\mathcal{M}}\quad\frac{1}{m}\sum_{i=1}^{m}\chi\left(\left\langle \bm{A}_{i},\bm{X}\right\rangle ;1.01\tau\right)\leq\mathbb{P}\left(Z\leq1.01\tau\right)+t.
\]

Next, we move on to account for an arbitrary $\bm{X}\in\setlowrank_{r}^{\mathsf{norm}}$
(which is not necessarily independent of $\{\bm{A}_{i}\}$). Let $\bm{X}_{0}$
be a point in $\mathcal{M}$ obeying $\|\bm{X}-\bm{X}_{0}\|_{\mathrm{F}}\leq\epsilon$.
As a result, one has 
\begin{align*}
\frac{1}{m}\sum_{i=1}^{m}\chi\left(\left\langle \bm{A}_{i},\bm{X}\right\rangle ;1.01\tau\right)-\frac{1}{m}\sum_{i=1}^{m}\chi\left(\left\langle \bm{A}_{i},\bm{X}_{0}\right\rangle ;1.01\tau\right) & \leq\frac{1}{m}\sum_{i=1}^{m}\big|\chi\left(\left\langle \bm{A}_{i},\bm{X}\right\rangle ;1.01\tau\right)-\chi\left(\left\langle \bm{A}_{i},\bm{X}_{0}\right\rangle ;1.01\tau\right)\big|\\
 & \overset{(\text{i})}{\leq}\frac{100}{\tau}\cdot\frac{1}{m}\sum_{i=1}^{m}\left|\left\langle \bm{A}_{i},\bm{X}-\bm{X}_{0}\right\rangle \right|\\
 & \overset{(\text{ii})}{\leq}\frac{100}{\tau}\cdot\sqrt{\frac{1}{m}\sum_{i=1}^{m}\left\langle \bm{A}_{i},\bm{X}-\bm{X}_{0}\right\rangle ^{2}}\\
 & \overset{(\text{iii})}{\leq}\frac{200}{\tau}\|\bm{X}-\bm{X}_{0}\|_{\mathrm{F}}\leq\frac{200}{\tau}\epsilon.
\end{align*}
Here the inequality (i) holds since $\chi(\cdot;1.01\tau)$ is Lipschitz
with the Lipschitz constant $1/(1.01\cchi\tau)=100/\tau$, the relation
(ii) results from the Cauchy-Schwarz inequality, and (iii) follows
since $\{\bm{A}_{i}\}$ obeys $(2r,1/4)$-RIP. 

Combine the above two inequalities to finish the proof.

\section{Estimating unknown parameters in Algorithm~\ref{alg:full} \label{sec:estimate_parameters}}

Throughout the paper, we have assumed the knowledge of several problem-specific
parameters, e.g.~the proportion $p_{k}$ of the $k$-th component,
the rank $r_{k}$ of the low-rank matrix $\bm{M}_{k}^{\star}$ and
the rank $R=\rank(\E[\bY])$. In the sequel, we specify where we need
them and discuss how to estimate them in practice.
\begin{itemize}
\item In Line~\ref{line:stage1} of Algorithm \ref{alg:full}, when running
Algorithm \ref{alg:spectral}, we need to know $R=\rank(\E[\bY])$,
which can be estimated faithfully by examining the singular values
of the data matrix $\bY$.
\item In Line~\ref{line:stage2} of Algorithm \ref{alg:full}, when running
Algorithm \ref{alg:stage2}, we need to know $\{r_{k}\}_{1\le k\le K}$,
where $r_{k}=\rank(\Mkstar)$. Recall from~(\ref{eq:Sk_Mpik}) that
$\Uhat\Skhat\Vhat^{\top}\approx\Mkstar$; therefore, $r_{k}$ can
be estimated accurately by examining the singular values of $\Skhat$. 
\item In Line~\ref{line:stage3} of Algorithm \ref{alg:full}, when running
Algorithm \ref{alg:tsgd}, we need to know $p_{k}$ to set $\eta_{k}$
and $\alpha_{k}$ appropriately. It turns out that the outputs $\{\omega_{k}\}$
of the tensor method (see Algorithm \ref{alg:tensor}) satisfy $\omega_{k}\approx\pkhat,1\le k\le K$.
 
\end{itemize}

\bibliographystyle{alphaabbr}
\bibliography{refs_sense}

\end{document}